\documentclass[twocolumn]{article}

\usepackage[margin=1in]{geometry}
\usepackage{amsthm,amsmath,amsfonts,amssymb}
\usepackage[numbers]{natbib}
\usepackage{amssymb,amsthm}
\usepackage{booktabs}
\usepackage{graphicx}
\usepackage{algorithm}
\usepackage{algorithmic}
\usepackage{hyperref}
\usepackage{enumitem}
\theoremstyle{plain}

\newtheorem{theorem}{Theorem}[section]
\newtheorem{lemma}[theorem]{Lemma}

\newtheorem{proposition}[theorem]{Proposition}

\theoremstyle{definition}
\newtheorem*{assumption}{Assumption}
\newtheorem*{condition}{Condition}

\theoremstyle{remark}

\newtheorem{remark}[theorem]{Remark}

\newcommand{\reals}{\mathbb{R}}
\newcommand{\gauss}{\mathcal{N}}
\newcommand{\E}{\mathbb{E}}
\newcommand{\Var}{\mathrm{Var}}

\newcommand{\pf}{\#}                        
\newcommand{\target}{\pi}
\newcommand{\refm}{\rho}                    
\newcommand{\pz}{\pi_z}                     
\newcommand{\hpi}{\hat{\pi}}
\newcommand{\Herm}{H}
\newcommand{\dict}{G}
\newcommand{\Poi}{\hat{G}}
\newcommand{\op}[1]{\mathsf{#1}}

\newcommand{\QGI}{Q_{\mathrm{GI}}}

\newcommand{\he}{\mathrm{he}}
\newcommand{\schA}{\mathrm{A}}
\newcommand{\schB}{\mathrm{B}}
\newcommand{\schAa}{\mathrm{A}^{\mathrm{a}}}
\newcommand{\Fisher}{\mathcal F}
\title{Transformed Samplers with Variance Reduction}
\author{Siran Liu\thanks{Department of Statistical Science, University College London, UK. Email: \href{mailto:siran.liu.21@ucl.ac.uk}{siran.liu.21@ucl.ac.uk}}
\and Michalis K. Titsias\thanks{Google DeepMind, UK. Email: \href{mailto:mtitsias@gmail.com}{mtitsias@gmail.com}}
\and Petros Dellaportas\thanks{Department of Statistical Science, University College London, UK, and Department of Statistics, Athens University of Economics and Business, Greece. Email: \href{mailto:p.dellaportas@ucl.ac.uk}{p.dellaportas@ucl.ac.uk}}
}
\date{}
\begin{document}

\maketitle

\begin{abstract}
Markov chain Monte Carlo (MCMC) methods are the standard tool for
computing expectations under complex probability distributions. Control
variates reduce the variance of the resulting estimates, but a good
control variate requires solving the Poisson equation of the sampler,
which rarely admits a closed-form solution. Exact solutions are available
when the sampler's kernel has a known spectral decomposition on a simple
reference density. In our work, we extend these solutions to general targets
through a learned change of variables. A bijection, such as a normalizing
flow, is trained so that the target becomes close to the reference in a
latent space, and we show that Markov kernels and their Poisson solutions
are transformed by any bijection. Running such samplers in the latent
space then yields explicit control variates, and the estimator is consistent under mild tail conditions
on the map and target. Importance sampling (IS) from the flow is the limiting case of the same
construction and the control variates apply to it as well. Experiments on
synthetic targets and real posteriors compare the procedure against
state-of-the-art samplers and control variates.

\medskip
\noindent\textbf{Keywords:} Markov chain Monte Carlo; variance reduction; control variate; normalizing flow.

\end{abstract}

\section{INTRODUCTION}
\label{sec:intro}

Markov chain Monte Carlo (MCMC) estimates an expectation
$\target(F)=\int F\,d\target$ by the ergodic average
$\mu_n(F)=\frac1n\sum_{t=0}^{n-1}F(X_t)$ along a chain with
$\target$-invariant kernel $P$. Its accuracy is governed by the asymptotic
variance, and the classical route to reducing it is a control variate built
from the Poisson equation
\begin{equation}
(I-P)\,\Poi \;=\; F-\target(F),
\label{eq:poisson}
\end{equation}
whose solution yields a zero-variance estimator
\citep{henderson1997variance,meyn2009markov,dellaportas2012control}. The analytical
solution is almost never available: even for a Gaussian target and $F(x)=x$,
no closed form is known for random-walk Metropolis (RWM) or the
Metropolis-adjusted Langevin algorithm (MALA) \citep{alexopoulos2023variance}.
Closed-form solutions are known only in special cases
\citep{dellaportas2012control,liu2024variance}, most notably for the
Gaussian-invariant (GI) samplers of \citet{titsias2026gaussian}, whose
proposals are always accepted on a Gaussian target and for which
\eqref{eq:poisson} has exact solutions. On non-Gaussian targets these
solutions are reused through an approximation, which covers only a few test
functions and loses its effect as the target departs from a Gaussian.

A parallel line of work addresses non-Gaussian geometry through a change of
variables. Transport-map MCMC learns a bijection $T$ with
$T_\pf\gauss(0,I)\approx\target$, typically a normalizing flow
\citep{rezende2015variational,papamakarios2021normalizing}, and either runs
the chain in the latent space
\citep{parno2018transport,hoffman2019neutra,cabezas2023transport} or
proposes from the flow \citep{albergo2019flow,gabrie2022adaptive,
brofos2022adaptation,samsonov2022local,cabezas2024markovian}. In all of
these methods the map enters only through the sampler, and the estimator
remains the plain ergodic average.

This paper connects the two lines by showing that when a Markov kernel is
transformed by a bijection, its Poisson solutions are transformed with it
(Section~\ref{sec:transport}). Consequently, any sampler whose kernel has a
known spectral decomposition on a reference distribution, and whose
proposal has a computable conditional expectation, yields exact Poisson
solutions that can be carried to every target reached by a map
(Theorem~\ref{thm:frozen}) and used there as control variates
(Theorem~\ref{thm:validity}). The Gaussian reference with GI kernels and
Hermite polynomials is the cleanest instance and the focus of this paper
(Section~\ref{sec:reference}). Since the
construction depends on the kernel only through its spectrum, it also
covers elliptical slice sampling and, as the kernel with all eigenvalues
zero, importance sampling (IS) from the flow; a short pilot run tells which of
them to use on a given target (Section~\ref{sec:diagnostics}). References
other than the Gaussian, such as Gamma distributions with Laguerre
polynomials and spheres with spherical harmonics, are treated in
Appendix~\ref{app:general}.

A trained flow thus improves the estimator as well as the sampler. It
provides a control variate for any test function, with coefficients
computed from cheap reference draws rather than fitted to the chain,
without additional evaluations of the target and, for the random-walk
sampler, without gradients at sampling time. The control variates of
\citet{titsias2026gaussian} are the special case of an affine map, and the
estimator is consistent under mild tail conditions on the map, so a poor
map costs variance rather than correctness. In turn, the control variates
benefit from importance weights that stay
bounded in the tails, which
the usual training objectives do not enforce, and we add a penalty that
helps (Section~\ref{sec:algorithm}).

The paper is organized as follows. Section~\ref{sec:background} reviews
Poisson control variates, GI samplers and transformed targets, and
Section~\ref{sec:related} discusses related work. Section~\ref{sec:method}
shows how kernels and their Poisson solutions are transformed by a
bijection, identifies the reference kernels whose Poisson equation can be
solved exactly, and derives the resulting control variates together with
their validity (Section~\ref{sec:estimator}). Section~\ref{sec:practical}
covers diagnostics for choosing the estimator, the pruning of the
dictionary in high dimension (Section~\ref{sec:pruning}) and the choice of
step size (Section~\ref{sec:gamma}), and Section~\ref{sec:experiments}
compares the estimators on the same flow with Hamiltonian Monte Carlo (HMC) in
the latent space and zero-variance control variates, on
synthetic targets and real posteriors.

\section{BACKGROUND}
\label{sec:background}

\subsection{Poisson control variates}
\label{sec:bg-cv}

If $\Poi$ solves \eqref{eq:poisson}, then $F+P\Poi-\Poi\equiv\target(F)$,
where $(P\Poi)(x)=\int P(x,dy)\,\Poi(y)$, so adding $P\Poi-\Poi$ to the
ergodic average removes its variance \citep{dellaportas2012control}. In
practice $PG$ is intractable for a candidate $G$, since it integrates over
the accept--reject step. For a Metropolis--Hastings (MH) sampler we estimate
it with one sample, using the proposal $Y_t\sim q(\cdot|X_t)$ drawn at each
step and its acceptance probability $\alpha_t$. The estimator is
\begin{equation}
\mu_{n,G}(F)=\frac1n\sum_{t=0}^{n-1}
\big\{F(X_t)+\theta_1H_{1,t}+\theta_2H_{2,t}\big\},
\label{eq:twoterm}
\end{equation}
with $H_{1,t}=\alpha_t\{G(Y_t)-G(X_t)\}$ and
$H_{2,t}=G(Y_t)-\E_{q(\cdot|X_t)}[G]$. The first term is an unbiased
single-draw estimate of $(P-I)G(X_t)$ and the second is conditionally
centered, so both have stationary mean zero for every integrable $G$, and
the choice of $G$ affects only the variance. In general
$\theta:=(\theta_1,\theta_2)$ can be selected to further minimize the
asymptotic variance. Unless stated otherwise we use $\theta=(1,-1)$, and when
every proposal is accepted the summand equals $F+PG-G$ exactly. The
estimator therefore needs a $G$ close to $\Poi$ for which
$\E_{q(\cdot|x)}[G]$ is available in closed form.

\subsection{Gaussian-invariant samplers}
\label{sec:bg-gi}

Both requirements are met by the GI samplers of \citet{titsias2026gaussian}
on a Gaussian target. They use Gaussian proposals
\begin{equation}
q(y|x)=\gauss\!\big(y\,\big|\,x+\gamma A_x\nabla\log\target(x),\;
(2\gamma-\gamma^2)A_x\big),
\label{eq:gi}
\end{equation}
with step size $\gamma\in(0,2)$, where the preconditioner $A_x$ equals
$\Sigma$ whenever the target is $\gauss(\mu,\Sigma)$. GI-MALA takes
$A_x=\Sigma$ and GI-pMALA a state-dependent $A_x$, such as an approximate
negative inverse Hessian. The gradient-free GI-RWM replaces the mean by
$(1-\gamma)x+\gamma\mu$, which is the preconditioned Crank--Nicolson
proposal \citep{neal1999regression,cotter2013mcmc}. On a Gaussian target all
three coincide, every proposal is accepted, and in the standardized
variable $u=\Sigma^{-1/2}(x-\mu)$ the chain is the autoregression
$u'=\beta u+\sqrt{1-\beta^2}\,\xi$ with $\beta=1-\gamma$ and
$\xi\sim\gauss(0,I)$. Its eigenfunctions are the normalized Hermite polynomials
$\Herm_\kappa(u)=\prod_{j=1}^d\he_{\kappa_j}(u_j)/\sqrt{\kappa_j!}$, indexed by
$\kappa\in\mathbb N^d$ with degree $|\kappa|=\sum_j\kappa_j$, which form an
orthonormal basis of the square-integrable functions under $\gauss(0,I)$:
\begin{equation}
\E\{\Herm_\kappa(u')\,|\,u\}=\beta^{|\kappa|}\,\Herm_\kappa(u).
\label{eq:hermite-eig}
\end{equation}
Expanding $F$ in Hermite polynomials therefore solves \eqref{eq:poisson}
term by term, each degree-$k$ term being divided by $1-\beta^k$, and since
$q$ is Gaussian, $\E_{q(\cdot|x)}[G]$ is available in closed form for every
polynomial $G$. On a non-Gaussian target, however, the proposals are no
longer always accepted and the Hermite polynomials are no longer
eigenfunctions of the kernel. What makes these solutions exact is only that
the kernel has a known spectral decomposition on the Gaussian;
Section~\ref{sec:transport} shows that this property is all that the
construction requires.

\section{RELATED WORK}
\label{sec:related}

Other than the exact solutions used in this paper, the Poisson equation can
be solved approximately, through a weakly approximating chain
\citep{mijatovic2018poisson}, diffusion approximations
\citep{brosse2024diffusion}, empirical variance minimization
\citep{belomestny2020variance}, neural networks
\citep{belomestny2024theoretical} or coupled chains
\citep{douc2026solving,piponi2020hamiltonian}. These methods approximate
the Poisson solution of the given sampler on the target itself, whereas we
solve it exactly for a reference kernel and transform the solution through
the map. 

Beyond the Poisson equation, zero-variance control variates
\citep{assaraf1999zero,mira2013zero,papamarkou2014zero}, control
functionals \citep{oates2017control} and their regularized, semi-exact and
scalable versions \citep{south2023regularized,south2022semi,si2022scalable}
apply the Stein operator to basis functions and fit the coefficients on
the draws; see \citet{south2022postprocessing} for a review. The
martingale control variates of \citet{belomestny2022martingale} use
conditionally centered proposal terms, similar to the second term of
\eqref{eq:twoterm}, also fitted on the chain. Control variates for
IS are likewise fitted on the weighted draws
\citep{owen2000safe}, or built from orthogonal functions as the spherical
harmonics of \citet{leluc2024sliced}. In all of these the accuracy of the
coefficients is set by the number of draws, and the Stein approach needs
the score; our coefficients can be estimated from reference draws independent of
the chain, without a regression, and the gradient-free sampler needs no
score. Section~\ref{sec:experiments} compares both kinds of control
variates on the same flow.

On the sampler side, the preconditioned Crank--Nicolson proposal
\citep{neal1999regression,beskos2008mcmc,cotter2013mcmc}, elliptical slice
sampling \citep{murray2010elliptical}, Hamiltonian Monte Carlo with an
exact Gaussian integrator \citep{beskos2011hybrid} and the auxiliary
gradient-based samplers of \citet{titsias2018auxiliary} leave a Gaussian
prior invariant, chosen in advance, so they perform well only to the
extent that the target resembles it. We instead keep the standard
Gaussian fixed and train the map to bring the target close to it.
HMC on the latent space of the same flow
\citep{hoffman2019neutra} is the natural alternative there and is our main
baseline; flows also serve as proposals for independence sampling and
annealed transport
\citep{noe2019boltzmann,arbel2021annealed,matthews2022continual,midgley2023flow}.
Markov kernels with orthogonal polynomial eigenfunctions are classical
\citep{diaconis2008gibbs} and underlie the non-Gaussian references of
Appendix~\ref{app:general}.

\section{TRANSFORMED CONTROL VARIATES}
\label{sec:method}

Throughout this section, $\refm$ is a reference distribution on a space
$\mathcal Z$ and $T:\mathcal Z\to\mathcal X$ is a bijection. We write
$T_\pf\mu$ for the law of $T(\zeta)$ when $\zeta\sim\mu$, so that
$\hpi_T=T_\pf\refm$ is the distribution generated by the map and
$\pz=(T^{-1})_\pf\target$ is the pull-back of the target, with
$\log\pz(z)=\log\target(T(z))+\log|\det\nabla T(z)|$, so that $Z\sim\pz$
if and only if $T(Z)\sim\target$; $w=d\pz/d\refm$ is the importance weight
of the map, and $w\equiv1$ exactly when $\hpi_T=\target$. For a probability measure $\mu$, $L^2(\mu)$ denotes
the square-integrable functions with inner product
$\langle\cdot,\cdot\rangle_\mu$, and $L^2_0(\mu)$ those with $\mu(h)=0$.  A latent chain $\{Z_t\}$ targets $\pz$ with kernel $P_z$; at each step it
  draws a candidate $Z'_t\sim q(\cdot|Z_t)$ and then moves to $Z_{t+1}$, and
  we set $X_t=T(Z_t)$ and $Y_t=T(Z'_t)$. A test function $F$ on $\mathcal X$ is
evaluated in the latent space as $F\circ T$, that is, $z\mapsto F(T(z))$.

\subsection{Transformation of Kernels and Poisson Solutions}
\label{sec:transport}

We first show that a change of variables carries a sampler on the latent
space, together with its Poisson solutions, to a sampler on the target
space. This is what allows a solution computed where the Poisson equation is tractable to be carried to $\hat\pi_T$ exactly, and used on $\pi$ as an approximation whose quality is set by the map.

\begin{proposition}
\label{prop:equivariance}
\label{prop:poisson-transport}
Let $P_z$ be a $\pz$-invariant kernel.
\begin{enumerate}
\item[(i)] $\{X_t\}$ is a Markov chain with kernel
$P_x(x,B)=P_z\big(T^{-1}(x),T^{-1}(B)\big)$, which is $\target$-invariant,
and $\target$-reversible whenever $P_z$ is $\pz$-reversible.
\item[(ii)] If $G_z$ solves $G_z-P_zG_z=F\circ T-\pz(F\circ T)$ with
$\pz(G_z)=0$, then $G=G_z\circ T^{-1}$ solves $G-P_xG=F-\target(F)$ with
$\target(G)=0$.
\end{enumerate}
\end{proposition}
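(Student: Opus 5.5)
The plan is a direct change-of-variables argument built on one identity: for any bounded or integrable measurable $h$ on $\mathcal X$,
\begin{equation*}
(P_xh)(x)=\big(P_z(h\circ T)\big)(T^{-1}(x)).
\end{equation*}
This follows from the definition $P_x(x,B)=P_z(T^{-1}(x),T^{-1}(B))$. Indeed, $P_x(x,\cdot)$ is the image of $P_z(T^{-1}(x),\cdot)$ under $T$, so integrating $h$ against it equals integrating $h\circ T$ against $P_z(T^{-1}(x),\cdot)$. I would check this first for indicators and then extend by linearity and monotone convergence. I assume throughout that $T$ is bimeasurable, so that $T^{-1}(B)$ is measurable. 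With this identity in hand, both parts reduce to bookkeeping.

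For (i), I would first show the Markov property. Since $Z_t=T^{-1}(X_t)$, the two processes generate the same filtrations. Hence
\begin{equation*}
\Pr(X_{t+1}\in B\mid X_0,\dots,X_t)=\Pr(Z_{t+1}\in T^{-1}(B)\mid Z_0,\dots,Z_t)=P_z(Z_t,T^{-1}(B))=P_x(X_t,B).
\end{equation*}
For invariance, I would use that $\target=T_\pf\pz$. Then
\begin{equation*}
\int\target(dx)P_x(x,B)=\int\pz(dz)P_z(z,T^{-1}(B))=\pz(T^{-1}(B))=\target(B).
\end{equation*}
Reversibility works the same way. Pulling back $\target(dx)P_x(x,dy)$ on $A\times B$ gives $\int_{T^{-1}(A)}\pz(dz)P_z(z,T^{-1}(B))$. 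This is symmetric in $(A,B)$ exactly when the $\pz$-version is, so $\pz$-reversibility of $P_z$ transfers to $\target$-reversibility of $P_x$.

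For (ii), set $G=G_z\circ T^{-1}$, so that $G\circ T=G_z$. Applying the identity with $h=G$ gives $P_xG=(P_zG_z)\circ T^{-1}$. It also gives $\target(G)=\pz(G\circ T)=\pz(G_z)=0$ and $\target(F)=\pz(F\circ T)$. Composing the latent Poisson equation with $T^{-1}$ then yields
\begin{equation*}
G-P_xG=F-\pz(F\circ T)=F-\target(F).
\end{equation*}

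The main obstacle is integrability, not algebra. $P_zG_z$ must be well defined, which requires $G_z\in L^1(P_z(z,\cdot))$ for $z$ off a $\pz$-null set. I would state that the Poisson equation is understood in this sense, for instance with $G_z\in L^1(\pz)$. The change of variables then transfers these integrability conditions verbatim, because $T$ is a measure isomorphism between $(\mathcal Z,\pz)$ and $(\mathcal X,\target)$. All the equalities therefore hold pointwise wherever the latent ones do, and in particular $\target$-almost everywhere.
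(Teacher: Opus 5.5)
Your proposal is correct and follows the paper's proof essentially step for step. Your identity $(P_xh)(x)=\big(P_z(h\circ T)\big)(T^{-1}(x))$ is the intertwining relation $\op U P_x=P_z\op U$ that the paper derives from the push-forward, and the remaining steps match the paper's: equal filtrations for the Markov property, the image of $\pz(dz)P_z(z,dz')$ under $T\times T$ for invariance and reversibility, and composition with $T^{-1}$ for part (ii). The only detail to add is that $x\mapsto P_z(T^{-1}(x),T^{-1}(B))$ is measurable, so that $P_x$ is a Markov kernel; this follows at once from the bimeasurability of $T$ that you already assume.
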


The proof is given in Appendix~\ref{pf:app:transform}. Part~(i) is the
standard fact underlying transport MCMC
\citep{parno2018transport,hoffman2019neutra}, and part~(ii) shows that
Poisson solutions transform in the same way. The two chains are thus the same chain in different coordinates, so their
Poisson equations coincide, and properties such as irreducibility and
geometric ergodicity transfer between them. We state results in $x$-space,
where the target lives, and implement everything in $z$-space, where
$T^{-1}$ is never needed. In practice the control variate is evaluated in
the latent space as $G_z(Z_t)$ and is built from $F\circ T$, so only the
forward map is needed. Changing coordinates alone does not make the Poisson
equation solvable, but the latent space admits a reference: a
$\refm$-reversible kernel $Q$ whose Poisson equation can be solved exactly
and which coincides with the latent chain when the map is exact
(Section~\ref{sec:reference}). We solve the equation for $Q$ and transform
the solution to the target. 

\subsection{Reference Kernels with a Spectral Decomposition}
\label{sec:reference}

The construction of the reference pair $(\refm,Q)$ requires three
properties of the latent sampler.
\begin{condition}
\leavevmode
\label{cond:spectral}
\begin{enumerate}
\item[(C1)] When $w\equiv1$, the kernel $P_z$ is a $\refm$-reversible kernel
$Q$ with $Q\phi_j=\lambda_j\phi_j$ for an orthonormal basis
$\{\phi_j\}_{j\ge0}$ of $L^2(\refm)$, where $\phi_0=1$,
$\bar\lambda:=\sup_{j\ge1}|\lambda_j|<1$.
\item[(C2)] $\E_{q(\cdot|z)}[G]$ can be computed for every $G$ in the span of
$\{\phi_j\}_{j\in\mathcal K}$, for a finite $\mathcal K\subseteq\{j\ge1\}$.
\item[(C3)] When $w\equiv1$, $Z_{t+1}=Z'_t$.
\end{enumerate}
\end{condition}

(C1) makes the Poisson equation solvable at the reference, (C2) makes the
control variates computable, and (C3) ensures that the solution matches the
kernel actually run. For samplers that propose from $Q$ itself, (C2) holds
automatically, because the conditional expectation of each basis function
is $\lambda_j\phi_j$. Transforming $Q$ by Proposition~\ref{prop:equivariance}
gives a kernel $Q_T$ on $\mathcal X$, which is the kernel the sampler runs
when $\target=\hpi_T$.

\begin{theorem}
\label{thm:frozen}
Under (C1), for every bijection $T$,
\begin{enumerate}
\item[(i)] $Q_T$ is $\hpi_T$-reversible, and positive semidefinite on
$L^2(\hpi_T)$ if and only if $\lambda_j\ge0$ for all $j$;
\item[(ii)] the functions $\dict_j=\phi_j\circ T^{-1}$ form an orthonormal
basis of $L^2(\hpi_T)$ with $Q_T\dict_j=\lambda_j\dict_j$; in particular,
the spectrum of $Q_T$ does not depend on $T$, and its absolute spectral gap
is $1-\bar\lambda$;
\item[(iii)] for $F\in L^2_0(\hpi_T)$ with $F\circ T=\sum_{j\ge1}c_j\phi_j$,
the centered solution of $(I-Q_T)\Poi_F=F$ is
\begin{equation}
\Poi_F=\sum_{j\ge1}\frac{c_j}{1-\lambda_j}\,\dict_j,
\qquad \|\Poi_F\|\le\frac{\|F\|}{1-\bar\lambda}.
\label{eq:transported-poisson}
\end{equation}
\end{enumerate}
\end{theorem}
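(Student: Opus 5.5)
The whole argument rests on one observation: composition with $T$ is a unitary isomorphism between $L^2(\hpi_T)$ and $L^2(\refm)$ that intertwines $Q_T$ with $Q$. Define $U:L^2(\hpi_T)\to L^2(\refm)$ by $Uh=h\circ T$. Since $\hpi_T=T_\pf\refm$, the change of variables formula gives $\int h\,d\hpi_T=\int h\circ T\,d\refm$. Hence $U$ is an isometry, and it is onto with inverse $g\mapsto g\circ T^{-1}$.

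Next, apply Proposition~\ref{prop:equivariance}(i) with $\refm$ in the role of $\pz$ and $\hpi_T$ in the role of $\target$. This identifies $Q_T(x,B)=Q(T^{-1}(x),T^{-1}(B))$, so $(Q_Th)\circ T=Q(h\circ T)$, that is, $UQ_T=QU$ and $Q_T=U^{-1}QU$. Reversibility of $Q_T$ with respect to $\hpi_T$ also comes directly from that proposition, given that $Q$ is $\refm$-reversible under (C1). Equivalently, self-adjointness of $Q_T$ follows from self-adjointness of $Q$ and unitarity of $U$.

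\emph{Part (ii).} Because $U^{-1}$ is unitary, $\dict_j=U^{-1}\phi_j$ form an orthonormal basis of $L^2(\hpi_T)$. Moreover $Q_T\dict_j=U^{-1}Q\phi_j=\lambda_j\dict_j$. Unitarily equivalent operators share their spectrum, so the spectrum of $Q_T$ does not depend on $T$. On $L^2_0(\hpi_T)$, which is the closed span of $\{\dict_j\}_{j\ge1}$ because $\dict_0=1$, the norm of $Q_T$ is $\sup_{j\ge1}|\lambda_j|=\bar\lambda$. The absolute spectral gap is therefore $1-\bar\lambda$.

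\emph{Part (i), positive semidefiniteness.} For $h=\sum_j a_j\dict_j$ we have $\langle h,Q_Th\rangle_{\hpi_T}=\sum_j\lambda_j a_j^2$. This is nonnegative for every $h$ if and only if all $\lambda_j\ge0$; for the converse, test with $h=\dict_j$.

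\emph{Part (iii).} Take $F\in L^2_0(\hpi_T)$. Then $UF=F\circ T\in L^2_0(\refm)$, whose expansion has $c_0=0$ and $\sum c_j^2=\|F\|^2$. Since $|\lambda_j|\le\bar\lambda<1$, we have $|1-\lambda_j|\ge1-\bar\lambda$. So the coefficients $c_j/(1-\lambda_j)$ are square-summable, the series defining $\Poi_F$ converges in $L^2(\hpi_T)$, and $\|\Poi_F\|^2\le\sum c_j^2/(1-\bar\lambda)^2$, which is the stated bound.

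To check that $\Poi_F$ solves the equation, use that $I-Q_T$ is bounded. Applying it termwise gives $(I-Q_T)\Poi_F=\sum_{j\ge1}c_j\dict_j=F$, and $\hpi_T(\Poi_F)=\langle\Poi_F,\dict_0\rangle=0$. For uniqueness among centered solutions, note that $I-Q_T$ restricted to $L^2_0$ satisfies $\|(I-Q_T)h\|\ge(1-\bar\lambda)\|h\|$, because it is diagonal in the basis $\{\dict_j\}_{j\ge1}$. Hence it is injective there.

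I expect no serious obstacle. The only step needing care is the identity $UQ_T=QU$ as an operator statement on $L^2$, rather than only on indicators. I would prove it for indicator functions of sets from the kernel formula, extend by linearity and monotone convergence to nonnegative measurable functions, and then use the isometry to conclude on all of $L^2$.
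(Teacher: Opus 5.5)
Your proposal is correct and follows essentially the same route as the paper: conjugate by the unitary composition operator $U$, transfer the eigenbasis, compute the quadratic form and the norm on $L^2_0$ diagonally, and invert $I-Q_T$ term by term. The only cosmetic difference is that you obtain uniqueness from the lower bound $\|(I-Q_T)h\|\ge(1-\bar\lambda)\|h\|$ on $L^2_0$, whereas the paper compares coefficients directly.
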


The proof is in Appendix~\ref{pf:app:frozen}. The map decides where the
kernel lives, while its mixing rate and its Poisson solutions are those of
$Q$ for every $T$. For a target $\target\neq\hpi_T$ the chain has kernel
$P_x\neq Q_T$, and two estimators are available for any reference. The
simplest is self-normalized importance sampling (SNIS) from the flow,
which weights draws from $\refm$ by $w$; it is exact when $w\equiv1$, and
otherwise its variance grows with that of the weights. The simplest
latent sampler instead proposes from $Q$ and accepts with probability
$\min\{1,w(z')/w(z)\}$. Since $Q$ is $\refm$-reversible, this is the MH
kernel for $\pz=w\refm$, and when $w\equiv1$ every proposal is accepted,
so it satisfies (C1)--(C3).

For the Gaussian reference $\refm=\gauss(0,I_d)$,
\begin{equation}
\log w(z)=\log\target(T(z))+\log|\det\nabla T(z)|+\tfrac12\|z\|^2
\label{eq:residual}
\end{equation}
up to a constant, so the pull-back is a latent Gaussian model with prior
$\refm$ and log-likelihood $\log w$, and the GI proposals of
Section~\ref{sec:bg-gi} with $(\mu,\Sigma)=(0,I_d)$ apply to it directly;
we call the resulting samplers transformed GI (TGI) samplers. On $\refm$
the three proposals coincide and reduce to
\begin{equation}
z'=\beta z+\sqrt{1-\beta^2}\;\xi,\qquad \xi\sim\gauss(0,I_d),
\label{eq:cn}
\end{equation}
which we denote by $\QGI$. It is $\refm$-reversible, so every proposal is
accepted and (C3) holds, and by \eqref{eq:hermite-eig} it satisfies (C1)
with $\phi_\kappa=\Herm_\kappa$ and $\lambda_\kappa=\beta^{|\kappa|}$. For
$\gamma\in(0,1]$ the transformed kernel is therefore positive semidefinite
with spectral gap $\gamma$, and
$\Poi_F=\sum_{\kappa\neq0}c_\kappa\dict_\kappa/(1-\beta^{|\kappa|})$ with
$\dict_\kappa=\Herm_\kappa\circ T^{-1}$. With an affine map, the
$\dict_\kappa$ are Hermite polynomials in $x$ and the construction reduces
to that of \citet{titsias2026gaussian}. A nonlinear map bends them to
follow the target, so that a low-degree truncation can represent Poisson
solutions that polynomials in $x$ approximate poorly.

TGI-RWM proposes from $\QGI$ for every map and target, so the target
enters only through the ratio $w(z')/w(z)$ and no gradients are needed.
At $\gamma=1$ it is the independence sampler with the flow as proposal:
$\lambda_\kappa=0$ for $\kappa\neq0$ and $\Poi_F=F-\target(F)$, the
solution that \citet{liu2024variance} use when the proposal is close to
the target, and without the accept--reject step it is SNIS. TGI-MALA
adds the drift $\gamma\nabla\log w(z)$, and a preconditioned variant is
given in Appendix~\ref{app:ref-gaussian}; for both the candidate is Gaussian
and the conditional expectations required by (C2) are in closed form
(Lemma~\ref{lem:condmean}). Transport elliptical slice sampling (TESS;
\citealp{cabezas2023transport}) runs elliptical slice sampling with prior
$\refm$ and likelihood $w$; it has no accept--reject step, $w$ enters
through the slice, and it satisfies (C1)--(C3) with the same
eigenfunctions and the eigenvalues $\E[\cos^{|\kappa|}\vartheta]$,
$\vartheta\sim\mathrm{U}(0,2\pi)$ (Appendix~\ref{app:ref-ess}). A
mismatch between $\hpi_T$ and $\target$ affects SNIS through the
weights, TGI-RWM only through rejections, TESS through the shrinking of
the slice, and TGI-MALA also through its drift.

\subsection{Control Variates and Estimator}
\label{sec:estimator}

With $T$ frozen, each step of a latent MH sampler provides the state $Z_t$,
the candidate $Z'_t$ and the acceptance probability $\alpha_t$. The
estimator \eqref{eq:twoterm} is therefore applied in the latent space, with
$F\circ T$ in place of $F$ and $G=\sum_{j\in\mathcal K}\eta_j\phi_j$. We call
$\{\phi_j\}_{j\in\mathcal K}$ the \emph{dictionary} of the control variate.
For the Gaussian reference, the dictionary consists of Hermite polynomials
$\Herm_\kappa$ indexed by a finite set $\mathcal K$ of multi-indices, by
default $\{\kappa:1\le|\kappa|\le K\}$, and $G$ costs a few Hermite
evaluations per step. TESS has no accept--reject step and uses
$G(Z_{t+1})-G(Z_t)$ in place of the first correction, with the initial
point on the ellipse as $Z'_t$ (Appendix~\ref{app:ref-ess}). For SNIS, which is a ratio of two averages, the same dictionary corrects
both,
\begin{equation}
\hat\mu^{\mathrm{IS}}=\frac{\sum_i w_iF(T(\zeta_i))-\sum_i\sum_{j\in\mathcal K}a_j\phi_j(\zeta_i)}
{\sum_i w_i-\sum_i\sum_{j\in\mathcal K}b_j\phi_j(\zeta_i)},\qquad
\zeta_i\sim\refm,
\label{eq:snis-cv}
\end{equation}
with $a_j$ and $b_j$ the coefficients of $w\,(F\circ T)$ and of $w$ on
$\phi_j$, estimated on the same draws. Both corrections have mean zero
under $\refm$, so the ratio remains consistent for every $a$ and $b$;
when $w\equiv1$ the $b_j$ vanish and the $a_j$ reduce to $c_j$. 

The plug-in choice $\eta_j=c_j/(1-\lambda_j)$, with
$c_j=\E_\refm[(F\circ T)\,\phi_j]$, makes $G$ the truncation of
\eqref{eq:transported-poisson} in latent coordinates and requires
$F\circ T\in L^2(\refm)$. The coefficients are
Monte Carlo averages over draws from $\refm$ that require evaluations of
$T$ but not of $\target$, so they are computed once before the run.
Alternatively, $\eta$ can be fitted by minimizing the empirical variance of
the summand on burn-in samples and then frozen, which adapts the control
variate to the actual target at the price of a short pilot run.

\begin{assumption}
\label{ass:main}
\leavevmode
\begin{enumerate}
\item[(A1)] $T:\reals^d\to\mathcal{X}$ is a $C^1$ diffeomorphism.
\item[(A2)] $m\le w(z)\le M$ for all $z$ and some $0<m\le M<\infty$.
\item[(A2$^\ast$)] $w$ is continuous and there are $\alpha,R>0$ such that
$w(y)\ge w(x)$ whenever $\|x\|>R$ and $\|y\|\le\alpha\|x\|$.
\item[(A3)] $F\circ T\in L^{2+\epsilon}(\pz)$ and $\phi_j\in L^{2+\epsilon}(\pz)$
for $j\in\mathcal K$, for some $\epsilon>0$.
\end{enumerate}
\end{assumption}

(A2) requires the map to match the tails of $\target$ up to constant
factors, which a coupling flow cannot do unless $\target$ itself has
Gaussian tails (Section~\ref{sec:algorithm}), and (A2$^\ast$), used for
TESS \citep{natarovskii2021geometric}, that $w$ eventually decreases
along rays. We therefore also allow the latent chains to replace each
step, with probability $\delta\in[0,1)$, by an independence MH step with
proposal $t_\nu(0,I_d)$, as in \citet{brofos2022adaptation} but mixing
kernels rather than distributions; the estimator averages the summands
of the other steps only. With $\delta>0$, the bounds on $w$ can be
replaced by a condition on the tails of $\target$ alone:
\begin{assumption}
(A2$'$) $T$ is a coupling flow with bounded log-scales and shifts,
and $\target$ is bounded with $\target(x)\le C(1+\|x\|)^{-(d+\nu)}$,
with $\nu > K(2+\epsilon)$.
\end{assumption}

\begin{theorem}\label{thm:validity}
Assume \textup{(A1)} and \textup{(A3)}, with $\eta$ fixed before the run
for the chains. Then $\hat\mu_n\to\target(F)$ almost surely and
$\sqrt n\{\hat\mu_n-\target(F)\}\to\gauss(0,\sigma^2_\infty)$ with
$\sigma^2_\infty<\infty$ for
\begin{enumerate}
\item SNIS, if $\E_\refm[w^2\{1+(F\circ T)^2\}]<\infty$;
\item TGI-RWM, under \textup{(A2)} with $\delta=0$, or \textup{(A2$'$)}
with $\delta\in(0,1)$;
\item TESS, under \textup{(A2)} and \textup{(A2$^\ast$)} with $\delta=0$,
or \textup{(A2$'$)} with $\delta\in(0,1)$.
\end{enumerate}
\end{theorem}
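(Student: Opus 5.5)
The plan is to treat the three cases separately. For the chains I will reduce everything to a Markov chain CLT for an additive functional. By Proposition~\ref{prop:equivariance}(i) the latent chain $\{Z_t\}$ and $\{X_t\}=\{T(Z_t)\}$ are the same chain in different coordinates, and by (A1) $T$ is a homeomorphism. So it suffices to work in $z$-space with the $\pz$-invariant kernel $P_z$ and the summand
\[
\Psi(Z_t,Z'_t,U_t)=F(T(Z_t))+\theta_1\alpha_t\{G(Z'_t)-G(Z_t)\}+\theta_2\{G(Z'_t)-\E_{q(\cdot|Z_t)}[G]\}.
\]
This is a function of the extended chain $\{(Z_t,Z'_t,U_t)\}$, where $U_t$ is the acceptance uniform or the slice randomness. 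The extended chain inherits the ergodicity properties of $\{Z_t\}$: given $Z_t$, the pair $(Z'_t,U_t)$ is drawn independently of the past. Because $\eta$ is fixed before the run, $G$ is a deterministic finite combination of $\phi_j$, $j\in\mathcal K$. By the computation in Section~\ref{sec:bg-cv}, both corrections then have stationary mean zero, so $\E_{\pz}[\Psi]=\pz(F\circ T)=\target(F)$.

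For the law of large numbers I only need $\Psi\in L^1$ under stationarity together with $\pz$-irreducibility and aperiodicity of $P_z$. The Birkhoff/Harris ergodic theorem then gives almost sure convergence from every starting point. For the CLT I will invoke geometric ergodicity plus a $2+\epsilon$ moment condition (Roberts--Rosenthal / Jones: for reversible or geometrically ergodic chains, $h\in L^{2+\epsilon}(\pz)$ implies a $\sqrt n$-CLT with finite $\sigma^2_\infty$). The moment condition comes from (A3): $F\circ T$ and each $\phi_j\in L^{2+\epsilon}(\pz)$, and $\alpha_t\le1$. The term $\E_{q(\cdot|z)}[G]$ is handled by Jensen, since $|\E_q G|^{2+\epsilon}\le \E_q|G|^{2+\epsilon}$, and stationarity of $Z'_t$-marginals holds up to the bound below. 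For TGI-RWM the candidate law $\int\pz(dz)q(dz'|z)$ is $\le M/m$ times $\pz$ under (A2), because $q$ is $\refm$-reversible. For the $\delta>0$ mixtures it is dominated by a mixture with the $t_\nu$ density, which is controlled by (A2$'$).

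The main work is geometric ergodicity.
\begin{itemize}
\item \emph{SNIS.} The draws are i.i.d., so I apply the strong law and the multivariate CLT to the numerator and denominator of \eqref{eq:snis-cv}, followed by the delta method. The condition $\E_\refm[w^2(1+(F\circ T)^2)]<\infty$ gives finite second moments, and the $\phi_j$ have finite second moments under $\refm$ by orthonormality.
\item \emph{TGI-RWM under (A2).} I use a minorization. The pCN kernel $Q_{\mathrm{GI}}$ has a density bounded below on compacts, and the acceptance ratio is $\ge m/M$, so $P_z\ge (m/M)\QGI$ off the diagonal. Combined with the spectral gap $\gamma$ of $\QGI$ from Theorem~\ref{thm:frozen}, the comparison of Dirichlet forms ($\pz$ and $\refm$ are comparable within the factors $m,M$) gives $P_z$ a positive $L^2(\pz)$ spectral gap. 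For reversible chains this is equivalent to geometric ergodicity.
\item \emph{TESS under (A2) and (A2$^\ast$).} I invoke \citet{natarovskii2021geometric} directly.
\item \emph{The $\delta>0$ variants.} The mixture kernel satisfies $P\ge\delta P_{\mathrm{ind}}$, where $P_{\mathrm{ind}}$ is the independence sampler with $t_\nu$ proposal. It is uniformly ergodic when $\pz/t_\nu$ is bounded, and this is what I will extract from (A2$'$): bounded log-scales make $|\det\nabla T|$ bounded and $\|T(z)\|$ comparable to $\|z\|$, so $\pz(z)\le C'(1+\|z\|)^{-(d+\nu)}\lesssim t_\nu(z)$. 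A uniform Doeblin minorization of the independence step then transfers to the mixture. The condition $\nu>K(2+\epsilon)$ is used to ensure $\phi_j\in L^{2+\epsilon}$ under the $t_\nu$-dominated candidate laws, since Hermite polynomials of degree $\le K$ grow like $\|z\|^K$.
\end{itemize}
Averaging only the summands of the non-independence steps is a function of the extended chain that includes the step-type indicator, so the same CLT applies.

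I expect the main obstacle to be the TGI-RWM spectral-gap comparison under (A2). The acceptance probabilities enter non-uniformly, and turning $\min\{1,w(z')/w(z)\}\ge m/M$ into a Dirichlet-form inequality for non-Gaussian $\pz$ requires care. If the comparison fails, my first fallback is a drift condition $V(z)=e^{s\|z\|}$ using the autoregressive contraction $\beta<1$ of \eqref{eq:cn} together with the lower bound on acceptance.
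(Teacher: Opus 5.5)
Your skeleton matches the paper's proof. It shares the augmented chain carrying the candidate, the corrections with stationary mean zero, and the $2+\epsilon$ moments obtained from (A3) plus domination of the candidate law: $\int\pz(dz)\QGI(z,\cdot)\le M\refm\le(M/m)\pz$ under (A2), and polynomial growth of $G$ against $\pz\le M_\nu t_\nu$ under (A2$'$). It also uses Natarovskii et al.\ for TESS and the Doeblin bound $P\ge\delta P_\nu\ge(\delta/M_\nu)\pz$ when $\delta>0$.

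The real difference is TGI-RWM under (A2). The paper proves a Foster--Lyapunov drift with $V(z)=1+\|z\|^2$ for $\gamma\in(0,1)$ through a perturbation lemma, and uses the minorization $P_z(z,\cdot)\ge(m/M)\refm$ at $\gamma=1$. You compare Dirichlet forms instead. The step you flag as the main obstacle is in fact immediate. Off the diagonal, $\pz(dx)P_z(x,dy)=\refm(dx)\QGI(x,dy)\min\{w(x),w(y)\}\ge m\,\refm(dx)\QGI(x,dy)$, and $\Var_{\pz}(f)\le M\Var_\refm(f)$, so $\mathrm{Gap}(P_z)\ge(m/M)\gamma$.

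Two things are missing from this step.
\begin{itemize}
\item \emph{The bottom of the spectrum.} For a reversible chain, geometric ergodicity is equivalent to $\|P_z\|_{L^2_0(\pz)}<1$, not to a positive right gap. Apply the same inequality to $\langle f,(I+P_z)f\rangle_{\pz}=\tfrac12\iint\{f(x)+f(y)\}^2\pz(dx)P_z(x,dy)$. This gives $\langle f,(I+P_z)f\rangle_{\pz}\ge(m/M)(1-|\beta|)\|f\|^2_{\pz}$, which closes the gap.
\item \emph{Starting points.} The $L^2$ route gives geometric ergodicity only from $\pz$-almost every start. You therefore need Harris recurrence, which holds because the proposal density is positive. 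It is what gives the limits from every initial point and lets you apply the $2+\epsilon$ CLT to the non-reversible augmented chain.
\end{itemize}

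Your route is shorter and gives an explicit gap. It also covers $\gamma=1$, and even $\gamma\in(1,2)$, in one stroke. The paper's drift argument gives $V$-uniform ergodicity from every start. It also extends to TGI-MALA and TGI-pMALA, whose proposals are not $\refm$-reversible, so no direct comparison with $\QGI$ is available for them.

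Two smaller points also need filling in.
\begin{itemize}
\item \emph{Fitted SNIS coefficients.} For SNIS the coefficients $a,b$ in \eqref{eq:snis-cv} are fitted on the same draws, so the summands are not i.i.d.\ and the strong law and CLT do not apply as you state them. You need $\hat a_n\to a^*$ and $\hat b_n\to b^*$ almost surely, which uses $\E_\refm|w(F\circ T)\phi_j|<\infty$ by Cauchy--Schwarz. You also need $(\hat a_n-a^*)^\top\bar\phi_n=o_p(n^{-1/2})$, which holds because $\sqrt n\,\bar\phi_n=O_p(1)$, and similarly for $b$. This is how the paper reduces to the fixed-coefficient case at $(a^*,b^*)$.
\item \emph{The ratio form when $\delta>0$.} The estimator is $\sum_tM_tS_t/\sum_tM_t$, so you should verify $\E[S_t\mid B_t=0]=\target(F)$ under stationarity. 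This holds because $B_t$ is independent of $Z_t$ and $P_0$ is $\pz$-invariant. Then conclude with the ergodic theorem for the denominator, which tends to $1-\delta$, and Slutsky's lemma.
\end{itemize}
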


The chains are geometrically ergodic in the first case and uniformly ergodic in the second. The proofs are given in Appendix~\ref{pf:app:validity}, the same appendix treats TGI-MALA and TGI-pMALA. SNIS needs the weights to have a finite second moment, which a flow
lighter-tailed than the target violates. For the two chains, the independence steps restore
uniform ergodicity in this case. The tail penalty of
Section~\ref{sec:algorithm} is the practical substitute for (A2).

\subsection{Learning the Map}
\label{sec:algorithm}

We use existing methods to obtain $T$, with one addition. An exact map
exists when $\target$ has a positive density on $\mathbb R^d$, given by the
triangular Knothe--Rosenblatt rearrangement
\citep{rosenblatt1952remarks,knothe1957contributions}, which triangular
transport maps \citep{parno2018transport,spantini2018couplings} can
represent directly, and normalizing flows
\citep{dinh2017density,huang2018neural,jaini2019sum,papamakarios2021normalizing}
are flexible enough to approximate such targets; affine coupling flows,
for instance, are universal distributional approximators
\citep{teshima2020coupling}. Constrained parameters are first mapped to
$\mathbb R^d$ by a standard reparametrization, and whitening the latent space
with a pilot estimate of the covariance provides a good initialization.

The choice of objective depends on the target. The reverse divergence
$\mathrm{KL}(\hpi_T\,\|\,\target)$ needs only evaluations of $\target$ and
its gradient \citep{rezende2015variational,hoffman2019neutra} and suits
unimodal posteriors. Maximum likelihood on draws produced by the sampler itself, alternating
fitting and sampling as in adaptive flow-based samplers
\citep{parno2018transport,gabrie2022adaptive,brofos2022adaptation,
samsonov2022local,cabezas2024markovian}, suits hierarchical models, where
it can be added to the reverse divergence during training. On multimodal
targets, anchoring part of each minibatch to pilot draws keeps every mode
represented in the map, although samplers run in the latent space remain
less robust to multimodality than flow-based independence proposals
\citep{grenioux2023sampling}. Since continual adaptation
requires the conditions of adaptive MCMC
\citep{andrieu2006ergodicity,roberts2007coupling,andrieu2008tutorial}, we
train the map, freeze it, and then run the evaluation chains, so the
results of Section~\ref{sec:estimator} apply. 

A coupling flow with bounded
scales is asymptotically affine and has Gaussian tails \citep{jaini2020tails}, so for a target
with heavier tails $w$ grows along a few latent directions. Training objectives rarely see these
regions, thus a chain that enters one tends to stay, which dominates the
variance of the control variates. Existing remedies mix the flow with a fixed distribution, which bounds the
weight by construction \citep{brofos2022adaptation}, use mass-covering
objectives that favor weights with finite moments
\citep{li2016renyi,dieng2017variational,wang2018variational}, or add tail
layers \citep{jaini2020tails,hickling2025flexible}, which are effective for
power-law tails along coordinate directions. Instead, we add to the loss the one-sided penalty
\[
\E\big[\max\{0,\log w(\zeta)-\bar\ell-c\}^2\big],
\]
evaluated at points sampled directly in the tails, where $\zeta$ has a uniform
direction and a radius uniform on $[\sqrt d+1,\sqrt d+5]$, from the edge of
the Gaussian shell into the far tail, and $\bar\ell$ is the mean of
$\log w$ over the training batch. It acts only on weights that are too
large, which make chains stick, and vanishes at the exact map. This method
is a soft counterpart of the upper bound in (A2). The penalty is needed only when the flow is lighter-tailed than the target, and Appendix~\ref{app:training-results} gives a rule for when to use it.

\section{PRACTICAL CONSIDERATIONS}
\label{sec:practical}

\subsection{Choosing the Estimator}
\label{sec:diagnostics}

A mismatch between $\hpi_T$ and $\target$ affects each estimator
differently. SNIS suffers through the weights. Where the flow puts too
little mass, the rare draws that land there carry large weights and
dominate the error. TGI-RWM suffers through rejections. With $\gamma$
near one, a rejection means staying at a state of large weight, often
for many steps. TESS moves at every iteration, but each iteration costs
several evaluations. TGI-MALA adds a gradient per step, and its drift
corrects the mean of the pull-back; in $x$-space the map acts as the
position-dependent preconditioner $\nabla T\,\nabla T^\top$
(Table~\ref{tab:variants} in the Appendix). HMC on
the same flow \citep{hoffman2019neutra} is the stronger gradient-based
alternative, but our control variates do not apply to it.

Three diagnostics computed from draws of the flow decide between them.
The importance sampling effective sample size
$\mathrm{ESS}_{IS}=(\sum_i w(\zeta_i))^2/\sum_i w(\zeta_i)^2$ measures
the bulk of the transport. It is blind to the tails and to a missing mode, so we also record the
largest normalized weight relative to its uniform value $1/N$,
$\max w\cdot N=N\max_i w(\zeta_i)/\sum_i w(\zeta_i)$. When $\max w\cdot N$
stays below a threshold we use SNIS. Beyond
that a latent chain is needed. We use TESS if a pilot chain shows a seam
between modes or heavy tails, a gradient-based chain if the mismatch is
spread over many coordinates of a unimodal pull-back, and TGI-RWM as the
gradient-free default. The pilot chain also gives the Fisher divergence
$\mathcal F=\E_{\pz}\|\nabla\log w\|^2$, and a large $\mathcal F$ calls
for a better map rather than a different estimator.

\subsection{High Dimensions: Dictionary Growth and Pruning}
\label{sec:pruning}

The full dictionary $\{\Herm_\kappa:1\le|\kappa|\le K\}$ has
$\binom{d+K}{K}-1\sim d^K/K!$ elements, most of them cross terms: indices
involving a single coordinate number only $dK$, whereas indices involving
$k$ coordinates number $O(d^k)$. A large index set is costly in two ways.
Each evaluation of $G$ costs $O(|\mathcal K|)$, and the coefficients are
estimated from a shared budget of $N$ reference draws, so the aggregate
estimation noise grows with $|\mathcal K|$. Although the control variate has
mean zero for any coefficients, this noise can increase the variance, so an
oversized dictionary degrades the estimator silently. Conversely, a good
transport compresses the dictionary: if $T$ is affine and $F$ is linear,
$F\circ T$ is exactly of first order. The following result quantifies the
trade-off on an exact map.

\begin{proposition}
\label{prop:pruning}
Assume (C1)-(C3) and (A3), and suppose that $w\equiv1$. Let $G=\sum_{j\in\mathcal K}\eta_j\phi_j$ with arbitrary
coefficients, and write $v_j=(1+\lambda_j)/(1-\lambda_j)$. The asymptotic variance of the estimator is
\begin{equation}
\sigma^2_\infty
=\sum_{j\ge1,\,j\notin\mathcal K}v_jc_j^2
+\sum_{j\in\mathcal K}(1-\lambda_j^2)\Big(\eta_j-\frac{c_j}{1-\lambda_j}\Big)^2 .
\label{eq:separable}
\end{equation}
If $\eta_j=\hat c_j/(1-\lambda_j)$, where $\hat c_j$ is an unbiased estimate
of $c_j$ from $N$ reference draws with variance $s_j^2/N$, then
\begin{equation}
\E\big[\sigma^2_\infty\big]
=\sum_{j\ge1,\,j\notin\mathcal K}v_jc_j^2+\sum_{j\in\mathcal K}v_j\frac{s_j^2}{N}.
\label{eq:expected}
\end{equation}
\end{proposition}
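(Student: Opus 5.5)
Under $w\equiv1$ the latent chain is exactly the reference kernel, and with $\theta=(1,-1)$ the summand collapses to a deterministic function of the current state. The plan is to identify that function, expand it in the eigenbasis $\{\phi_j\}$, and read off the asymptotic variance from the spectral formula for reversible chains. By (C1) and (C3), when $w\equiv1$ we have $P_z=Q$ and $Z_{t+1}=Z'_t$, and every proposal is accepted, so $\alpha_t=1$. The latent summand of \eqref{eq:twoterm}, with $F\circ T$ in place of $F$, is then
\[
F(T(Z_t))+\{G(Z'_t)-G(Z_t)\}-\{G(Z'_t)-\E_{q(\cdot|Z_t)}[G]\}.
\]
Since $\E_{q(\cdot|z)}[G]=(QG)(z)$, the two random terms $G(Z'_t)$ cancel. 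The summand therefore equals $\target(F)+h(Z_t)$ with $h=F\circ T-\pz(F\circ T)-(I-Q)G$. Expanding $F\circ T-\pz(F\circ T)=\sum_{j\ge1}c_j\phi_j$ in $L^2(\refm)$ and using $Q\phi_j=\lambda_j\phi_j$ gives $h=\sum_{j\ge1}d_j\phi_j$, where $d_j=c_j-(1-\lambda_j)\eta_j$ for $j\in\mathcal K$ and $d_j=c_j$ for $j\notin\mathcal K$. The function $h$ lies in $L^2_0(\refm)$ by (A3).

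Next I would invoke the standard asymptotic-variance formula for a stationary reversible chain, in the Kipnis--Varadhan form $\sigma^2_\infty=\langle h,(I+Q)(I-Q)^{-1}h\rangle_\refm$. The spectrum of $Q$ on $L^2_0(\refm)$ is pure point with orthonormal eigenfunctions $\phi_j$, so this becomes $\sum_{j\ge1}d_j^2(1+\lambda_j)/(1-\lambda_j)$. The series converges, since it is bounded by $\frac{1+\bar\lambda}{1-\bar\lambda}\|h\|^2$ because $\bar\lambda<1$. The same bound also gives the needed invertibility of $I-Q$ on $L^2_0$, so there is no delicate step here, only checking that the formula applies to the summand in its collapsed form. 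For $j\notin\mathcal K$ the terms are $v_jc_j^2$. For $j\in\mathcal K$ each term is
\[
(1-\lambda_j)^2\Big(\eta_j-\frac{c_j}{1-\lambda_j}\Big)^2\frac{1+\lambda_j}{1-\lambda_j}=(1-\lambda_j^2)\Big(\eta_j-\frac{c_j}{1-\lambda_j}\Big)^2,
\]
which proves \eqref{eq:separable}.

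For \eqref{eq:expected}, the coefficients are computed from reference draws before the run and are independent of the chain. Hence \eqref{eq:separable} holds conditionally on $\hat c$, and it remains only to take expectations over $\hat c$. With $\eta_j=\hat c_j/(1-\lambda_j)$ we get $\eta_j-c_j/(1-\lambda_j)=(\hat c_j-c_j)/(1-\lambda_j)$. Unbiasedness then gives $\E(\hat c_j-c_j)^2=s_j^2/N$, so each $j\in\mathcal K$ contributes $(1-\lambda_j^2)s_j^2/\{N(1-\lambda_j)^2\}=v_js_j^2/N$. The first sum is unaffected by $\hat c$, which completes the argument.

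The main point needing care is the exact cancellation in the first paragraph. It relies on $\alpha_t=1$ and on $Y_t$ being the next state, which is precisely what (C3) together with $w\equiv1$ provides.
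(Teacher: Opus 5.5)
Your proof is correct and follows the paper's argument: the same collapse of the summand to $F(T(Z_t))+QG(Z_t)-G(Z_t)$ via (C1), (C3) and $\theta=(1,-1)$, the same eigen-expansion with coefficients $c_j-(1-\lambda_j)\eta_j$ on $\mathcal K$, and the same term-by-term expectation over $\hat c$. The only difference is cosmetic: the paper sums the autocovariances $\mathrm{Cov}(S_0,S_t)=\sum_{j\ge1} e_j^2\lambda_j^t$ explicitly, justifying the interchange of sums by $\bar\lambda<1$, where you cite the Kipnis--Varadhan form $\langle h,(I+Q)(I-Q)^{-1}h\rangle_\refm$, which is the same computation.
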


In practice we combine two kinds of pruning. Structural pruning keeps the
indices of total degree at most $K$ that involve at most $p$ coordinates, a
set $\mathcal K_{K,p}$ of size $O(d^p)$ that can be enumerated directly.
Magnitude pruning then estimates $\hat c_\kappa$ and $\hat s_\kappa^2$ from
$N_0$ pilot reference draws, keeps the indices with
$\hat c_\kappa^{\,2}>\tau_{prune}\,\hat s_\kappa^2/N_0$, and re-estimates their
coefficients with the full budget. Since
$\E\hat c_\kappa^{\,2}=c_\kappa^2+s_\kappa^2/N_0$, the threshold $\tau_{prune}=2$
recovers the criterion above on average, while $\tau_{prune}=2\log|\mathcal K|$ also
guards against retaining pure noise when $\mathcal K$ is large. Using
reference draws rather than the chain keeps $G$ independent of the samples
it corrects. Since $\gamma$ enters $\eta_\kappa=c_\kappa/(1-\beta^{|\kappa|})$
only through $1-\beta^{|\kappa|}$, the coefficients are estimated once and
reused for every step size. When $w\not\equiv1$, rejections add a further
term to the variance, analyzed in Section~\ref{sec:gamma}.

\subsection{Choosing the Step Size: Mixing versus Estimation}
\label{sec:gamma}

When $\pz=\refm$, every proposal is accepted and $\gamma=1$ gives
i.i.d.\ sampling, which without the accept--reject step is SNIS. When
$\pz\neq\refm$, the efficiency of the estimator is measured by
$\mathrm{ESS}\times\mathrm{VRF}$, where $\mathrm{ESS}$ is the effective
sample size per step of the plain average and $\mathrm{VRF}$ the variance
reduction factor of the control variates, so that asymptotically
$\Var(\hat\mu_n)=\Var_\target(F)/(n\,\mathrm{ESS}\times\mathrm{VRF})$.
The first factor favors a large step, the second a high acceptance rate
and a small step. Acceptance targeting, expected squared jump distance
\citep{pasarica2010adaptively} and the speed measure of
\citet{titsias2019gradient} target the ESS alone, so they tend to select
a step that is too large for the estimator.

We therefore adapt $\gamma$ by maximizing an approximation of
$\mathrm{ESS}\times\mathrm{VRF}$. For the ESS we use the speed measure
$(2\gamma-\gamma^2)a(\gamma)$ of \citet{titsias2019gradient}, where
$a(\gamma)$ is the mean acceptance rate. Without rejections the control
variate would reduce the variance to the residual
$R$ that the dictionary leaves unexplained, reach a VRF of
$V_0=\Var_\target(F)/R$; taking the contribution of rejections to be
$(1-a(\gamma))\Var_\target(F)$ gives
$\mathrm{VRF}(\gamma)\approx V_0/\{1+V_0(1-a(\gamma))\}$, and the
objective
\begin{equation}
J(\gamma)=\log(2\gamma-\gamma^2)+\log a(\gamma)
-\log\big(1+V_0\,(1-a(\gamma))\big).
\label{eq:gammaobj}
\end{equation}
When $\pz=\refm$, $a\equiv1$ and $J$ is maximized at $\gamma=1$. For
small $V_0$ it reduces to the speed measure, and a large $V_0$ lowers
$\gamma$ until the rejection rate is of order $1/V_0$.
$V_0$ comes at no extra cost, since $R$ is the variance of
$F\circ T-\sum_{\kappa\in\widehat{\mathcal K}}\hat c_\kappa\Herm_\kappa$
on fresh reference draws. During burn-in, $\gamma$ follows stochastic
gradient ascent on $J$ (Appendix~\ref{app:alg-stepsize}). Since the
rejection loss is only modeled, $J$ is a heuristic. $J$ needs no run beyond burn-in and is what we recommend in practice;
the step sizes used in each experiment are given in
Appendix~\ref{app:experiments}. The step size, together
with $T$, $\widehat{\mathcal K}$ and $\hat\eta$, is fixed before the
evaluation chains are run. 

\section{EXPERIMENTS}
\label{sec:experiments}

\begin{figure}
\centering
\includegraphics[width=\columnwidth]{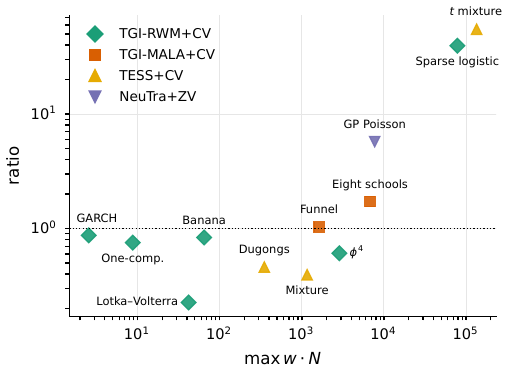}
\caption{When to sample instead of reweighting. Each point is one target
of Table~\ref{tab:wallclock}. The ratio is the precision per second of
the best chain-based estimator on the main test function divided by
that of SNIS+CV, and the marker shows which chain is best; above the
dotted line a chain is more efficient than SNIS+CV. The horizontal axis
is the largest normalized importance weight of the flow times $N$,
over the $100$ replicates of $N=10^6$ draws.}
\label{fig:maxw}
\end{figure}

\begin{table*}[t]
\centering
\small
\setlength{\tabcolsep}{5pt}
\caption{Precision per second on the main test function of each target
(Tables~\ref{tab:exp-synthetic} and~\ref{tab:exp-real}), with the
protocol of Appendix~\ref{app:experiments}. In parentheses, the variance
reduction factor of the control variates, the mean squared error without
them over that with them on the same run. Targets are ordered by the
largest normalized importance weight of the flow times $N$. Entries
within $10\%$ of the best in a row are in bold. $\dagger$: the Monte
Carlo error of the reference value accounts for at least half of the
mean squared error; --: RMSE above one posterior standard
deviation.}
\label{tab:wallclock}
\begin{tabular}{@{}l rrrrr@{}}
\toprule
Target & SNIS & TGI-RWM & TGI-MALA & TESS & NeuTra \\
\midrule
GARCH & \textbf{9.7e7$^\dagger$} (29) & 8.4e7$^\dagger$ (37) & 2.2e7 (64) & 2.4e7 (10) & 5.0e7 (109) \\
One-comp. & \textbf{8.9e5} (1.6) & 6.7e5 (1.7) & 4.6e5 (12) & 4.7e5 (1.6) & 5.8e5 (5.9) \\
Lotka--Volterra & \textbf{1.0e8} (28) & 2.3e7 (14) & 5.6e6 (57) & 8.9e6 (6.4) & 2.1e7 (176) \\
Banana & \textbf{4.1e7} (26) & 3.4e7 (35) & 1.1e7 (117) & 5.9e6 (13) & 1.7e7 (104) \\
Dugongs & \textbf{2.0e7} (5.1) & 5.4e6 (3.5) & 5.1e4 (0.6) & 9.2e6 (4.9) & 1.3e6 (3.3) \\
Mixture & \textbf{3.9e6} (3.3) & 1.5e6 (2.3) & -- & 1.6e6 (2.4) & 1.6e4 (1.9) \\
Funnel & \textbf{1.4e6} (1.9) & 6.6e5 (1.7) & \textbf{1.4e6} (48) & 1.1e6 (2.0) & 8.5e5 (6.0) \\
$\phi^4$ field & \textbf{1.4e6} (1.9) & 8.4e5 (1.9) & -- & 8.0e5 (1.8) & 1.0e4 (2.5) \\
Eight schools & 6.2e5 (1.4) & 6.4e5 (1.7) & \textbf{1.1e6} (16) & 8.7e5 (1.9) & 3.4e5 (4.4) \\
GP Poisson & 4.2e5 (1.3) & 1.0e6 (2.2) & 1.1e6 (21) & 1.1e6 (2.1) & \textbf{2.4e6} (22) \\
Sparse logistic & 1.1e4 (1.0) & \textbf{4.5e5} (3.7) & 2.1e5 (13) & 2.2e5 (2.9) & \textbf{4.2e5} (30) \\
$t$ mixture & 4.1e3 (1.0) & 9.8e4 (1.1) & -- & \textbf{2.2e5} (1.2) & 2.2e3 (3.6) \\
\bottomrule
\end{tabular}
\end{table*}

We evaluate the method of Algorithms~\ref{alg:pipeline}
and~\ref{alg:tgi} (Appendix~\ref{app:algorithms}) by its precision per
second, $1/(\mathrm{RMSE}^2\times\text{seconds per estimate})$, with the
root mean squared error over $100$ replications measured in posterior
standard deviations. We run every estimator on twelve targets. The
\emph{synthetic} targets are the banana of \citet{haario1999adaptive},
the funnel of \citet{neal2003slice}, the Gaussian mixture of
\citet{alexopoulos2023variance} and a $t$ mixture. The \emph{real}
posteriors are GARCH(1,1), the dugongs growth curve, a pharmacokinetic
ODE model and a Gaussian-process Poisson regression from posteriordb
\citep{magnusson2024posteriordb}, the Lotka--Volterra model, the centered
eight schools model, sparse logistic regression on the German credit
data \citep{hoffman2019neutra} and the bimodal $\phi^4$ field of
\citet{gabrie2022adaptive}. For each target one RealNVP flow is trained
and frozen. On it we run SNIS, TGI-RWM, TGI-MALA and TESS with our
control variates, and HMC in the latent space
\citep[NeuTra;][]{hoffman2019neutra} with zero-variance control
variates, each with $10^6$ target evaluations per estimate. The settings
and the results for every test function are in
Appendix~\ref{app:experiments}; Table~\ref{tab:wallclock} reports, for each target, the main test
function, chosen as the one whose performance under TGI-RWM
with the control variates is the median over the test functions of
that target, the better of the two middle ones when their number is even.

Where the flow is accurate, the control variates reduce the variance of
SNIS and TGI-RWM by one to two orders of magnitude. Where the flow leaves a neck or a seam between
modes, the gain falls to a factor of a few, since the test functions
then pull back to functions that low-degree Hermite polynomials
represent poorly, and the advantage of the flow-based estimators comes
from the transport itself.

The rule of Section~\ref{sec:diagnostics} holds up across all twelve
targets (Figure~\ref{fig:maxw}). While $\max w\cdot N$ stays below a
few thousand, reweighting the flow with the control variates is very effective, even on the two multimodal targets, because
the flow has already done the work of finding both modes. Beyond that
threshold the weights stop being informative and a latent chain takes
over. What carries through both regimes is the control variate. One dictionary serves SNIS, TGI-RWM and TESS alike without a single gradient of the target, and wherever the flow is accurate it makes each of them markedly more precise. Table~\ref{tab:wallclock} uses $\delta=0$. With the independence step of Theorem~\ref{thm:validity}, $\delta=0.01$, the precision of TGI-RWM+CV changes by a factor between $0.55$ and $3.1$ across all test functions (Table~\ref{tab:wallclock-full}).

The methods also differ in how far they can fall behind
(Table~\ref{tab:stability}). SNIS with the control variates is the best
method most often but has the most uneven record, falling too much behind where its weights are heaviest. TGI-RWM is rarely the
best, yet it stays within a small factor of the best on every target,
the smallest worst case of all the methods, which makes it the safe
default when the diagnostics are inconclusive.
Codes
to reproduce this experiment can be found at \url{https://github.com/cliusir123/TMCVR_Code}
\section{CONCLUSION}
\label{sec:conclusion}

A bijection carries a Markov kernel and its Poisson solutions together,
so the exact solutions that a Gaussian reference offers become control
variates on every target a flow can reach. Because the construction sees
the kernel only through its spectrum, one dictionary of Hermite polynomials, whose coefficients need no
further evaluation of the target, serves a whole family of estimators that need no gradient of the
target: IS from the flow, the latent random walk and
elliptical slice sampling. Each is consistent and asymptotically normal
under its own conditions, and where the flow is accurate the control
variates make each of them much more precise.
The largest importance weight of the flow tells which member to run. The control variates also prefer the weights of the flow to stay
bounded in the tails, which the usual training objectives do not
enforce and a penalty in training encourages.

The gains depend on the quality of the transport. Training dominates the cost
at small budgets, the tail penalty carries no guarantee, and where the
flow leaves a neck or a seam between modes the control variates gain
little; on posteriors whose mismatch is spread over many coordinates,
HMC on the same flow remains the stronger choice,
and targets with tails as heavy as our $t$ mixture lie beyond the
theory. Conversely, any improvement in training flows benefits the estimators
directly. We see several directions for future work. Flows whose
weights are bounded by construction would meet the assumptions of the
theory directly. The reference need not be Gaussian, since Gamma and
spherical references (Appendix~\ref{app:general}) could be combined into
one that already matches the support and tails of the target. Hamiltonian
dynamics with an exact Gaussian integrator \citep{beskos2011hybrid} keeps
the Hermite polynomials as eigenfunctions and could bring the
gradient-based chain into the family, and a dictionary with functions
that separate modes would extend the gains to multimodal targets.
Finally, the flow could be trained alongside the chains, under the
conditions of adaptive MCMC.

\bibliographystyle{plainnat}
\bibliography{reference}

\clearpage
\appendix
\onecolumn
\setcounter{section}{0}
\setcounter{equation}{0}
\setcounter{figure}{0}
\setcounter{table}{0}
\setcounter{algorithm}{0}
\renewcommand{\thesection}{S\arabic{section}}
\renewcommand{\theequation}{S\arabic{equation}}
\renewcommand{\thefigure}{S\arabic{figure}}
\renewcommand{\thetable}{S\arabic{table}}
\renewcommand{\thealgorithm}{S\arabic{algorithm}}
\begin{center}
{\LARGE Supplementary Materials: Transformed Samplers with Variance Reduction}
\end{center}
\addcontentsline{toc}{section}{Supplementary Material}

\section{PROOFS}
\label{app:proofs}

\subsection{Notation}
\label{app:notation}

We recall the notation of Section~\ref{sec:method}. Throughout,
$(\mathcal Z,\mathcal B_{\mathcal Z})$ and $(\mathcal X,\mathcal B_{\mathcal X})$
are measurable spaces, equipped with their Borel $\sigma$-algebras when they
are subsets of $\reals^d$. The reference $\refm$ is a probability measure on
$\mathcal Z$ and $T:\mathcal Z\to\mathcal X$ is a bijection with $T$ and
$T^{-1}$ measurable. The map pushes forward the
reference to $\hpi_T=T_\pf\refm$ and pulls back the target to
$\pz=(T^{-1})_\pf\target$. Assuming $\pz\ll\refm$, we define the importance
weight $w=d\pz/d\refm$. We denote by $\op U$ the composition
operator associated with $T$, which maps a measurable $g:\mathcal X\to\mathbb R$
to $\op Ug=g\circ T:\mathcal Z\to\mathbb R$. Since $T$ is a 
bijection, $\op U$ is invertible with $\op U^{-1}h=h\circ T^{-1}$ for
measurable $h:\mathcal Z\to\mathbb R$. By change of variables, $\op U$ is an
isometry from $L^p(\hpi_T)$ onto $L^p(\refm)$ for every $p\in[1,\infty]$, and
$\mathbb E_{\target}[g]=\mathbb E_{\pz}[\op Ug]=\mathbb E_{\refm}[w\,\op Ug]$. The latent chain $\{Z_t\}$ has kernel $P_z$,
it proposes $Z'_t\sim q(\cdot|Z_t)$ at step $t$ and moves to
$Z_{t+1}$, we also have $X_t=T(Z_t)$. For the solution
$G=\sum_{j\in\mathcal K}\eta_j\phi_j$ and coefficients
$\theta=(\theta_1,\theta_2)$, the estimator of Section~\ref{sec:estimator}
is $\hat\mu_n=n^{-1}\sum_{t=0}^{n-1}S_t$ with
\begin{equation}
S_t=F(T(Z_t))+\theta_1H_{1,t}+\theta_2H_{2,t},
\label{eq:app-summand}
\end{equation}
where $H_{1,t}=\alpha_t\{G(Z'_t)-G(Z_t)\}$, $H_{2,t}=G(Z'_t)-\E_{q(\cdot|Z_t)}[G]$ for an MH
sampler with acceptance probability $\alpha_t=\alpha(Z_t,Z'_t)$, and
$H_{1,t}=G(Z_{t+1})-G(Z_t)$ otherwise. For a probability measure $\mu$, $L^2(\mu)$ denotes
the square-integrable functions with inner product
$\langle\cdot,\cdot\rangle_\mu$, and $L^2_0(\mu)$ those with $\mu(h)=0$. For a kernel $P$ reversible with respect to $\mu$, the
Dirichlet form and the spectral gap are
\[
\mathcal E_P(f)=\tfrac12\iint\big(f(y)-f(x)\big)^2\mu(dx)P(x,dy),\qquad
\mathrm{Gap}(P)=\inf_{\Var_\mu(f)>0}\frac{\mathcal E_P(f)}{\Var_\mu(f)},
\]
and the absolute spectral gap is
$\mathrm{Gap}_{\mathrm{abs}}(P)=1-\|P\|_{L^2_0(\mu)}$, where
$\|P\|_{L^2_0(\mu)}=\sup\{|\langle h,Ph\rangle_\mu|:h\in L^2_0(\mu),\,\|h\|_\mu=1\}$
is the operator norm of $P$ on $L^2_0(\mu)$.
For the Gaussian reference, we use the probabilists' Hermite polynomials
$\he_n$, defined by the generating function
\begin{equation}
\exp(tx-t^2/2)=\sum_{n\ge0}\he_n(x)\,\frac{t^n}{n!},\qquad t,x\in\reals,
\label{eq:app-hermite-gen}
\end{equation}
and their normalized tensor products
\begin{equation}
\label{eq:app-hermite-basis}
\Herm_\kappa(z)=\prod_{j}\he_{\kappa_j}(z_j)/\sqrt{\kappa_j!}, \qquad \kappa\in\mathbb N^d,
\end{equation}
form an orthonormal basis of
$L^2(\gauss(0,I_d))$ \citep[Ch.~1]{nualart2006malliavin}. If
$z'=\beta z+s\xi$ with $\xi\sim\gauss(0,I_d)$, $\beta\in[-1,1]$ and
$s^2=1-\beta^2$, then
\begin{equation}
\E[\Herm_\kappa(z')\,|\,z]=\beta^{|\kappa|}\Herm_\kappa(z),
\label{eq:app-hermite-cond}
\end{equation}
with the convention $0^0=1$. To see this, by the independence of the coordinates it suffices
to take $d=1$, then replace $x$ by $\beta x+s\xi$ in
\eqref{eq:app-hermite-gen}, taking expectations and using
$\E[e^{ts\xi}]=e^{t^2s^2/2}$ gives
$\sum_{n}\E[\he_n(\beta x+s\xi)]\,t^n/n!=\exp\{(t\beta)x-(t\beta)^2/2\}
=\sum_n\he_n(x)(t\beta)^n/n!$, where the exchange of expectation and
summation is justified by the dominated convergence with $\sum_n|\he_n(x')|\,|t|^n/n!\le e^{|t||x'|+t^2/2}$.
Identifying coefficients yields \eqref{eq:app-hermite-cond}.

\subsection{Proof of Proposition~\ref{prop:equivariance}}
\label{pf:app:transform}

The proof uses only that $T$ and $T^{-1}$ are measurable, so it holds for
general measurable spaces $\mathcal Z$ and $\mathcal X$.

(i) Since $T$ is measurable, $T^{-1}(B)$ is measurable for every measurable
$B\subseteq\mathcal X$, so $P_x(x,\cdot)$ is well defined and is the
pushforward of $P_z(T^{-1}(x),\cdot)$ under $T$, thus it is a probability
measure. Since $T^{-1}$ is measurable, $x\mapsto P_z(T^{-1}(x),T^{-1}(B))$
is measurable. Thus $P_x$ is a Markov kernel. For $h\in L^1(\target)$, since $P_x(T(z),\cdot)$ is the pushforward of
$P_z(z,\cdot)$ under $T$,
\[
(P_xh)(T(z))=\int h(T(z'))\,P_z(z,dz')=(P_z\op Uh)(z)
\quad\text{for $\pz$-a.e.},
\]
that is, $\op UP_x=P_z\op U$ on $L^1(\target)$. Invariance follows from
\[
\target(P_xh)=\pz(\op UP_xh)=\pz(P_z\op Uh)=\pz(\op Uh)=\target(h).
\]
The measure $\target(dx)P_x(x,dy)$ is the image of $\pz(dz)P_z(z,dz')$
under $T\times T$. Writing $\sigma(a,b)=(b,a)$, reversibility means
invariance under $\sigma$, and since $(T\times T)\circ\sigma=\sigma\circ(T\times T)$,
reversibility of $P_z$ implies that of $P_x$. Since $T$ and $T^{-1}$ are measurable, $(X_0,\dots,X_t)$ and
$(Z_0,\dots,Z_t)$ generate the same $\sigma$-algebra, so
\[
\mathbb P(X_{t+1}\in B\,|\,X_0,\dots,X_t)=P_z(Z_t,T^{-1}(B))=P_x(X_t,B),
\]
and $\{X_t\}$ is a Markov chain with kernel $P_x$.

(ii) Since $\target=T_\pf\pz$, $\op U$ is an isometry from $L^1(\target)$ onto
$L^1(\pz)$, so $G=\op U^{-1}G_z\in L^1(\target)$, and by change of variables
$\pz(F\circ T)=\target(F)$ and $\target(G)=\pz(G_z)=0$. Applying
$\op U^{-1}$ to both sides of $\op UP_x=P_z\op U$ gives
$P_x\op U^{-1}=\op U^{-1}P_z$, therefore
\[
(I-P_x)G=\op U^{-1}(I-P_z)G_z=\op U^{-1}\{F\circ T-\target(F)\}=F-\target(F).
\]\qed

\begin{remark}
\label{rem:app-mh}
If $P_z$ is the MH kernel with target $\pz$ and proposal $q$, then $P_x$ is the
MH kernel with target $\target$ and proposal
$q_x(x,\cdot)=T_\pf q(T^{-1}(x),\cdot)$, and the acceptance probability of a
move $T(z)\to T(z')$ equals that of $z\to z'$. With densities and a
diffeomorphism $T$,
\[
q_x(x'|x)=q\big(T^{-1}x'\,|\,T^{-1}x\big)\,|\det\nabla T^{-1}(x')|,\qquad
\target(x')=\pz(T^{-1}x')\,|\det\nabla T^{-1}(x')|,
\]
and in the ratio $\target(x')q_x(x|x')/\{\target(x)q_x(x'|x)\}$ the Jacobian
factors are canceled out. Without densities the same holds in the general
formulation of \citet{tierney1998note}, because the push-forward by
$T\times T$ preserves the Radon--Nikodym derivative that defines the
acceptance probability.
\end{remark}

\begin{remark}[Affine maps]
\label{rem:affine}
Two consequences of Proposition~\ref{prop:equivariance}(ii) connect it
with earlier work. Let $T(z)=\mu+Lz$ with $L$ invertible and
$\Sigma=LL^\top$, so that $\hpi_T=\gauss(\mu,\Sigma)$ and
$\nabla\log\pz(z)=L^\top\nabla\log\target(T(z))$.
\begin{enumerate}
\item Since $z'=\beta z+s\xi$ gives $x'=\beta x+(1-\beta)\mu+sL\xi$, the
transformed GI kernel is
$Q_T(x,\cdot)=\gauss\big(\beta x+(1-\beta)\mu,(1-\beta^2)\Sigma\big)$,
and its Poisson solutions are those of \citet{titsias2026gaussian}:
$\Poi_F(x)=(x-\mu)/\gamma$ for $F(x)=x$, and
$\Poi_F(x)=\{(x-\mu)(x-\mu)^\top-\Sigma\}/(2\gamma-\gamma^2)$ entrywise
for $F(x)=(x-\mu)(x-\mu)^\top$, as $\E[x'-\mu\,|\,x]=\beta(x-\mu)$,
$\E[(x'-\mu)(x'-\mu)^\top\,|\,x]=\beta^2(x-\mu)(x-\mu)^\top+(1-\beta^2)\Sigma$
and $1-\beta^2=2\gamma-\gamma^2$.
\item For any target and $c>0$, the MH kernels with proposals
$\gauss(x,c^2\Sigma)$ and
$\gauss\big(x+\tfrac{c^2}{2}\Sigma\nabla\log\target(x),c^2\Sigma\big)$
are the transforms of those with proposals $\gauss(z,c^2I)$ and
$\gauss\big(z+\tfrac{c^2}{2}\nabla\log\pz(z),c^2I\big)$, by
Remark~\ref{rem:app-mh}. If $\target=\gauss(\mu,\Sigma)$ and
$\hat F^{(i)}_0$ is a centered Poisson solution of $z\mapsto z_i$ for the
latent kernel, then by linearity
$G(x)=\sum_iL_{ji}\hat F^{(i)}_0(L^{-1}(x-\mu))$ is one for $F(x)=x_j$,
which for lower-triangular $L$ and $j=1$ is Proposition~3 of
\citet{alexopoulos2023variance}. An approximate solution with residual
$e$, $(I-P_z)G_0=F_0-\pz(F_0)+e$, transforms to one with residual
$e\circ T^{-1}$ and $\|e\circ T^{-1}\|_{L^p(\target)}=\|e\|_{L^p(\pz)}$,
which is why an approximation fitted once on the standard normal, as in
\citet{alexopoulos2023variance}, has the same quality for every
Gaussian.
\end{enumerate}
\end{remark}

\subsection{Proof of Theorem~\ref{thm:frozen}}
\label{pf:app:frozen}

By Proposition~\ref{prop:equivariance}(i) with $P_z=Q$ and $\pz=\refm$,
$Q_T=\op U^{-1}Q\op U$ is $\hpi_T$-reversible. The map $\op U$ is unitary from
$L^2(\hpi_T)$ onto $L^2(\refm)$, since
$\|g\|^2_{L^2(\hpi_T)}=\|g\circ T\|^2_{L^2(\refm)}$ and $\op U$ has inverse
$\op U^{-1}$, and
$\langle g_1,Q_Tg_2\rangle_{\hpi_T}=\langle\op Ug_1,Q\op Ug_2\rangle_{\refm}$.

(i) and (ii). Since $\dict_j=\op U^{-1}\phi_j$ and $\op U$ is unitary,
$\{\dict_j\}$ is an orthonormal basis of $L^2(\hpi_T)$, and
$Q_T\dict_j=\op U^{-1}Q\phi_j=\lambda_j\dict_j$. For $h=\sum_ja_j\dict_j$, we
have $\op Uh=\sum_ja_j\phi_j$, so
$\langle h,Q_Th\rangle_{\hpi_T}=\langle\op Uh,Q\op Uh\rangle_{\refm}
=\sum_j\lambda_ja_j^2$. Thus $Q_T$ is positive semidefinite if and only if
every $\lambda_j\ge0$. Since $Q_T-\lambda I=\op U^{-1}(Q-\lambda I)\op U$ for
every $\lambda$, $Q_T$ and $Q$ have the same spectrum, which does not depend
on $T$. Since $\dict_0=1$, $L^2_0(\hpi_T)$ consists of those $h$ with
$a_0=0$, so
$\sup\{|\langle h,Q_Th\rangle|:h\in L^2_0,\|h\|=1\}=\sup_{j\ge1}|\lambda_j|=\bar\lambda$,
and the absolute spectral gap of $Q_T$ is $1-\bar\lambda$.

(iii). Write $\Poi_F=\sum_{j\ge1}b_j\dict_j$ with $b_j=c_j/(1-\lambda_j)$. For
$j\ge1$, $|\lambda_j|\le\bar\lambda<1$ gives $1-\lambda_j\ge1-\bar\lambda>0$,
so $|b_j|\le|c_j|/(1-\bar\lambda)$. Applying $\op U^{-1}$ to $F\circ T=\sum_{j\ge1}c_j\phi_j$ gives
$F=\sum_{j\ge1}c_j\dict_j$, with $\|F\|^2=\sum_{j\ge1}c_j^2$ by orthonormality, so
\[
\sum_{j\ge1}b_j^2\le\frac{1}{(1-\bar\lambda)^2}\sum_{j\ge1}c_j^2
=\frac{\|F\|^2}{(1-\bar\lambda)^2}<\infty,
\]
so the series defining $\Poi_F$ converges in $L^2(\hpi_T)$ and, again by
orthonormality, $\|\Poi_F\|^2=\sum_{j\ge1}b_j^2\le\|F\|^2/(1-\bar\lambda)^2$,
which is the norm bound. Since $I-Q_T$ is bounded and
$(I-Q_T)\dict_j=(1-\lambda_j)\dict_j$, it can be applied term by term:
\[
(I-Q_T)\Poi_F=\sum_{j\ge1}(1-\lambda_j)b_j\dict_j=\sum_{j\ge1}c_j\dict_j=F.
\]
As $\Poi_F$ has no component along $\dict_0=1$, it is centered. If $G$ is another centered solution, then
$D=G-\Poi_F=\sum_{j\ge1}d_j\dict_j$ satisfies
$(I-Q_T)D=(I-Q_T)G-(I-Q_T)\Poi_F=F-F=0$. Since
$(I-Q_T)D=\sum_{j\ge1}(1-\lambda_j)d_j\dict_j$, this gives
$(1-\lambda_j)d_j=0$ and hence $d_j=0$ for all $j\ge1$, that is, $D=0$. Therefore the centered solution is unique.\qed

\subsection{Proof of Theorem~\ref{thm:validity}}
\label{pf:app:validity}

\subsubsection{SNIS}
\label{SNIS}

Let $\zeta_1,\dots,\zeta_n\sim\refm$ be independent, $w_i=w(\zeta_i)$,
$F_i=F(T(\zeta_i))$, $\phi_i=\phi(\zeta_i)$, $Z=\E_\refm[w]$ and
$\mu=\target(F)$, so that $\E_\refm[wF\circ T]=Z\mu$. The estimator
\eqref{eq:snis-cv} is $\bar N_n/\bar D_n$ with
$\bar N_n=n^{-1}\sum_i\{w_iF_i-\hat a_n^\top\phi_i\}$ and
$\bar D_n=n^{-1}\sum_i\{w_i-\hat b_n^\top\phi_i\}$, where $\hat a_n$ and
$\hat b_n$ are the least-squares coefficients, with intercept, of
$w_iF_i$ and of $w_i$ on $\phi_i$.

For fixed $(a,b)$ set $U_i=w_iF_i-a^\top\phi_i$ and $V_i=w_i-b^\top\phi_i$.
They are square-integrable by assumption, since $\phi_j\in L^2(\refm)$ and
$\E|wF\circ T\phi_j|<\infty$ by Cauchy--Schwarz. The strong law gives
$\bar U_n\to Z\mu$ and $\bar V_n\to Z$ almost surely, hence
$\bar U_n/\bar V_n\to\mu$, and the central limit theorem for
$(\bar U_n,\bar V_n)$ with the delta method for $(u,v)\mapsto u/v$, whose
gradient at $(Z\mu,Z)$ is $(1/Z,-\mu/Z)$, gives
\[
\sqrt n\Big(\frac{\bar U_n}{\bar V_n}-\mu\Big)\to
\gauss\Big(0,\frac{1}{Z^2}\Var_\refm\big[w(F\circ T-\mu)-(a-\mu b)^\top\phi\big]\Big).
\]
By orthonormality of $\phi$ the variance is minimized over $c=a-\mu b$ at
$c^*=\E_\refm[w(F\circ T-\mu)\phi]$.

For the fitted coefficients, the strong law gives $\hat a_n\to a^*=\E_\refm[wF\circ T\phi]$
and $\hat b_n\to b^*=\E_\refm[w\phi]$ almost surely, since the sample
covariance of the $\phi_i$ tends to $I$. Writing
$\bar N_n=n^{-1}\sum_i(w_iF_i-a^{*\top}\phi_i)-(\hat a_n-a^*)^\top\bar\phi_n$,
the second term is $o(1)$ almost surely and $o_p(n^{-1/2})$, because
$\sqrt n\,\bar\phi_n=O_p(1)$; the same holds for $\bar D_n$. Hence the
estimator has the limits of the fixed-coefficient case with
$(a,b)=(a^*,b^*)$, and $a^*-\mu b^*=c^*$, so
\[
\sigma^2_\infty=\frac{1}{Z^2}\Big\{\Var_\refm\big[w(F\circ T-\mu)\big]-\|c^*\|^2\Big\}.
\]
If $w\equiv1$, then $Z=1$, $c^*_j=c_j$ and $\sigma^2_\infty=\sum_{j\notin\mathcal K}c_j^2$,
which is Proposition~\ref{prop:pruning} with $\lambda_j\equiv0$.

\subsubsection{TGI}
\label{pf:app:tgi}

\paragraph{Ergodicity under (A2).} TGI-RWM, and TGI-MALA and TGI-pMALA under
the additional assumptions below, propose from
$\gauss(\beta z+b(z),s^2A_z)$, and we treat them at once through the
following lemma.

\begin{lemma}
\label{lem:app-perturbed}
Let $\gamma\in(0,1)$, $s^2=1-\beta^2$, and let the proposal be
$\gauss\big(\beta z+b(z),s^2A_z\big)$ with $A_z$ diagonal. Under (A2), assume that
$b$ and $A$ are continuous, write $E_z=A_z-I$, and that for constants $C_b,C_E$ and all $z$
\begin{equation}
\|b(z)\|\le\frac{C_b}{1+\|z\|},\qquad
\|E_z\|=\|A_z-I\|\le\frac{C_E}{1+\|z\|^2},\qquad
a_{\min}I\preceq A_z\preceq a_{\max}I .
\label{eq:app-perturb}
\end{equation}
Then the MH kernel $P$ with target $\pz$ is $V$-uniformly geometrically
ergodic with $V(z)=1+\|z\|^2$.
\end{lemma}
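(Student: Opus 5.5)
We will verify the two standard ingredients of $V$-uniform geometric ergodicity (Meyn--Tweedie): a drift condition $PV\le\lambda V+b\,\mathbf 1_C$ with $\lambda<1$ and $C=\{\|z\|\le r\}$, and the statement that such balls are small sets. The strategy is to reduce everything to the reference autoregression $\QGI$, for which $\QGI V(z)=1+\beta^2\|z\|^2+d s^2$. This has contraction factor $\beta^2<1$ because $\gamma\in(0,1)$.

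For the small-set part we will work with densities. The proposal density $q(z'|z)$ is jointly continuous in $(z,z')$, because $b$ and $A$ are continuous and $A_z\succeq a_{\min}I$. It is therefore bounded below by some $\epsilon_r>0$ on $\{\|z\|\le r,\|z'\|\le r\}$. By (A2), the MH acceptance probability $\alpha(z,z')=\min\{1,\,w(z')\refm(z')q(z|z')/(w(z)\refm(z)q(z'|z))\}$ is bounded below by $(m/M)$ times a positive continuous function on that compact set. Hence $P(z,\cdot)\ge c\,\mathrm{Leb}|_{B_r}$ for $z\in B_r$, so every ball is small and the chain is aperiodic and $\mathrm{Leb}$-irreducible.

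For the drift we will split $PV(z)=\int \alpha V(z')q(dz'|z)+(1-a(z))V(z)$, where $a(z)$ is the mean acceptance probability, so that
\[
PV(z)-V(z)=\int\alpha(z,z')\{V(z')-V(z)\}q(dz'|z).
\]
Step one: under the perturbation bounds \eqref{eq:app-perturb}, the proposal moments satisfy $\int V\,dq(\cdot|z)=1+\beta^2\|z\|^2+O(\|z\|)$. Hence the integral of $V(z')-V(z)$ over $q$ is at most $-(1-\beta^2)\|z\|^2/2$ for large $\|z\|$. Step two: the acceptance probability stays bounded below on the bulk of the proposal uniformly in $z$. We will write the density ratio $\refm(z')q(z|z')/(\refm(z)q(z'|z))$ and compare it with the exact reversibility identity for $\QGI$, where this ratio equals $1$. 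The perturbation enters the log-ratio through terms $\langle b, \cdot\rangle/s^2$ and through $E_z$ acting on quadratic forms of size $\|z\|^2$. The decay rates $C_b/(1+\|z\|)$ and $C_E/(1+\|z\|^2)$ are exactly what makes these terms $O(1)$ when $z'$ lies in a region $\{\|z'-\beta z\|\le K\}$ of $q$-probability at least $1-\eta$. Together with $w(z')/w(z)\ge m/M$, this gives $\alpha\ge\alpha_0>0$ on that region, uniformly in $z$. We will handle the determinant factor $\det A_{z'}/\det A_z$ using the eigenvalue bounds.

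Step three combines the two. On the good region $V(z')-V(z)\le -c\|z\|^2$ for large $\|z\|$, since $\|z'\|\le\beta\|z\|+K+o(1)$, and this region carries acceptance mass at least $\alpha_0(1-\eta)$. On the complement we bound $\alpha\le1$ and control $\int_{\text{bad}}(V(z')-V(z))_+q(dz'|z)$ by Gaussian tails, which gives $o(\|z\|^2)$. Therefore $PV\le(1-c')V$ outside a large ball, and $PV$ is bounded on the ball, which yields the drift. The main obstacle is step two, the uniform lower bound on acceptance. There the cancellation between $\refm$ and the $\QGI$ transition density must survive the perturbation with errors that do not grow with $\|z\|$. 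If a direct expansion of the quadratic forms is messy, we will first treat $b\equiv0,A\equiv I$, where the ratio is exactly $w(z')/w(z)\ge m/M$, and then bound the log-ratio difference by Taylor expansion in $(b,E)$.
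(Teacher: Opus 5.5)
Your proposal is correct and follows the paper's proof essentially step for step. It uses the same factorisation of the MH ratio into $w(z')/w(z)$ times a perturbation factor measured against the $\refm$-reversible $\QGI$, and the same uniform bound on that factor over a bulk noise region once $\|z\|$ is large. That bound rests on $\|z'\|\gtrsim\beta\|z\|$, which makes $b(z')$ and $E_{z'}$ small in the backward density, and this is the real role of $\beta>0$. The good/bad split of the drift and the compactness argument for small sets are also the same. The only minor difference is on the bad region: you bound it by $\int (V(z')-V(z))_+\,q(dz'|z)$, which is $o(1)$ by Gaussian tails, so the acceptance constant need not be independent of the truncation radius. The paper's cruder bound by $\|z'\|^2$ gives only $\varepsilon_R V(z)$, and that is why it has to arrange this uniformity explicitly.
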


\begin{proof}
Write $q_0$ for the density of $\QGI$, and
$z'=\beta z+b(z)+sA_z^{1/2}\xi$ with $\xi\sim\gauss(0,I_d)$. Since $\pz=w\refm$ and $\QGI$ is $\refm$-reversible,
the MH ratio is $\{w(z')/w(z)\}e^{\Delta}$ with
\[
\Delta=\log\frac{q(z\,|\,z')}{q_0(z\,|\,z')}-\log\frac{q(z'\,|\,z)}{q_0(z'\,|\,z)}.
\]
Expanding the Gaussian densities,
\[
\log\frac{q(z'\,|\,z)}{q_0(z'\,|\,z)}
=\frac{s^2\xi^\top E_z\xi+\|b(z)\|^2+2s\,b(z)^\top A_z^{1/2}\xi}{2s^2}
-\frac12\sum_j\log a_{z,j},
\]
and with $u=z-\beta z'=s^2z-\beta b(z)-\beta sA_z^{1/2}\xi$ and $b'=b(z')$,
\[
\log\frac{q(z\,|\,z')}{q_0(z\,|\,z')}
=-\frac{u^\top(A_{z'}^{-1}-I)u-2b'^\top A_{z'}^{-1}u+b'^\top A_{z'}^{-1}b'}{2s^2}
-\frac12\sum_j\log a_{z',j},
\]
where $\|A_{z'}^{-1}-I\|\le\|E_{z'}\|/a_{\min}$.

\emph{Uniform bound.} We show that $\Delta$ is bounded when $\|z\|$ is large
and the noise is moderate. Fix $R\ge1$, let $\mathsf B_R=\{\|\xi\|\le R\}$, and
set
\[
K=C_b+s\sqrt{a_{\max}},\qquad
\kappa=\max\Big\{1,\frac{2K}{\beta},\frac{K}{s^2}\Big\},\qquad
\ell=\max\{|\log a_{\min}|,|\log a_{\max}|\}.
\]
Suppose $\|z\|\ge\kappa R$ and $\xi\in\mathsf B_R$. Then $\|z\|\ge R$, and
since $\|b(z)\|\le C_b$, $\|sA_z^{1/2}\xi\|\le s\sqrt{a_{\max}}R$,
\[
\|b(z)+sA_z^{1/2}\xi\|\le C_b+s\sqrt{a_{\max}}R\le KR.
\]
As $z'=\beta z+b(z)+sA_z^{1/2}\xi$ and $u=s^2z-\beta\{b(z)+sA_z^{1/2}\xi\}$,
using $\|z\|\ge2KR/\beta$, $\|z\|\ge KR/s^2$ and $\beta\le1$,
\[
\|z'\|\ge\beta\|z\|-KR\ge\frac{\beta}{2}\|z\|,\qquad
\|u\|\le s^2\|z\|+KR\le2s^2\|z\|.
\]
Therefore, by \eqref{eq:app-perturb} at $z'$,
\[
\|E_{z'}\|\le\frac{C_E}{\|z'\|^2}\le\frac{4C_E}{\beta^2\|z\|^2},\qquad
\|b'\|\le\frac{C_b}{\|z'\|}\le\frac{2C_b}{\beta\|z\|},
\]
and $\|A_{z'}^{-1}\|\le1/a_{\min}$, $\|A_{z'}^{-1}-I\|\le\|E_{z'}\|/a_{\min}$.

For the forward term, $\|z\|\ge R$ and \eqref{eq:app-perturb} at $z$ give
\[
|\xi^\top E_z\xi|\le\frac{C_ER^2}{1+\|z\|^2}\le C_E,\qquad
\|b(z)\|^2\le C_b^2,\qquad
|b(z)^\top A_z^{1/2}\xi|\le\frac{C_b\sqrt{a_{\max}}R}{1+\|z\|}\le C_b\sqrt{a_{\max}},
\]
and $\frac12|\sum_j\log a_{z,j}|\le d\ell/2$, so
\[
\Big|\log\frac{q(z'\,|\,z)}{q_0(z'\,|\,z)}\Big|
\le\frac{s^2C_E+C_b^2+2sC_b\sqrt{a_{\max}}}{2s^2}+\frac{d\ell}{2}.
\]
For the backward term, the bounds at $z'$ and $\|u\|\le2s^2\|z\|$ give
\[
|u^\top(A_{z'}^{-1}-I)u|\le\frac{\|E_{z'}\|\|u\|^2}{a_{\min}}
\le\frac{16C_Es^4}{a_{\min}\beta^2},\qquad
|2b'^\top A_{z'}^{-1}u|\le\frac{2\|b'\|\|u\|}{a_{\min}}
\le\frac{8C_bs^2}{a_{\min}\beta},
\]
and $b'^\top A_{z'}^{-1}b'\le C_b^2/a_{\min}$, so
\[
\Big|\log\frac{q(z\,|\,z')}{q_0(z\,|\,z')}\Big|
\le\frac{1}{2s^2a_{\min}}\Big(\frac{16C_Es^4}{\beta^2}+\frac{8C_bs^2}{\beta}+C_b^2\Big)
+\frac{d\ell}{2}.
\]
Adding the two bounds gives $|\Delta|\le C_0$ for a constant $C_0$ that
depends on neither $z$ nor $R$. By (A2), for $\|z\|\ge\kappa R$ the acceptance probability on
$\mathsf B_R$ is then
\[
\min\Big\{1,\frac{w(z')}{w(z)}e^{\Delta}\Big\}\ge\frac mMe^{-C_0}=:c.
\]

\emph{Drift.} Since a rejected move leaves $V$ unchanged,
\[
PV(z)-V(z)=\E_\xi\big[\alpha\{V(z')-V(z)\}\big],
\]
which we split according to $\mathsf B_R$ and its complement.

On $\mathsf B_R$, $\|z'\|\le\beta\|z\|+KR$. Let
$\bar\beta=\{(1+\beta^2)/2\}^{1/2}$, so that $\beta<\bar\beta<1$ and
$\bar\beta^2=1-s^2/2$, and let
$r_2(R)=\max\{\kappa R,\,KR/(\bar\beta-\beta)\}$. For $\|z\|\ge r_2(R)$ we
have $\|z'\|\leq\beta\|z\|+KR\le\bar\beta\|z\|$, so that
\[
V(z')-V(z)=\|z'\|^2-\|z\|^2\le(\bar\beta^2-1)\|z\|^2=-\frac{s^2}{2}\|z\|^2<0 .
\]
Since $\|z\|\ge\kappa R$, the acceptance probability is at least $c$ on
$\mathsf B_R$, and as $V(z')-V(z)<0$ there,
$\alpha\{V(z')-V(z)\}\le c\{V(z')-V(z)\}$. Therefore
\[
\E_\xi\big[\alpha\{V(z')-V(z)\}\mathbf 1_{\mathsf B_R}\big]
\le-\frac{cs^2}{2}\|z\|^2\,\mathbb P(\mathsf B_R).
\]

On $\mathsf B_R^c$, we use $\alpha\{V(z')-V(z)\}\le\|z'\|^2$ and
$\|z'\|\le\|z\|+C_b+s\sqrt{a_{\max}}\|\xi\|$, so that
$\|z'\|^2\le3(\|z\|^2+C_b^2+s^2a_{\max}\|\xi\|^2)$. Thus we have
\[
\E_\xi\big[\alpha\{V(z')-V(z)\}\mathbf 1_{\mathsf B_R^c}\big]
\le3(\|z\|^2+C_b^2)\,\mathbb P(\mathsf B_R^c)
+3s^2a_{\max}\E_\xi\big[\|\xi\|^2\mathbf 1\{\|\xi\|>R\}\big]
\le\varepsilon_RV(z),
\]
with
$\varepsilon_R=3\max\{1,C_b^2\}\,\mathbb P(\|\xi\|>R)
+3s^2a_{\max}\E[\|\xi\|^2\mathbf 1\{\|\xi\|>R\}]$, which does not depend on
$z$ and tends to $0$ as $R\to\infty$ by dominated convergence with $\|\xi\|^2\mathbf 1\{\|\xi\|>R\}\leq\|\xi\|^2$ and $\E_\xi[\|\xi\|^2]$ finite.

Combining the two parts, for $\|z\|\ge r_2(R)$,
\[
PV(z)-V(z)\le-\frac{cs^2}{2}\|z\|^2\,\mathbb P(\mathsf B_R)+\varepsilon_RV(z).
\]
Since $c$ does not depend on $R$, and since
$\mathbb P(\mathsf B_R)\to1$ and $\varepsilon_R\to0$ as $R\to\infty$, there
are $R_1,R_2\ge1$ such that $\mathbb P(\mathsf B_R)\ge\frac12$ for
$R\ge R_1$ and $\varepsilon_R\le cs^2/16$ for $R\ge R_2$, we then fix
$R=\max\{R_1,R_2\}$. For
$\|z\|\ge1$ we have $\|z\|^2\ge V(z)/2$, so with $r_3=\max\{1,r_2(R)\}$,
\[
PV(z)\le V(z)-\frac{cs^2}{8}V(z)+\frac{cs^2}{16}V(z)
=\Big(1-\frac{cs^2}{16}\Big)V(z),\qquad\|z\|\ge r_3 .
\]
On $\mathsf C=\{\|z\|\le r_3\}$, since a move is either accepted or rejected
and $\alpha\in[0,1]$,
\[
PV(z)=\E_\xi\big[\alpha V(z')+(1-\alpha)V(z)\big]\le\E_\xi V(z')+V(z).
\]
Using $\|z'\|^2\le3(\|z\|^2+C_b^2+s^2a_{\max}\|\xi\|^2)$, $\E_\xi\|\xi\|^2=d$
and $\|z\|\le r_3$,
\[
\E_\xi V(z')+V(z)
=2+\E_\xi\|z'\|^2+\|z\|^2
\le2+3(r_3^2+C_b^2+s^2a_{\max}d)+r_3^2,
\]
so $PV$ is bounded on $\mathsf C$. Together,
$PV\le\lambda V+b\mathbf 1_{\mathsf C}$ with $\lambda=1-cs^2/16<1$ and
$b=\sup_{\mathsf C}PV<\infty$.

\emph{Small sets.} On the bounded set $\mathsf C$, $\pz$ is bounded away from
$0$ and $\infty$ by (A2), and by \eqref{eq:app-perturb} the Gaussian proposal
density is bounded below on $\mathsf C\times\mathsf C$, thus $\mathsf C$ is
small, and since the proposal density is positive everywhere, the chain is
Lebesgue-irreducible and aperiodic.
The drift condition and \citet[Theorem~15.0.1]{meyn2009markov} give the claim.
\end{proof}

For the gradient-based samplers we assume in addition:
\begin{assumption}
\leavevmode
\begin{enumerate}
\item[(A4)] $\sup_z(1+\|z\|)\,\|\nabla\log w(z)\|<\infty$;
\item[(A5)] $w$ is twice continuously differentiable and
$\sup_z(1+\|z\|)^2\,\|\nabla^2\log w(z)\|<\infty$.
\end{enumerate}
\end{assumption}
They require the gradient and curvature of $\log w$ to decay in the tails,
so that the gradient-based proposals return to $\QGI$ far from the origin.

Now we verify \eqref{eq:app-perturb} for each sampler for $\gamma\in(0,1)$.
Write
\[
K_1=\sup_z(1+\|z\|)\,\|\nabla\log w(z)\|,\qquad
K_2=\sup_z(1+\|z\|)^2\,\|\nabla^2\log w(z)\|,
\]
which are finite by (A4) and (A5). TGI-RWM has $b\equiv0$ and $A_z\equiv I$,
so \eqref{eq:app-perturb} holds trivially. TGI-MALA has
$b=\gamma\nabla\log w$ and $A_z\equiv I$, so
$\|b(z)\|\le\gamma K_1/(1+\|z\|)$. For TGI-pMALA, write $h_{z,j}=\partial_j^2\log w(z)$, so that
$a_{z,j}=(1-h_{z,j})^{-1}$ is clipped to $[a_{\min},a_{\max}]$, and
$|h_{z,j}|\le\|\nabla^2\log w(z)\|\le K_2/(1+\|z\|^2)$. Let
$
h_0=\min\Big\{1/2,\ 1/a_{\min}-1,\ 1-1/a_{\max}\Big\}>0 .
$
If $|h_{z,j}|\le h_0$, then $(1-h_{z,j})^{-1}\in[a_{\min},a_{\max}]$,
so the clipping is inactive, and since $1-h_{z,j}\ge\frac12$,
\[
|a_{z,j}-1|=\frac{|h_{z,j}|}{1-h_{z,j}}\le2|h_{z,j}|.
\]
Let $r_0=\max\{0,K_2/h_0-1\}^{1/2}$, so that
$K_2/(1+\|z\|^2)\le h_0$ for $\|z\|\ge r_0$. Then
$|a_{z,j}-1|\le2K_2/(1+\|z\|^2)$ for $\|z\|\ge r_0$, while for $\|z\|<r_0$
the clipping gives
$|a_{z,j}-1|\le c_a\le c_a(1+r_0^2)/(1+\|z\|^2)$ with
$c_a=\max\{a_{\max}-1,1-a_{\min}\}$. Thus
$\|A_z-I\|\le C_E/(1+\|z\|^2)$ with $C_E=\max\{2K_2,c_a(1+r_0^2)\}$, and
$a_{\min}I\preceq A_z\preceq a_{\max}I$ by clipping. Its mean is $z+\gamma A_z\{-z+\nabla\log w(z)\}=\beta z+b(z)$ with
$b(z)=-\gamma E_zz+\gamma A_z\nabla\log w(z)$, and
$\|z\|(1+\|z\|)\le2(1+\|z\|^2)$ gives
$\|b(z)\|\le\gamma(2C_E+a_{\max}K_1)/(1+\|z\|)$. Lemma~\ref{lem:app-perturbed}
applies in all three cases. For $\gamma=1$ the TGI-RWM proposal is $\refm$
itself, and by (A2) the acceptance probability $\min\{1,w(z')/w(z)\}$ is at
least $m/M$, so $P(z,\cdot)\ge(m/M)\refm(\cdot)$ for every $z$ and the chain
is uniformly ergodic \citep[Theorem~16.0.2]{meyn2009markov}; for the
gradient-based proposals, $\beta=0$ removes the contraction used in the
proof, and the acceptance probability may decay with $\|z\|$.

\paragraph{LLN and CLT under (A2).} With $\delta=0$, $M_t\equiv1$ and
$\hat\mu_n$ is the plain average of the summands. We use the chain
$W_t=(Z_t,Z'_t,Z_{t+1})$, which is Markov because $W_{t+1}$ depends on $W_t$
only through $Z_{t+1}$. Its stationary law $\bar\pi$ is the law of
$(Z,Z',Z_{\mathrm{next}})$ with $Z\sim\pz$. Writing $P_W$ for the kernel of
$W$ and $R(z,\cdot)$ for the law of $(z,Z',Z_{\mathrm{next}})$ started at
$z$, we have $P_W^{n+1}(w_0,\cdot)=P_z^n(z_1,\cdot)R$ for
$w_0=(z_0,z'_0,z_1)$, and $\bar\pi=\pz R$, so
$\|P_W^{n+1}(w_0,\cdot)-\bar\pi\|_{\mathrm{TV}}
\le\|P_z^n(z_1,\cdot)-\pz\|_{\mathrm{TV}}$, and $W$ inherits geometric
ergodicity from $Z$ \citep[see also][]{dellaportas2012control}. The summand
$S_t$ is a function of $W_t$.

Under $\bar\pi$, $Z_t$ and $Z_{t+1}$ both have law $\pz$. For an MH sampler,
$Z_{t+1}=Z'_t$ with probability $\alpha_t$ and $Z_{t+1}=Z_t$ otherwise, so
$H_{1,t}=\E[G(Z_{t+1})-G(Z_t)\,|\,Z_t,Z'_t]$; for samplers without an
accept--reject step, $H_{1,t}=G(Z_{t+1})-G(Z_t)$. In both cases
$\bar\pi(H_1)=\pz(G)-\pz(G)=0$. By definition $\E[H_{2,t}\,|\,Z_t]=0$, so
$\bar\pi(H_2)=0$, and $\bar\pi(S)=\pz(F\circ T)=\target(F)$.

It remains to check that $S_t$ has a finite moment of order $2+\epsilon$
under $\bar\pi$, which the central limit theorem for geometrically ergodic
chains requires \citep[Theorem~9]{jones2004markov}. The term $F(T(Z_t))$ is
covered by (A3), and the next lemma handles the control variates.

\begin{lemma}
\label{lem:app-moments}
Under (A2) and (A3), $G\in L^{2+\epsilon}(\pz)\cap L^{2+\epsilon}(\refm)$.
Suppose that the candidate kernel $Q$ satisfies one of the following:
\begin{enumerate}
\item[(a)] $Q$ is $\refm$ invariant;
\item[(b)] $Q(z,\cdot)$ is Gaussian with mean bounded by $C(1+\|z\|)$ and
bounded covariance, and $G$ is a polynomial.
\end{enumerate}
Then $G(Z'_t)$ and $\E_{q(\cdot|Z_t)}[G]$ have finite moments of order
$2+\epsilon$ under $\bar\pi$.
\end{lemma}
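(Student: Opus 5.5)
First, $G\in L^{2+\epsilon}(\pz)\cap L^{2+\epsilon}(\refm)$. Since $G=\sum_{j\in\mathcal K}\eta_j\phi_j$ is a finite combination, Minkowski's inequality together with (A3) gives $G\in L^{2+\epsilon}(\pz)$. Because $\pz=w\refm$ with $w\ge m>0$ by (A2), we have $\E_\refm|G|^{2+\epsilon}\le m^{-1}\E_{\pz}|G|^{2+\epsilon}<\infty$. Conversely $w\le M$ makes the two $L^p$ norms equivalent, which we use repeatedly below.

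\emph{Case (a).} Under $\bar\pi$, $Z_t\sim\pz$ and $Z'_t\sim\pz Q$. Since $\pz\le M\refm$ as measures, $\pz Q\le M\refm Q=M\refm$. Hence
\[
\E|G(Z'_t)|^{2+\epsilon}\le M\,\E_\refm|G|^{2+\epsilon}<\infty .
\]
For the conditional mean, Jensen gives $|\E_{q(\cdot|z)}G|^{2+\epsilon}\le \E_{q(\cdot|z)}|G|^{2+\epsilon}$. Integrating against $\pz$ returns the same bound $M\,\E_\refm|G|^{2+\epsilon}$.

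\emph{Case (b).} Let $\mu(z)$ be the mean and $\Sigma(z)$ the covariance, with $\|\mu(z)\|\le C(1+\|z\|)$ and $\|\Sigma(z)\|\le C'$. Write $Z'=\mu(z)+\Sigma(z)^{1/2}\xi$. Since $G$ is a polynomial of degree $k$, $|G(Z')|\le C''(1+\|z\|+\|\xi\|)^k$. Taking the $(2+\epsilon)$ moment over $\xi$ gives
\[
\E_{q(\cdot|z)}|G|^{2+\epsilon}\le C'''(1+\|z\|)^{k(2+\epsilon)}
\]
for all $z$, using that Gaussian moments of every order are finite. The conditional-mean term is then controlled by Jensen as in case (a). It therefore suffices that $\E_{\pz}\|z\|^{k(2+\epsilon)}<\infty$. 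This holds because $\pz\le M\refm$ and $\refm$, Gaussian in this setting since polynomial $G$ are the Hermite dictionary, has all moments. Conditioning on $Z_t$ and taking the outer expectation under $\pz$ completes case (b).

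The main obstacle is this last step. It uses the upper bound in (A2) to transfer polynomial moments from $\refm$ to $\pz$, and it needs $\refm$ to have polynomial moments. If $\refm$ is not Gaussian, I would instead invoke the form of the dictionary from Appendix~\ref{app:general} to verify the moment bounds there.
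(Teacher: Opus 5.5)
Your proof is correct and follows the paper's argument essentially step for step. You use Minkowski with (A3) and the lower bound $w\ge m$ for the first claim. Then you use $\pz\le M\refm$, combined either with $\refm$-invariance in case (a) or with the polynomial growth bound, Gaussian moments of $\xi$ and polynomial moments of $\refm$ in case (b), with Jensen controlling the conditional mean in both cases. Your remark that case (b) needs $\refm$ to have polynomial moments is accurate; the paper uses this without comment because the lemma sits in the Gaussian-reference setting.
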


\begin{proof}
Since $G$ is a finite combination of the $\phi_j$, (A3) and Minkowski's
inequality give $G\in L^{2+\epsilon}(\pz)$, and since $\refm\le\pz/m$ by
(A2), also $G\in L^{2+\epsilon}(\refm)$. Under $\bar\pi$,
$\E|G(Z')|^{2+\epsilon}=\int\E_{q(\cdot|z)}|G|^{2+\epsilon}\,\pz(dz)$, and
Jensen's inequality bounds $\E|\E_{q(\cdot|Z)}[G]|^{2+\epsilon}$ by the same
quantity, so it suffices to show that this integral is finite. Under (a),
$\pz\le M\refm$ and the invariance of $\refm$ give
$\int\E_{q(\cdot|z)}|G|^{2+\epsilon}\,\pz(dz)
\le M\refm(|G|^{2+\epsilon})<\infty$. Under (b), write $z'=\mu(z)+\Sigma(z)^{1/2}\xi$ with $\xi\sim\gauss(0,I_d)$,
$\|\mu(z)\|\le C(1+\|z\|)$ and $\|\Sigma(z)\|\le C_\Sigma$. If $G$ has degree
$k$, then $|G(x)|\le C_G(1+\|x\|)^k$ and
$1+\|z'\|\le(1+C+\sqrt{C_\Sigma}\|\xi\|)(1+\|z\|)$, combining the two bounds and taking expectations over $\xi$,
\[
\E_{q(\cdot|z)}|G|^{2+\epsilon}
\le C_G^{2+\epsilon}\,\E_\xi\Big[\big(1+C+\sqrt{C_\Sigma}\,\|\xi\|\big)^{k(2+\epsilon)}\Big]\,(1+\|z\|)^{k(2+\epsilon)}
=C'(1+\|z\|)^{k(2+\epsilon)},
\]
where $C'<\infty$ since $\|\xi\|$ has moments of all orders. Since
$\pz\le M\refm$,
$
\int\E_{q(\cdot|z)}|G|^{2+\epsilon}\,\pz(dz)
\le MC'\,\E_{\refm}\big[(1+\|z\|)^{k(2+\epsilon)}\big]<\infty .
$
\end{proof}

TGI-RWM satisfies (a), since $\QGI$ is $\refm$-reversible, and TGI-MALA and
TGI-pMALA satisfy (b) with the Hermite dictionary. For samplers without an accept--reject step,
(a) typically holds as well, for example, the initial point on the ellipse
in elliptical slice sampling is $\refm$ invariant.

Since $|\alpha_t|\le1$, (A3) and Lemma~\ref{lem:app-moments} give $S_t$ a
finite moment of order $2+\epsilon$ under $\bar\pi$. The ergodic theorem for
Harris-recurrent chains gives $\hat\mu_n\to\target(F)$ almost surely
\citep[Theorem~17.0.1]{meyn2009markov}, and the central limit theorem for
geometrically ergodic chains with $2+\epsilon$ moments
\citep[Theorem~9]{jones2004markov} gives
$\sqrt n(\hat\mu_n-\target(F))\to\gauss(0,\sigma^2_\infty)$ with
$\sigma^2_\infty<\infty$.

\paragraph{Ergodicity under (A2')} We first translate the first part of
(A2$'$) into growth bounds.

\begin{lemma}
\label{lem:app-growth}
Let $T$ be a composition of $L$ affine coupling layers
$y_A=x_A$, $y_B=x_B\odot e^{s(x_A)}+t(x_A)$ with $\|s\|_\infty\le s_{\max}$
and $\|t\|\le C_t$, followed by an invertible coordinatewise affine layer.
Then there are constants $c_0,c_1,c_2,D>0$ such that
$c_1\|z\|-c_0\le\|T(z)\|\le c_2\|z\|+c_0$ and $|\log|\det\nabla T(z)||\le D$
for all $z$.
\end{lemma}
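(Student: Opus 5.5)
The plan is to track a point through the layers and bound each layer separately, then compose the bounds. Write $T=\Lambda\circ C_L\circ\dots\circ C_1$, where $C_\ell$ is the $\ell$-th coupling layer and $\Lambda(x)=Dx+e$ is the final coordinatewise affine layer, with $D$ diagonal and invertible. The idea is to show that each layer is a bi-Lipschitz-up-to-constants map with a bounded log-Jacobian. The three conclusions then follow, because such estimates are stable under composition.

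\emph{Step 1 (one coupling layer).} Let $y=C(x)$, with $y_A=x_A$ and $y_B=x_B\odot e^{s(x_A)}+t(x_A)$. Since $e^{-s_{\max}}\le e^{s_j}\le e^{s_{\max}}$ coordinatewise, we get
\[
\|x_B\odot e^{s(x_A)}\|\le e^{s_{\max}}\|x_B\|,\qquad \|x_B\odot e^{s(x_A)}\|\ge e^{-s_{\max}}\|x_B\|.
\]
Adding the shift, bounded by $C_t$, gives $\|y_B\|\le e^{s_{\max}}\|x_B\|+C_t$ and $\|y_B\|\ge e^{-s_{\max}}\|x_B\|-C_t$. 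Combining with $y_A=x_A$ and $\|x\|\le\|x_A\|+\|x_B\|\le\sqrt2\|x\|$ yields
\[
e^{-s_{\max}}\|x\|/\sqrt2-C_t\;\le\;\|y\|\;\le\;\sqrt2\,e^{s_{\max}}\|x\|+C_t .
\]
For the determinant: the Jacobian of the coupling layer is block triangular, with identity on $A$ and $\mathrm{diag}(e^{s(x_A)})$ on $B$. Hence $\log|\det\nabla C(x)|=\sum_{j\in B}s_j(x_A)$, whose absolute value is at most $|B|s_{\max}\le d\,s_{\max}$. Here I assume $s,t$ are $C^1$ so that $\nabla T$ exists, consistent with (A1).

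\emph{Step 2 (final affine layer).} The final layer satisfies
\[
\min_j|D_{jj}|\,\|x\|-\|e\|\le\|\Lambda x\|\le\max_j|D_{jj}|\,\|x\|+\|e\|,
\]
with the constant log-determinant $\sum_j\log|D_{jj}|$.

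\emph{Step 3 (composition).} Suppose maps $f,g$ satisfy $a_f\|x\|-b_f\le\|f(x)\|\le A_f\|x\|+b_f$, and similarly for $g$. Then the upper bound for $g\circ f$ is
\[
\|g(f(x))\|\le A_gA_f\|x\|+A_gb_f+b_g .
\]
For the lower bound, start from $a_g\|f(x)\|-b_g$ and insert the lower bound for $\|f(x)\|$ to get $a_ga_f\|x\|-a_gb_f-b_g$. This step only uses that $a_g>0$ and that the lower bound for $\|g(\cdot)\|$ is monotone in its argument's norm. Induction over the $L+1$ layers gives $c_1,c_2,c_0$, and the log-determinants add by the chain rule, so
\[
D=L\,d\,s_{\max}+\Big|\sum_j\log|D_{jj}|\Big| .
\]

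The only mildly delicate point is the lower bound. It must be composed in the right order, and negative intermediate bounds are harmless since $\|\cdot\|\ge0$. Permutations between layers, if present, are isometries and change nothing.
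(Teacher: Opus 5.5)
Your proof is correct and matches the paper's argument. Both bound each layer's growth linearly, compose these bounds through the $L$ coupling layers and the final affine layer, and read off the log-determinant from the block-triangular Jacobian as a sum of log-scales, at most $d\,s_{\max}$ per layer, plus a constant. The only variation is the lower bound. You derive it directly from $e^{s_j}\ge e^{-s_{\max}}$ and compose lower bounds using monotonicity. The paper instead bounds the inverse layers $x_B=(y_B-t(y_A))\odot e^{-s(y_A)}$ from above, composes these to $\|T^{-1}(x)\|\le c_2'\|x\|+c_0'$, and rearranges. The two are equivalent. You should also rename the diagonal matrix of the final layer, since you use the letter $D$ for both it and the determinant constant.
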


\begin{proof}
Each coupling layer satisfies $\|y\|\le e^{s_{\max}}\|x\|+C_t$, and its
inverse $x_B=(y_B-t(y_A))\odot e^{-s(y_A)}$ satisfies
$\|x\|\le e^{s_{\max}}\|y\|+e^{s_{\max}}C_t$. Composing the $L$ layers and the
final affine layer gives $\|T(z)\|\le c_2\|z\|+c_0$ and
$\|T^{-1}(x)\|\le c_2'\|x\|+c_0'$, and the second bound gives
$\|T(z)\|\ge\|z\|/c_2'-c_0'/c_2'$; enlarging $c_0$ if needed gives the stated
form with $c_1=1/c_2'$. The log-determinant is the sum of the log-scales plus
a constant, so it is bounded by $Lds_{\max}$ plus that constant.
\end{proof}

Write $P_0$ for the TGI kernel and $P_\nu$ for the independence kernel with
proposal $q_\nu=t_\nu(0,I_d)$, so that $P=(1-\delta)P_0+\delta P_\nu$. Both
are MH kernels with target $\pz$, so $P$ is $\pz$-invariant. We show that
$\pz\le M_\nu q_\nu$. By Lemma~\ref{lem:app-growth},
$\pz(z)=\target(T(z))|\det\nabla T(z)|\le e^D\target(T(z))$. For
$\|z\|\ge R_0=2c_0/c_1$ we have $\|T(z)\|\ge c_1\|z\|/2$, so by (A2$'$)
\[
\pz(z)\le e^DC_\target\Big(1+\frac{c_1}{2}\|z\|\Big)^{-(d+\nu)}
\le e^DC_\target\max\Big\{1,\Big(\frac{2}{c_1}\Big)^{d+\nu}\Big\}(1+\|z\|)^{-(d+\nu)},
\]
while for $\|z\|<R_0$, $\pz(z)\le e^D\sup\target$. Since
$1+\|z\|^2/\nu\le\max\{1,1/\nu\}(1+\|z\|)^2$, the $t$ density satisfies
$q_\nu(z)\ge c_\nu(1+\|z\|)^{-(d+\nu)}$ for a constant $c_\nu>0$, so
$M_\nu=\sup_z\pz(z)/q_\nu(z)<\infty$. The independence kernel then
satisfies, for every $z$,
\[
P_\nu(z,dz')\ge\min\Big\{q_\nu(z'),\frac{\pz(z')\,q_\nu(z)}{\pz(z)}\Big\}dz'
=\pz(z')\min\Big\{\frac{q_\nu(z')}{\pz(z')},\frac{q_\nu(z)}{\pz(z)}\Big\}dz'
\ge\frac{1}{M_\nu}\,\pz(dz'),
\]
hence $P(z,\cdot)\ge\delta P_\nu(z,\cdot)\ge(\delta/M_\nu)\,\pz(\cdot)$. The
whole space is therefore small, and
$\sup_z\|P^n(z,\cdot)-\pz\|_{\mathrm{TV}}\le(1-\delta/M_\nu)^n$
\citep[Theorem~16.2.4]{meyn2009markov}. The argument uses only the
$\pz$-invariance of $P_0$, so it holds for every $\gamma\in(0,1]$ and for
all three TGI samplers, without (A4) or (A5).

\paragraph{LLN and CLT under (A2')} Let $B_t\in\{0,1\}$ indicate an
independence step, so $M_t=1-B_t$, and augment the chain  to
$W_t=(B_t,Z_t,Z'_t,Z_{t+1})$. Given $Z_{t+1}$, the next coin, candidate and
state are drawn with fresh randomness, so $W$ is a Markov chain, and as
before $\|P_W^{n+1}(w_0,\cdot)-\bar\pi\|_{\mathrm{TV}}
\le\|P^n(z_1,\cdot)-\pz\|_{\mathrm{TV}}$, so $W$ is uniformly ergodic, hence
positive Harris recurrent. Write $\mu=\target(F)$ and
\[
\hat\mu_n-\mu=\frac{n^{-1}\sum_tM_t(S_t-\mu)}{n^{-1}\sum_tM_t}.
\]
Since $B_t$ is independent of $Z_t$, under $\bar\pi$ we have $Z_t\sim\pz$
given $B_t=0$, and the transition is then $P_0$. Under
(A2), $\E[H_{1,t}\,|\,B_t=0]=\pz(P_0G)-\pz(G)=0$ by the invariance of $P_0$,
and $\E[H_{2,t}\,|\,Z_t,B_t=0]=0$ by definition, so
$\E[S_t\,|\,B_t=0]=\mu$ and $f(W_t)=M_t(S_t-\mu)$ has $\bar\pi(f)=0$. For the
moments, the bound $\pz\le M\refm$ of (A2) is replaced by
$\pz\le M_\nu q_\nu$. For the gradient-based samplers we assume, instead of
(A4) and (A5), only

\begin{assumption}
(A4$'$) $\|\nabla\log w(z)\|\le C_w(1+\|z\|)^r$ for some $r\ge1$.
\end{assumption}

which allows $w$ to grow in the tails. Since $A_z$ is clipped to
$[a_{\min},a_{\max}]$, the TGI candidate given $z$ is then Gaussian with
mean $m_0(z)$, $\|m_0(z)\|\le C_0(1+\|z\|)^r$, and covariance
$\Sigma_0(z)\preceq s^2a_{\max}I$, with $r=1$ for TGI-RWM.

\begin{lemma}
\label{lem:app-moments-mix}
Under (A2$'$), if the TGI candidate is as above, $G$ is a polynomial of
degree at most $K$ and $\nu>Kr(2+\epsilon)$, then $M_tG(Z_t)$,
$M_tG(Z'_t)$ and $M_t\E_{q_0}[G\,|\,Z_t]$ have finite moments of order
$2+\epsilon$ under $\bar\pi$.
\end{lemma}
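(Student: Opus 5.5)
The plan is to reduce all three moment bounds to one polynomial-moment estimate under $\pz$, and to get that estimate from the $t$-domination $\pz\le M_\nu q_\nu$ established above.

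\emph{Growth bound for $G$.} Since $G$ is a polynomial of degree at most $K$, there is $C_G$ with $|G(x)|\le C_G(1+\|x\|)^K$. Because $M_t\in\{0,1\}$, it suffices to bound the moments without the factor $M_t$ on the event $B_t=0$. Under $\bar\pi$, $Z_t\sim\pz$ independently of $B_t$, so conditioning on $B_t=0$ changes nothing.

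\emph{The term $G(Z_t)$.} Here
\[
\E|G(Z_t)|^{2+\epsilon}\le C_G^{2+\epsilon}\,\pz\big[(1+\|z\|)^{K(2+\epsilon)}\big]\le C_G^{2+\epsilon}M_\nu\, q_\nu\big[(1+\|z\|)^{K(2+\epsilon)}\big].
\]
The last factor is finite because $t_\nu$ has finite moments of every order below $\nu$, and $\nu>Kr(2+\epsilon)\ge K(2+\epsilon)$ since $r\ge1$.

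\emph{The candidate terms.} Given $Z_t=z$ and $B_t=0$, write $z'=m_0(z)+\Sigma_0(z)^{1/2}\xi$. Then
\[
1+\|z'\|\le 1+C_0(1+\|z\|)^r+s\sqrt{a_{\max}}\|\xi\|\le\big(1+C_0+s\sqrt{a_{\max}}\|\xi\|\big)(1+\|z\|)^r,
\]
using $r\ge1$ and $1+\|z\|\ge1$. This gives
\[
\E_{q_0(\cdot|z)}|G|^{2+\epsilon}\le C'(1+\|z\|)^{Kr(2+\epsilon)},
\]
where $C'$ absorbs the Gaussian moments of $\|\xi\|$. The bound on $\E|M_t\E_{q_0}[G\,|\,Z_t]|^{2+\epsilon}$ follows from the bound on $\E|M_tG(Z'_t)|^{2+\epsilon}$ by conditional Jensen, exactly as in Lemma~\ref{lem:app-moments}. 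Integrating against $\pz\le M_\nu q_\nu$ leaves
\[
q_\nu\big[(1+\|z\|)^{Kr(2+\epsilon)}\big],
\]
which is finite precisely when $Kr(2+\epsilon)<\nu$. This is the hypothesis.

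The main obstacle is making this last step tight. The $t_\nu$ density decays like $(1+\|z\|)^{-(d+\nu)}$, so the integral of $(1+\|z\|)^{p}$ against it converges if and only if $p<\nu$, by polar coordinates. I need the domination $\pz\le M_\nu q_\nu$ from the preceding paragraph, not the bound $\pz\le M\refm$ of (A2), which is unavailable here.

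A second point to check is the definition of $Z'_t$ when $B_t=1$. In that case the candidate comes from $q_\nu$, but the factor $M_t=0$ kills it, so no moment of the $t$-proposal's polynomial is needed. This is important, since $q_\nu[(1+\|z\|)^{K(2+\epsilon)}]$ would otherwise also be required.
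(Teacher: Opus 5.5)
Your proposal is correct and follows essentially the same route as the paper. It uses the polynomial growth bound on $G$, the estimate $1+\|z'\|\le(1+C_0+s\sqrt{a_{\max}}\|\xi\|)(1+\|z\|)^r$ for the TGI candidate, conditional Jensen for $\E_{q_0}[G\,|\,Z_t]$, and integration against $\pi_z\le M_\nu q_\nu$, which is finite because $Kr(2+\epsilon)<\nu$; your observation that $M_t=0$ removes any need for moments of $G$ under $q_\nu$ is the paper's closing remark as well.
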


\begin{proof}
Let $p=2+\epsilon$. Given $B_t=0$, the pair $(Z_t,Z'_t)$ has law
$\pz(dz)q_0(z,dz')$. Since $|G(x)|\le C_G(1+\|x\|)^K$, writing
$z'=m_0(z)+\Sigma_0(z)^{1/2}\xi$ and using
$1+\|z'\|\le(1+C_0+s\sqrt{a_{\max}}\|\xi\|)(1+\|z\|)^r$,
\[
\E_{q_0(\cdot|z)}|G|^p\le C'(1+\|z\|)^{Kpr},
\]
since $\|\xi\|$ has moments of all orders, and the same bound holds for
$|G(z)|^p$. Since $\pz\le M_\nu q_\nu$ and $q_\nu$ has finite moments of
every order below $\nu$,
\[
\int\E_{q_0(\cdot|z)}|G|^p\,\pz(dz)
\le M_\nu C'\,\E_{q_\nu}\big[(1+\|z\|)^{Kpr}\big]<\infty
\]
because $Kpr<\nu$; Jensen's inequality bounds $\E|\E_{q_0}[G\,|\,Z_t]|^p$
by the same quantity. Independence steps do not enter $\hat\mu_n$, so no
moment of $G$ under $q_\nu$ is needed.
\end{proof}

Since $|\alpha_t|\le1$, (A3) and Lemma~\ref{lem:app-moments-mix} give
$M_tS_t$ a finite moment of order $2+\epsilon$. By the ergodic theorem
\citep[Theorem~17.0.1]{meyn2009markov}, the numerator tends to $0$ and the
denominator to $1-\delta>0$ almost surely, so $\hat\mu_n\to\mu$ almost
surely. By the central limit theorem \citep[Theorem~9]{jones2004markov},
$n^{-1/2}\sum_tM_t(S_t-\mu)\to\gauss(0,\sigma_f^2)$ with $\sigma_f^2<\infty$,
and Slutsky's lemma gives
$\sqrt n(\hat\mu_n-\mu)\to\gauss(0,\sigma_f^2/(1-\delta)^2)$. \qed

\subsubsection{TESS}

\paragraph{Ergodicity.}
Under (A2) and (A2$^\ast$) with $\delta=0$, (A2) and the continuity in
(A2$^\ast$) make $w$ bounded away from $0$ and $\infty$ on every compact
set, and (A2$^\ast$) is the tail condition of
\citet[Theorem~1]{natarovskii2021geometric}, which then gives
$\|P^n(z,\cdot)-\pz\|_{\mathrm{TV}}\le C\kappa^n(1+\|z\|)$ for some
$C<\infty$ and $\kappa\in(0,1)$. The chain is therefore geometrically
ergodic, and in particular positive Harris recurrent. Under (A2$'$) with
$\delta\in(0,1)$, the minorization argument of the proof in
Appendix~\ref{pf:app:tgi} uses only the $\pz$-invariance of the kernel
that the independence step is mixed with, so it applies to TESS
unchanged and gives uniform ergodicity.

\paragraph{Estimator.}
The augmented chain $W_t=(Z_t,Z'_t,Z_{t+1})$, together with the
indicator $B_t$ of an independence step when $\delta>0$, is a Markov
chain, since given $Z_{t+1}$ the next angle, candidate, threshold and
shrinkage draws are fresh; it inherits the ergodicity of $\{Z_t\}$ as in
the proof in Appendix~\ref{pf:app:tgi}. Under its stationary law $Z_t$
and $Z_{t+1}$ both have law $\pz$, so $H_{1,t}=G(Z_{t+1})-G(Z_t)$ has
mean zero, and $H_{2,t}$ has conditional mean zero given $Z_t$ by
construction.

It remains to bound the moments of order $2+\epsilon$. The terms
$F(T(Z_t))$, $G(Z_t)$ and $G(Z_{t+1})$ are covered by (A3). For the
candidate $Z'_t=Z_t\cos\vartheta_t+V_t\sin\vartheta_t$, conditionally on
$(Z_t,\vartheta_t)$ it has law
$\gauss(Z_t\cos\vartheta_t,\sin^2\vartheta_t\,I_d)$. Under (A2) the
candidate kernel is $\refm$-invariant (Appendix~\ref{app:ref-ess}), so
Lemma~\ref{lem:app-moments}(a) applies. Under (A2$'$), the mean has norm at
most $\|Z_t\|$ and the covariance is at most $I_d$, uniformly in
$\vartheta_t$, so $1+\|Z'_t\|\le(1+\|\xi\|)(1+\|Z_t\|)$ with
$\xi\sim\gauss(0,I_d)$ and
\[
\E\big[|G(Z'_t)|^{2+\epsilon}\,\big|\,Z_t,\vartheta_t\big]
\le C'(1+\|Z_t\|)^{K(2+\epsilon)},
\]
which is integrable under $\pz\le M_\nu q_\nu$ because $\nu>K(2+\epsilon)$,
as in Lemma~\ref{lem:app-moments-mix} with $r=1$; Jensen's inequality
bounds $\E[G(Z'_t)\mid Z_t]$ in the same way. The ergodic theorem
\citep[Theorem~17.0.1]{meyn2009markov} and the central limit theorem for
geometrically ergodic chains \citep[Theorem~9]{jones2004markov}, applied
as in the proof of Theorem~\ref{thm:validity}, give both conclusions.

\subsection{Proof of Proposition~\ref{prop:pruning}}
\label{pf:app:pruning}

Since $w\equiv1$, we have $\pz=\refm$, and by (C1) and (C3) the chain has
kernel $Q$ with $Z'_t=Z_{t+1}$, so $\E_{q(\cdot|Z_t)}[G]=QG(Z_t)$. With
$\theta=(1,-1)$,
$\theta_1H_{1,t}+\theta_2H_{2,t}=\{G(Z_{t+1})-G(Z_t)\}-\{G(Z_{t+1})-QG(Z_t)\}
=QG(Z_t)-G(Z_t)$ for any coefficients $\eta$, so
$S_t=F(T(Z_t))+QG(Z_t)-G(Z_t)$. The asymptotic variance does not depend on
the initial distribution, so we compute it under stationarity,
$Z_0\sim\refm$. Since $F\circ T-\target(F)=\sum_{j\ge1}c_j\phi_j$ and
$QG=\sum_{j\in\mathcal K}\lambda_j\eta_j\phi_j$,
\[
S_t-\target(F)=\sum_{j\ge1}e_j\phi_j(Z_t),\qquad
e_j=\begin{cases}c_j-(1-\lambda_j)\eta_j,&j\in\mathcal K,\\ c_j,&j\notin\mathcal K.\end{cases}
\]
Since
$\mathrm{Cov}(\phi_j(Z_0),\phi_k(Z_t))=\langle\phi_j,Q^t\phi_k\rangle_{\refm}=\delta_{jk}\lambda_j^t$,
we have $\mathrm{Cov}(S_0,S_t)=\sum_{j\ge1}e_j^2\lambda_j^t$, and
\[
\sigma^2_\infty=\mathrm{Var}(S_0)+2\sum_{t\ge1}\mathrm{Cov}(S_0,S_t)
=\sum_{j\ge1}e_j^2\Big(1+2\sum_{t\ge1}\lambda_j^t\Big)=\sum_{j\ge1}v_je_j^2,
\]
where the interchange of sums is justified by
$\sum_{j\ge1}\sum_{t\ge1}e_j^2|\lambda_j|^t\le\bar\lambda(1-\bar\lambda)^{-1}\sum_{j\ge1}e_j^2<\infty$.
For $j\in\mathcal K$,
$e_j^2v_j=(1-\lambda_j)^2\{\eta_j-c_j/(1-\lambda_j)\}^2(1+\lambda_j)/(1-\lambda_j)
=(1-\lambda_j^2)\{\eta_j-c_j/(1-\lambda_j)\}^2$, which gives \eqref{eq:separable}.
With $\eta_j=\hat c_j/(1-\lambda_j)$, we have
$\eta_j-c_j/(1-\lambda_j)=(\hat c_j-c_j)/(1-\lambda_j)$, whose second moment
is $\Var(\hat c_j)/(1-\lambda_j)^2=s_j^2/\{N(1-\lambda_j)^2\}$ since $\hat c_j$
is unbiased, multiplying by $1-\lambda_j^2$ gives $v_js_j^2/N$. Since all terms are nonnegative, taking
expectations term by term gives \eqref{eq:expected}; no independence between
the $\hat c_j$ is needed. \qed

\begin{remark}
\label{rem:app-prune}
Since the terms of \eqref{eq:expected} are separable, keeping index $j$
changes the expected asymptotic variance by $v_j(s_j^2/N-c_j^2)$, so it is
beneficial exactly when $c_j^2>s_j^2/N$; under a budget $|\mathcal K|\le k$,
the expected variance is minimized by keeping the indices with the largest
positive scores $v_j(c_j^2-s_j^2/N)$. The total estimation noise in
\eqref{eq:expected} is at most
$N^{-1}|\mathcal K|\max_{j\in\mathcal K}v_j\max_{j\in\mathcal K}s_j^2$. For the
Gaussian reference, $v_\kappa=(1+\beta^{|\kappa|})/(1-\beta^{|\kappa|})$
decreases in $|\kappa|$ for $\beta\in[0,1)$, so
$v_\kappa\le v_1=(2-\gamma)/\gamma$ for $\gamma\in(0,1]$.
\end{remark}

\newpage
\section{REFERENCE KERNELS AND SAMPLERS}
\label{app:general}

Section~\ref{sec:reference} develops the construction for the Gaussian
reference, where $\QGI$ has the Hermite polynomials as eigenfunctions. The
same argument applies to any reference pair $(\refm,Q)$ with the
structure of (C1): the MH sampler that proposes from $Q$ and accepts with
probability $\min\{1,w(z')/w(z)\}$ targets $\pz=w\refm$, reduces to $Q$ when
$w\equiv1$, and satisfies (C2) since $\E_{q(\cdot|z)}[\phi_j]=\lambda_j\phi_j(z)$.
This appendix lists further reference pairs together with their spectral
decompositions, and treats samplers without an accept--reject step. The degenerate pair with $Q=\refm$, for which $\lambda_j=0$ for all
$j\ge1$, is importance sampling from the flow; it has no chain, so (C3)
is void and the control variate of Section~\ref{sec:estimator} is
applied as in \eqref{eq:snis-cv}. With
$\theta=(1,-1)$, the summand of \eqref{eq:twoterm} for an MH sampler is
\[
S_t=F(T(Z_t))+\E_{q(\cdot|Z_t)}[G]-G(Z_t)-(1-\alpha_t)\{G(Z'_t)-G(Z_t)\}.
\]
For samplers without an accept--reject step, $H_{1,t}=G(Z_{t+1})-G(Z_t)$ and
\[
S_t=F(T(Z_t))+\E_{q(\cdot|Z_t)}[G]-G(Z_t)+G(Z_{t+1})-G(Z'_t),
\]
where $Z'_t$ is an auxiliary point with known conditional law
$q(\cdot\mid Z_t)$; for elliptical slice sampling, $Z'_t$ is the initial
point on the ellipse. When $w\equiv1$, $\alpha_t=1$ and $Z'_t=Z_{t+1}$ in
both cases, so $S_t=F(T(Z_t))+QG(Z_t)-G(Z_t)$. Unless stated otherwise, the
numerical checks use $1000$ independent chains of $5000$ steps started from
the stationary distribution.

\subsection{Gaussian Reference}
\label{app:ref-gaussian}
\label{app:alg-samplers}

Let $\refm=\gauss(0,I_d)$ and let $\QGI$ be the kernel \eqref{eq:cn} with
$\gamma\in(0,2)$, so that $\beta\in(-1,1)$. For $z\sim\refm$ and
$z'\sim\QGI(z,\cdot)$, the pair $(z,z')$ is Gaussian with covariance
$\left(\begin{smallmatrix}I&\beta I\\ \beta I&I\end{smallmatrix}\right)$, which
is symmetric under exchange, so $\QGI$ is $\refm$-reversible. By
\eqref{eq:app-hermite-cond} and the orthonormality of $\{\Herm_\kappa\}$,
(C1) holds with $\phi_\kappa=\Herm_\kappa$, $\lambda_\kappa=\beta^{|\kappa|}$
and $\bar\lambda=\sup_{k\ge1}|\beta|^k=|\beta|$. All eigenvalues are
nonnegative if and only if $\beta\ge0$, so the transformed kernel is
positive semidefinite if and only if $\gamma\le1$, in which case its
spectral gap is $1-\beta=\gamma$; for $\gamma\in(1,2)$ its absolute spectral
gap is $1-|\beta|=2-\gamma$. When $w\equiv1$, $\nabla\log w=0$ and
$\partial_j^2\log w=0$, so the proposals of TGI-MALA and TGI-pMALA reduce
to $\QGI$, for TGI-pMALA because the clipping interval contains one; every
proposal is then accepted and the kernel is $\QGI$, so all three TGI
samplers satisfy (C1) and (C3). TGI-RWM satisfies (C2) because it proposes
from $\QGI$, and TGI-MALA and TGI-pMALA satisfy (C2) by
Lemma~\ref{lem:condmean} below.

The three TGI samplers of Section~\ref{sec:reference} are summarized in
Table~\ref{tab:variants}.

\begin{table}[h]
\centering
\caption{The TGI samplers on the Gaussian reference. The proposal is
$\gauss(\text{mean},\text{cov})$ in the latent space. Cost counts one
evaluation of $\log\target$ as one unit and a joint evaluation with its
gradient as two.}
\label{tab:variants}
\small
\begin{tabular}{@{}lllcc@{}}
\toprule
Sampler & Mean & Covariance & Derivatives & Cost per step \\
\midrule
TGI-RWM & $\beta z$ & $(1-\beta^2)I$ & none & $1$ \\
TGI-MALA & $\beta z+\gamma\nabla\log w(z)$ & $(1-\beta^2)I$ & gradient & $2$ \\
TGI-pMALA & $z+\gamma A_z\nabla\log\pz(z)$ & $(1-\beta^2)A_z$ & diagonal Hessian & $2+2d$ \\
\bottomrule
\end{tabular}
\end{table}

Each sampler evaluates $\log w$ at the candidate through \eqref{eq:residual},
that is, one forward pass of $T$ with its log-determinant and one evaluation
of $\log\target$; the gradient-based samplers add one backward pass. For
TGI-MALA and TGI-pMALA the proposal density is asymmetric, so the acceptance
probability includes the ratio of forward and reverse proposal densities,
which requires $\nabla\log w$ and $A_z$ at the candidate as well. These are
stored and reused at the next step when the candidate is accepted.

TGI-pMALA uses $A_z=\mathrm{diag}(a_{z,1},\dots,a_{z,d})$ with
$a_{z,j}=(1-h_{z,j})^{-1}$ and $h_{z,j}=\partial_j^2\log w(z)$. The
diagonal of the Hessian is computed with $d$ Hessian-vector products through
$\log\target\circ T$ and the log-determinant, and $a_{z,j}$ is clipped to
$[a_{\min},a_{\max}]$ with $a_{\min}<1<a_{\max}$. This is the diagonal form of the preconditioner
of \citet[Eq.~26]{titsias2026gaussian}, who average the $h_{z,j}$ because
their prior covariance is not diagonal.

In $x$-space, linearizing $T$ at $z$ with $J=\nabla T(z)$ gives
$x'-x\approx J(z'-z)$, so a move has noise covariance $(1-\beta^2)JJ^\top$ and,
for TGI-MALA, drift $\gamma JJ^\top\nabla_x\log\target(x)$ up to a term from
$\log|\det J|$. The map therefore acts as the position-dependent
preconditioner $JJ^\top$, which equals $\Sigma$ for the affine map to
$\gauss(\mu,\Sigma)$. For TGI-pMALA the preconditioner becomes $JA_zJ^\top$,
so $A_z$ corrects coordinate-wise scale errors of the map, but not shear or
correlation, which the map itself must capture.

For the gradient-based samplers, the conditional expectation in $H_{2,t}$ is
available in closed form.

\begin{lemma}
\label{lem:condmean}
Let $Z'\sim\gauss\big(\mu,\mathrm{diag}(\sigma_1^2,\dots,\sigma_d^2)\big)$ and
$\tau_j=1-\sigma_j^2$, which may be negative. Then
\begin{equation}
\E[\Herm_\kappa(Z')]=\prod_{j=1}^d\frac{1}{\sqrt{\kappa_j!}}
\sum_{i=0}^{\lfloor\kappa_j/2\rfloor}
\frac{(-1)^i\,\kappa_j!}{i!\,(\kappa_j-2i)!\,2^i}\;\mu_j^{\kappa_j-2i}\,\tau_j^{\,i},
\label{eq:condmean}
\end{equation}
and if $\tau_j=\tau>0$ for all $j$ this equals
$\tau^{|\kappa|/2}\Herm_\kappa(\mu/\sqrt\tau)$. TGI-MALA has
$\mu=\beta z+\gamma\nabla\log w(z)$ and $\sigma_j^2=1-\beta^2$; TGI-pMALA has
$\mu=z+\gamma A_z\nabla\log\pz(z)$ and $\sigma_j^2=(1-\beta^2)a_{z,j}$.
\end{lemma}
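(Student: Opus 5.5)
The plan is to reduce to one dimension by independence, then compute the one-dimensional Gaussian moments from the explicit expansion of $\he_n$. Since the covariance is diagonal, the coordinates $Z'_j\sim\gauss(\mu_j,\sigma_j^2)$ are independent. Because $\Herm_\kappa(Z')=\prod_j\he_{\kappa_j}(Z'_j)/\sqrt{\kappa_j!}$, the expectation factorizes into $\prod_j\E[\he_{\kappa_j}(Z'_j)]/\sqrt{\kappa_j!}$. It therefore suffices to show, for $Y\sim\gauss(m,\sigma^2)$ and $\tau=1-\sigma^2$, that
\[
\E[\he_n(Y)]=\sum_{i=0}^{\lfloor n/2\rfloor}\frac{(-1)^i n!}{i!(n-2i)!2^i}m^{n-2i}\tau^i .
\]

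For this I use the generating function \eqref{eq:app-hermite-gen}, exactly as in the derivation of \eqref{eq:app-hermite-cond}. Write $Y=m+\sigma\xi$ and use $\E[e^{t\sigma\xi}]=e^{t^2\sigma^2/2}$. This gives
\[
\sum_n\E[\he_n(Y)]\frac{t^n}{n!}=\exp\{tm-t^2/2+t^2\sigma^2/2\}=\exp\{tm-\tau t^2/2\}.
\]
The exchange of sum and expectation is justified by the same dominated-convergence bound $\sum_n|\he_n(y)||t|^n/n!\le e^{|t||y|+t^2/2}$, which is integrable under a Gaussian. Expanding $e^{tm}\,e^{-\tau t^2/2}=\sum_k (tm)^k/k!\,\sum_i(-\tau/2)^it^{2i}/i!$ and collecting the coefficient of $t^n/n!$ with $k=n-2i$ yields the displayed sum. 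This step is valid for any real $\tau$, including $\tau<0$, because both series converge absolutely for all $t$. Taking the product over $j$ gives \eqref{eq:condmean}.

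For the special case $\tau_j=\tau>0$, I compare with the explicit formula obtained by setting $\tau=1$, $m=x$, which is the same expansion of \eqref{eq:app-hermite-gen}:
\[
\he_n(x)=\sum_i\frac{(-1)^i n!}{i!(n-2i)!2^i}x^{n-2i}.
\]
Then $\tau^{n/2}\he_n(m/\sqrt\tau)=\sum_i\frac{(-1)^i n!}{i!(n-2i)!2^i}m^{n-2i}\tau^{i}$, since $\tau^{n/2}\tau^{-(n-2i)/2}=\tau^i$. Taking the product over coordinates with the $1/\sqrt{\kappa_j!}$ normalizations gives $\tau^{|\kappa|/2}\Herm_\kappa(\mu/\sqrt\tau)$.

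The final sentence of the lemma only reads off the proposal means and variances from Table~\ref{tab:variants}, using $1-\beta^2=2\gamma-\gamma^2$. No step is a real obstacle; the only care needed is the sign convention when $\tau<0$, where the generating-function argument still applies, but the $\sqrt\tau$ form does not.
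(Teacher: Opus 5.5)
Your proposal is correct and follows the paper's proof: factorize over coordinates, push the Gaussian through the generating function \eqref{eq:app-hermite-gen} to obtain $e^{t\mu-\tau t^2/2}$, and read off the coefficient of $t^n/n!$, which is valid for every real $\tau$. The only cosmetic difference is the case $\tau>0$, where the paper rewrites $e^{t\mu-\tau t^2/2}$ directly as the generating function evaluated at $(t\sqrt\tau,\mu/\sqrt\tau)$, whereas you compare with the explicit coefficient formula for $\he_n$; the two arguments are equivalent.
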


\begin{proof}
Coordinates are independent, so take $d=1$ and $x=\mu+\sigma\xi$. By
\eqref{eq:app-hermite-gen},
\[
\sum_{n\ge0}\E[\he_n(x)]\frac{t^n}{n!}=e^{t\mu+t^2\sigma^2/2-t^2/2}
=e^{t\mu}e^{-\tau t^2/2}
=\sum_{a\ge0}\frac{(t\mu)^a}{a!}\sum_{i\ge0}\frac{(-\tau/2)^it^{2i}}{i!},
\]
and the coefficient of $t^n/n!$ gives \eqref{eq:condmean} for every real
$\tau$. If $\tau>0$, then
\[
e^{t\mu-\tau t^2/2}=\sum_n\he_n(\mu/\sqrt\tau)\,(t\sqrt\tau)^n/n!,
\]
which gives the second form.
\end{proof}

\paragraph{Numerical check.}
We check Proposition~\ref{prop:pruning} on a target where every quantity in
it is known in closed form. Let $\target=\mathrm{LogNormal}(0,\sigma^2)$ with
$\sigma=0.5$, $T(z)=e^{\sigma z}$ the exact map, and $F(x)=x$. Then
$\pz=\refm=\gauss(0,1)$, $w\equiv1$, and every TGI-RWM proposal is accepted,
so the chain is the autoregression $z_{t+1}=\beta z_t+\sqrt{1-\beta^2}\,\xi_t$
with $\beta=1-\gamma$. The pulled-back test function
$F\circ T(z)=e^{\sigma z}$ has the normalized Hermite coefficients
\[
c_k=\E_{\refm}\big[e^{\sigma z}\Herm_k(z)\big]
=\frac{e^{\sigma^2/2}\sigma^k}{\sqrt{k!}},\qquad k\ge0,
\]
and $\pi(F)=c_0=e^{\sigma^2/2}$. We use the dictionary
$\{\Herm_1,\dots,\Herm_K\}$ with the exact coefficients and $\theta=(1,-1)$.

\emph{Prediction.} Since $\QGI\Herm_k=\beta^k\Herm_k$, the function
$G_K=\sum_{k=1}^K c_k(1-\beta^k)^{-1}\Herm_k$ satisfies
$G_K-\QGI G_K=F_K-c_0$, where $F_K=\sum_{k=0}^K c_k\Herm_k$ is the degree-$K$
truncation of $F\circ T$. With every proposal accepted, the estimator with
$\theta=(1,-1)$ reduces to
$\frac1n\sum_{t}\big(F\circ T(z_t)-G_K(z_t)+\QGI G_K(z_t)\big)
=c_0+\frac1n\sum_t\big(F\circ T-F_K\big)(z_t)$, the chain average of the
truncation residual. For a chain average of $\sum_k a_k\Herm_k$ the
asymptotic variance is $\sum_k a_k^2 v_k$ with
$v_k=(1+\beta^k)/(1-\beta^k)$, so
\[
\mathrm{VRF}(K)=\frac{\sum_{k\ge1}c_k^2v_k}{\sum_{k>K}c_k^2v_k},
\]
which we evaluate with the sums truncated at $k=80$.

\emph{Simulation.} For $\gamma=0.3$ ($\beta=0.7$) and $\gamma=1$ ($\beta=0$,
independent draws from $\refm$) we run $R=1000$ independent stationary chains
of $n=5000$ steps and compute, for each chain, the plain average of $F\circ T$
and the control-variate estimator. The empirical VRF is the ratio of their
mean squared errors over the $R$ chains, both measured against the exact
value $e^{\sigma^2/2}$. Table~\ref{tab:lognormal} compares the two. They agree
over fourteen orders of magnitude for both step sizes, and the
control-variate estimator is unbiased: its mean over the chains deviates from
$e^{\sigma^2/2}$ by at most $2.0$ standard errors for $\gamma=0.3$ and $1.2$
for $\gamma=1$.

For $K\ge8$ the empirical VRF exceeds the prediction, by a factor of about
$1.8$ at $K=10$ for both step sizes, including the independent draws at
$\gamma=1$, so the excess is not caused by the autocorrelation of the chain.
The residual is essentially $c_{K+1}\Herm_{K+1}$, whose second moment comes
from the far tail of $\refm$: $61\%$ of $\E[\Herm_{11}^2]$ comes from $|z|>4$,
an event of probability $6\times10^{-5}$. The squared error of a single chain
is therefore heavily right-skewed, and its average over a moderate number of
chains is usually too small; at $K=10$ and $\gamma=0.3$ the ratio of the
empirical to the predicted VRF falls from $1.9$ with $1000$ chains to $1.3$
with $80{,}000$.

\begin{table}[ht]
\centering
\caption{VRF of TGI-RWM on $\mathrm{LogNormal}(0,0.5^2)$ with the exact map,
$F(x)=x$ and $\theta=(1,-1)$. Predicted: Proposition~\ref{prop:pruning}.
Empirical: ratio of mean squared errors over $1000$ stationary chains of
$5000$ steps. At $\gamma=1$ the chain draws independently from $\refm$.}
\label{tab:lognormal}
\small
\setlength{\tabcolsep}{4pt}
\begin{tabular}{@{}llcccccc@{}}
\toprule
 & $K$ & 1 & 2 & 4 & 6 & 8 & 10 \\
\midrule
$\gamma=0.3$ & Predicted & 15.6 & 270 & $1.28\times10^{5}$ & $1.03\times10^{8}$ & $1.30\times10^{11}$ & $2.39\times10^{14}$ \\
 & Empirical & 15.1 & 268 & $1.29\times10^{5}$ & $1.17\times10^{8}$ & $1.77\times10^{11}$ & $4.58\times10^{14}$ \\
\midrule
$\gamma=1$ & Predicted & 8.35 & 102 & $3.35\times10^{4}$ & $2.27\times10^{7}$ & $2.63\times10^{10}$ & $4.66\times10^{13}$ \\
 & Empirical & 8.10 & 97.5 & $3.24\times10^{4}$ & $2.35\times10^{7}$ & $3.03\times10^{10}$ & $8.13\times10^{13}$ \\
\bottomrule
\end{tabular}
\end{table}

\subsection{Elliptical Slice Sampling}
\label{app:ref-ess}

Elliptical slice sampling \citep{murray2010elliptical} with prior $\refm$ and
likelihood $w$ also fits this framework; with $T$ a flow it is transport
elliptical slice sampling \citep{cabezas2023transport}. We take as candidate
the initial point on the ellipse, $Z'_t=Z_t\cos\vartheta+V\sin\vartheta$ with
$\vartheta\sim\mathrm{U}(0,2\pi)$ and $V\sim\gauss(0,I_d)$. Given
$\vartheta$, this is the GI move with $\beta=\cos\vartheta$ and
$s=\sin\vartheta$, where the sign of $s$ is irrelevant because $V$ is
symmetric, so \eqref{eq:app-hermite-cond} gives
$\E[\Herm_\kappa(Z'_t)\,|\,Z_t,\vartheta]=\cos^{|\kappa|}\vartheta\,\Herm_\kappa(Z_t)$.
Averaging over $\vartheta$, the candidate kernel $Q$ satisfies (C1) and
(C2) with $\phi_\kappa=\Herm_\kappa$ and
\[
\lambda_\kappa=\E[\cos^{|\kappa|}\vartheta]=
\begin{cases}0,&|\kappa|\text{ odd},\\ \binom{|\kappa|}{|\kappa|/2}2^{-|\kappa|},&|\kappa|\text{ even},\end{cases}
\]
and $Q$ is $\refm$-reversible because it is reversible conditionally on
$\vartheta$, as $\QGI$ is. All $\lambda_\kappa$ are nonnegative and the
largest for $|\kappa|\ge1$ is $1/2$, attained at $|\kappa|=2$, so $Q_T$ is
positive semidefinite with spectral gap $1/2$. When $w\equiv1$ the slice
contains the whole ellipse, so the initial point is accepted and (C3)
holds. The sampler has no accept--reject step, so
$H_{1,t}=G(Z_{t+1})-G(Z_t)$.

\paragraph{Numerical check.}
We check these properties on a target where the transport is exact. Let
$\target=\mathrm{LogNormal}(0,\sigma^2)$ with $\sigma=0.5$, $T(z)=e^{\sigma z}$
and $F(x)=x$, so that $w$ is constant and $F\circ T(z)=e^{\sigma z}$ has the
Hermite coefficients $c_k=e^{\sigma^2/2}\sigma^k/\sqrt{k!}$. We run the
elliptical slice sampler with prior $\refm$ and likelihood $w$, including
the threshold and the shrinking of the bracket, on $1000$ independent
stationary chains of $5000$ steps.

Every step evaluates exactly one point: the initial point on the ellipse is
always accepted, as (C3) requires. Table~\ref{tab:ess-check} (top) compares
the eigenvalues $\lambda_k=\E[\cos^k\vartheta]$ with the lag-one
autocorrelation of $\Herm_k(Z_t)$ along the chains. The odd orders are
uncorrelated, $\lambda_2=1/2$ is the largest, so the spectral gap is $1/2$,
and the estimates agree with $\lambda_k$ to within $0.04$; the largest
deviation, at $k=6$, comes from the heavy tails of $\Herm_6$ under $\refm$.

With $G_K=\sum_{k=1}^K c_k(1-\lambda_k)^{-1}\Herm_k$ and
$\theta=(1,-1)$, Proposition~\ref{prop:pruning} with
$v_k=(1+\lambda_k)/(1-\lambda_k)$ predicts the VRF of $F(x)=x$ as for
$\QGI$; the ratio of mean squared errors over the chains agrees with it
to within 7\% for $K\le6$, and the estimator is unbiased. The VRF at
$K=1$ is only 3.6 because the residual is then dominated by
$c_2\Herm_2$, the slowest component, with $v_2=3$. For $K\ge8$ the
residual variance comes from rare draws with $|z|>4$, and the average of
the heavily right-skewed squared errors over 1000 chains underestimates
it, as in Appendix~\ref{app:ref-gaussian}.

\begin{table}
\centering
\caption{Elliptical slice sampling on $\mathrm{LogNormal}(0,0.5^2)$ with the
exact map, $1000$ stationary chains of $5000$ steps. Top: eigenvalues of the
candidate kernel on $\Herm_k$ and the lag-one autocorrelation of
$\Herm_k(Z_t)$. Bottom: VRF of $F(x)=x$ with the dictionary
$\{\Herm_1,\dots,\Herm_K\}$ and $\theta=(1,-1)$, from
Proposition~\ref{prop:pruning} and from the ratio of mean squared errors.}
\label{tab:ess-check}
\small
\setlength{\tabcolsep}{4pt}
\begin{tabular}{@{}lcccccc@{}}
\toprule
$k$ & 1 & 2 & 3 & 4 & 5 & 6 \\
\midrule
$\lambda_k=\E[\cos^k\vartheta]$ & 0 & 0.5 & 0 & 0.375 & 0 & 0.3125 \\
Autocorrelation of $\Herm_k(Z_t)$ & 0.001 & 0.499 & $-0.002$ & 0.368 & $-0.009$ & 0.274 \\
\midrule
$K$ & 1 & 2 & 4 & 6 & 8 & 10 \\
\midrule
VRF, Proposition~\ref{prop:pruning} & 3.58 & 117 & $3.94\times10^{4}$ & $2.71\times10^{7}$ & $3.17\times10^{10}$ & $5.62\times10^{13}$ \\
VRF, simulation & 3.62 & 109 & $3.81\times10^{4}$ & $2.91\times10^{7}$ & $4.69\times10^{10}$ & $1.09\times10^{14}$ \\
\bottomrule
\end{tabular}
\end{table}

\subsection{Gamma Reference}
\label{app:ref-gamma}

For parameters on $(0,\infty)$ with a power-law behaviour near zero and
exponential tails, a Gamma reference matched to the target is an
alternative to a log transform followed by a Gaussian reference. Let
$\refm=\mathrm{Gamma}(a,1)^{\otimes d}$ and let $Q$ act on each coordinate as
the transition over time $-\log\beta$ of the Cox--Ingersoll--Ross process
$dX=(a-X)\,dt+\sqrt{2X}\,dW$ \citep{cox1985theory}. Its transition law over
time $t$ is a scaled noncentral chi-square distribution \citep{cox1985theory},
which at $t=-\log\beta$ is the Gamma--Poisson mixture
\[
N\sim\mathrm{Poisson}\big(\beta z/(1-\beta)\big),\qquad
z'\sim(1-\beta)\,\mathrm{Gamma}(a+N,1).
\]
The process is reversible with respect to $\mathrm{Gamma}(a,1)$
\citep{diaconis2008gibbs}, and its generator is
$\mathcal Lf=xf''+(a-x)f'$. The generalized Laguerre polynomial
$L^{(a-1)}_k$ solves $xy''+(a-x)y'=-ky$, so
$e^{t\mathcal L}L^{(a-1)}_k=e^{-kt}L^{(a-1)}_k$, which at $t=-\log\beta$ is
$\beta^kL^{(a-1)}_k$. Since the Laguerre polynomials are orthogonal and
complete in $L^2(\mathrm{Gamma}(a,1))$, (C1) holds with the normalized
polynomials
\[
\phi_\kappa(z)=\prod_j\frac{L^{(a-1)}_{\kappa_j}(z_j)}{\sqrt{\Gamma(\kappa_j+a)/(\kappa_j!\,\Gamma(a))}},
\qquad\lambda_\kappa=\beta^{|\kappa|},
\]
and (C2) holds since $\E_{Q(z,\cdot)}[\phi_\kappa]=\beta^{|\kappa|}\phi_\kappa(z)$.
As $\beta\in(0,1)$, all eigenvalues are nonnegative, so $Q_T$ is positive
semidefinite with spectral gap $1-\beta$.

\paragraph{Numerical check.}
With $\target=\mathrm{LogNormal}(0,0.5^2)$, the reference
$\mathrm{Gamma}(2,1)$, the exact quantile map, $F(x)=x$ and $\beta=0.6$,
the VRF over 1000 chains of 5000 steps is 444, 1970, 6610 and 13200 for
$K=1,\dots,4$, against 465, 1950, 6790 and 13300 from
Proposition~\ref{prop:pruning}; with the misspecified maps
$T_s(z)=T(z/s)$ it falls to about 25 for $s=0.9,1.1$ and 6 for
$s=0.8,1.25$. Table~\ref{tab:gamma-vs-gauss} compares the two references on
$\tfrac12\mathrm{Gamma}(2,1)+\tfrac12\mathrm{Gamma}(5,1)$ with
moment-matched affine maps and with exact quantile maps: with the
former, the Gamma reference matches the target near zero, keeps $w$
bounded and is about 50 times more efficient, whereas with exact maps
the two perform alike.

\begin{table}
\centering
\caption{Gaussian and Gamma references on
$\tfrac12\mathrm{Gamma}(2,1)+\tfrac12\mathrm{Gamma}(5,1)$, $F(x)=x$, $K=6$,
over $1000$ chains of $5000$ steps started from exact draws of $\pz$. For
each row, $\beta\in[0,0.95]$ is chosen on a grid to maximize
ESS$\times$VRF. The last column is the largest value of $\log w$ minus its
$99.9\%$ quantile on $2\times10^5$ target draws; for the Gaussian
moment-matched map $\log w$ is unbounded and this gap grows with the number
of draws.}
\label{tab:gamma-vs-gauss}
\setlength{\tabcolsep}{4pt}
\begin{tabular}{@{}llcccccc@{}}
\toprule
Reference & Map & $\beta$ & Acceptance & ESS$/n$ & VRF & ESS$\times$VRF$/n$ & Tail gap \\
\midrule
Gaussian & moment-matched & 0.5 & 0.78 & 0.11 & 1.9 & 0.20 & 14.4 \\
Gamma & moment-matched & 0.1 & 0.92 & 0.60 & 16 & 9.7 & 0.37 \\
Gaussian & exact & 0 & 1.00 & 0.97 & $9.3\times10^3$ & $9.0\times10^3$ & 0 \\
Gamma & exact & 0.2 & 1.00 & 0.60 & $1.1\times10^4$ & $6.9\times10^3$ & 0 \\
\bottomrule
\end{tabular}
\end{table}

\subsection{Circle and Sphere}
\label{app:ref-manifold}

For angular parameters, the reference can be the uniform distribution on a
circle or a sphere, and flows on these spaces are available
\citep{rezende2020normalizing}.

\emph{Circle.} Let $\refm$ be the uniform distribution and let $Q$ be the
step $\theta'=\theta+\sqrt\tau\,\xi$ modulo $2\pi$ with $\xi\sim\gauss(0,1)$.
Its density depends only on $\theta'-\theta$ and is symmetric, so $Q$ is
$\refm$-reversible. Since
$\E[e^{ik\theta'}\,|\,\theta]=e^{ik\theta}\E[e^{ik\sqrt\tau\xi}]=e^{-k^2\tau/2}e^{ik\theta}$,
(C1) holds with the orthonormal Fourier basis
$1,\sqrt2\cos k\theta,\sqrt2\sin k\theta$ ($k\ge1$) and eigenvalues
$e^{-k^2\tau/2}$; these are positive, so $Q_T$ is positive semidefinite with
spectral gap $1-e^{-\tau/2}$.

\emph{Sphere.} Let $\refm$ be the uniform distribution on $S^2$ and let $Q$
move $x$ to $x\cos\varphi+v\sin\varphi$, with $v$ uniform on the unit circle
of the tangent plane at $x$ and $\varphi\in(0,\pi)$ fixed. Then $Q(x,\cdot)$
is uniform on the circle $\{y:\langle x,y\rangle=\cos\varphi\}$, so it depends
only on $\langle x,y\rangle$ and is symmetric, hence $\refm$-reversible. By the
Funk--Hecke formula \citep{atkinson2012spherical}, such a zonal kernel acts
on each spherical harmonic of degree $\ell$ by multiplication with
$P_\ell(\cos\varphi)$, where $P_\ell$ is the Legendre polynomial, so (C1)
holds with the real spherical harmonics. Since $|\cos\varphi|<1$, we have
$|P_\ell(\cos\varphi)|<1$ for $\ell\ge1$ and $P_\ell(\cos\varphi)\to0$, so
$\bar\lambda=\sup_{\ell\ge1}|P_\ell(\cos\varphi)|$ is attained and smaller than
one. Some eigenvalues may be negative, for instance
$P_2(\cos\varphi)<0$ when $\cos^2\varphi<1/3$, so $Q_T$ need not be positive
semidefinite; replacing the fixed step by the heat kernel, with eigenvalues
$e^{-\ell(\ell+1)t/2}$, restores positivity.

In both cases, (C2) holds for the MH sampler that proposes from $Q$, by the
eigenvalue relation.

\paragraph{Numerical check.} Table~\ref{tab:manifold} reports the MH
sampler that proposes from $Q$, with quantile maps, of the polar angle for
the sphere, for a mixture of von Mises distributions on the circle and a
von Mises--Fisher distribution on $S^2$. With the exact map every proposal
is accepted, and the ESS and VRF agree with Proposition~\ref{prop:pruning}.
On these compact spaces $w$ is always bounded, but a map whose distribution
is too concentrated makes $w$ vary over a wide range, here
$M/m=e^{6}\approx400$ for $\kappa'=6$, so chains stall in the regions the map
underweights, as seen in the low ESS.

\begin{table}
\centering
\caption{The MH sampler proposing from $Q$ on the circle
($0.7\,\mathrm{vM}(0.5,4)+0.3\,\mathrm{vM}(3.5,2)$, $F=\cos\theta$, Fourier
degree $6$) and on the sphere ($\mathrm{vMF}$ with $\kappa=3$, $F=x\cdot\mu$,
harmonics of degree at most $4$), with $1000$ chains of $5000$ steps. Each
map is the quantile map of a model $\pi'$, so that $T_\pf\refm=\pi'$ and
$w=(\target/\pi')\circ T$: on the circle, $\pi'$ is the target mixture with
both concentrations scaled; on the sphere, $\pi'=\mathrm{vMF}(\kappa')$.
Predicted values are from Proposition~\ref{prop:pruning} for the exact
maps.}
\label{tab:manifold}
\small
\setlength{\tabcolsep}{4pt}
\begin{tabular}{@{}llcccccc@{}}
\toprule
& & & & \multicolumn{2}{c}{ESS$/n$} & \multicolumn{2}{c}{VRF} \\
\cmidrule(lr){5-6}\cmidrule(lr){7-8}
Target & Map & Step & Acceptance & observed & predicted & observed & predicted \\
\midrule
Circle & exact & $\tau=0.5$ & 1.00 & 0.15 & 0.15 & 1900 & 1810 \\
Circle & exact & $\tau=2$ & 1.00 & 0.52 & 0.53 & 502 & 518 \\
Circle & concentrations $\times0.5$ & $\tau=2$ & 0.80 & 0.31 & -- & 5.4 & -- \\
Circle & concentrations $\times2$ & $\tau=2$ & 0.77 & 0.39 & -- & 2.8 & -- \\
Sphere & exact & $\varphi=0.6$ & 1.00 & 0.11 & 0.12 & 783 & 678 \\
Sphere & exact & $\varphi=1.2$ & 1.00 & 0.54 & 0.55 & 63 & 59 \\
Sphere & $\kappa'=1.5$ & $\varphi=1.2$ & 0.74 & 0.32 & -- & 3.1 & -- \\
Sphere & $\kappa'=6$ & $\varphi=1.2$ & 0.69 & 0.02 & -- & 1.04 & -- \\
\bottomrule
\end{tabular}
\end{table}

\newpage
\section{ALGORITHMS AND IMPLEMENTATION}
\label{app:algorithms}

This appendix gives the procedure for a general reference pair satisfying
(C1)-(C3) (Appendix~\ref{app:general}), followed by the
details specific to the Gaussian reference. The settings used in the
experiments are given in Appendix~\ref{app:experiments}.

\subsection{General Procedure}
\label{app:alg-general}

Algorithm~\ref{alg:pipeline} prepares the map and chooses the sampler, and
Algorithm~\ref{alg:tgi} builds the control variate and returns the estimate.
Both only require the reference pair $(\refm,Q)$, its eigenfunctions
$\phi_j$ and eigenvalues $\lambda_j$, and the conditional expectations of
(C2).

\begin{algorithm}
\caption{Preparing the map and choosing the estimator}
\label{alg:pipeline}
\begin{algorithmic}[1]
\REQUIRE unnormalized $\log\target$, reference $\refm$, family of maps,
training budget
\STATE map constrained parameters to the support of $\refm$
\STATE train $T$ (Section~\ref{sec:algorithm}), then freeze $T$
\STATE draw $\zeta_1,\dots,\zeta_N\sim\refm$ and compute
$\mathrm{ESS}_{\mathrm{IS}}$ and the largest normalized weight from
$w(\zeta_i)$
\STATE run a short pilot chain on $\pz$ that proposes from $Q$, and compute
the tail gap of $\log w$
\STATE choose SNIS, TGI-RWM or TESS by the rule of
Section~\ref{sec:diagnostics}
\RETURN frozen map $T$ and estimator
\end{algorithmic}
\end{algorithm}

\begin{algorithm}
\caption{Estimation with transformed control variates}
\label{alg:tgi}
\begin{algorithmic}[1]
\REQUIRE frozen map $T$, estimator, test function $F$, candidate index set
$\mathcal K_0$, reference budgets $N_0$ and $N$, threshold $\tau_{prune}$, burn-in
length $n_b$, run length $n$
\STATE draw $N_0$ points $\zeta_i\sim\refm$ and compute
$\hat c_j=N_0^{-1}\sum_iF(T(\zeta_i))\phi_j(\zeta_i)$ and the sample variance
$\hat s_j^2$ of $F(T(\zeta_i))\phi_j(\zeta_i)$, for $j\in\mathcal K_0$
\STATE keep $\mathcal K=\{j\in\mathcal K_0:\hat c_j^{\,2}>\tau_{prune}\,\hat s_j^2/N_0\}$
\STATE draw $N$ fresh points from $\refm$ and re-estimate $\hat c_j$ for
$j\in\mathcal K$
\STATE set $\eta_j=\hat c_j/(1-\lambda_j)$ and
$G=\sum_{j\in\mathcal K}\eta_j\phi_j$
\IF{the estimator is SNIS}
\STATE draw $n$ points $\zeta_i\sim\refm$ and \RETURN $\hat\mu^{\mathrm{IS}}$
of \eqref{eq:snis-cv}, with $a$ and $b$ fitted on these draws
\ENDIF
\STATE run $n_b$ burn-in steps, choosing the step size (and optionally
$\theta$) as in Appendix~\ref{app:alg-stepsize}, then fix it
\FOR{$t=0,\dots,n-1$}
\STATE draw the candidate $Z'_t\sim q(\cdot|Z_t)$ and move to $Z_{t+1}$
\STATE $H_{1,t}\leftarrow\alpha_t\{G(Z'_t)-G(Z_t)\}$ for an MH sampler, and
$G(Z_{t+1})-G(Z_t)$ otherwise
\STATE $H_{2,t}\leftarrow G(Z'_t)-\E_{q(\cdot|Z_t)}[G]$, which equals
$G(Z'_t)-\sum_j\lambda_j\eta_j\phi_j(Z_t)$ when the candidate is drawn from $Q$
\STATE $S_t\leftarrow F(T(Z_t))+\theta_1H_{1,t}+\theta_2H_{2,t}$
\ENDFOR
\RETURN $\hat\mu_n=n^{-1}\sum_{t=0}^{n-1}S_t$
\end{algorithmic}
\end{algorithm}

The coefficients $\hat c_j$ depend on $T$ and $F$ but not on the step size,
so lines 1--3 of Algorithm~\ref{alg:tgi} are run once and reused for every
step size and every run, and several test functions share the chain and
differ only in their coefficients. Lines 1--3 require forward passes of $T$
and evaluations of $F$, but no evaluation of $\target$. Selection and
estimation use independent reference draws, so that $\mathcal K$ does not
depend on the coefficients it retains, and all quantities in lines 11--14 are
computed in the latent space, so $T^{-1}$ is never evaluated. In practice
many chains run in parallel and their estimates are averaged.

\subsection{Implementation for the Gaussian Reference}
\label{app:alg-gauss}
\paragraph{Diagnostics}
\label{app:alg-diagnostics}
$\mathrm{ESS}_{\mathrm{IS}}$ and the largest normalized weight are
computed with the log-sum-exp trick, since $w$ is known up to a
constant. The pilot chain gives the Fisher divergence
$\hat{\mathcal F}=n^{-1}\sum_t\|\nabla\log w(Z_t)\|^2$, which needs the
gradient only on the pilot, and the tail gap, the difference between the
largest $\log w$ on the pilot and its 99.9\% quantile; a large gap
indicates an unbounded weight and chains that may stall in the tails.

\paragraph{Dictionary and Pruning}
\label{app:alg-dictionary}
The candidate set is $\mathcal K_{K,p}$ of multi-indices of total degree
at most $K$ involving at most $p$ coordinates, of size
$\sum_{k=1}^p\binom{d}{k}\binom{K}{k}$, enumerated directly, with
$\tau_{prune}=2\log|\mathcal K_{K,p}|$. The one-dimensional values
$\mathrm{he}_n(z_j)$, $n\le K$, are computed by the recurrence
$\mathrm{he}_{n+1}(x)=x\,\mathrm{he}_n(x)-n\,\mathrm{he}_{n-1}(x)$, so
evaluating $G$ costs $O(dK+p|\mathcal K|)$; $\E_{q(\cdot|z)}[G]$ reuses
the same values for TGI-RWM and the factors of
Lemma~\ref{lem:condmean} for TGI-MALA.

\paragraph{Step Size}
\label{app:alg-stepsize}
The objective of Section~\ref{sec:gamma} is maximized with the lower
bound $\min(0,\log r)$ of the acceptance rate \citep{titsias2019gradient}
in place of $\log a(\gamma)$. During burn-in $\log\gamma$
follows stochastic gradient ascent with learning rate $0.05$ and
$\gamma\in[10^{-3},1]$, using the reparametrized gradient of
\citet{titsias2019gradient} for the first two terms and a running
average of the acceptance probabilities for the last; the per-chain
gradients are winsorized at their 10\% and 90\% quantiles before
averaging, without which a few chains at the seam between modes drive
$\gamma$ to its lower bound. The step size of the evaluation run is the
average of $\log\gamma$ over the second half of burn-in. $V_0$ uses
$\mathrm{Var}_\target(F)$ from the burn-in samples and the residual
variance $R$ on reference draws independent of those used for the
coefficients. When $\theta$ is estimated it is fitted on the burn-in
samples by minimizing the empirical variance of the summand and then
frozen.

\newpage
\section{EXPERIMENTAL DETAILS}
\label{app:experiments}

We first give the settings shared by all experiments, then compare training
choices for the map (Appendix~\ref{app:training-results}) and give the
details of the synthetic targets (Appendix~\ref{app:exp-synthetic}) and the
real posteriors (Appendix~\ref{app:exp-real}).

\paragraph{Protocol.}
For each target the flow is trained once and frozen, and replicates are
independent estimates on it, never retrainings. Each estimate uses
$B=10^6$ target evaluations, an evaluation of $\log\target$ counting $1$
and one with its gradient $2$; a draw of SNIS, a step of TGI-RWM and a
candidate of TESS cost $1$, a step of TGI-MALA and a leapfrog step of
HMC cost $2$. A chain-based estimate averages $200$ chains of cost
$5000$ started from $z\sim\gauss(0,I)$. Burn-in is included and takes
$10\%$ of the cost for TGI-RWM and TESS, $20\%$ for TGI-MALA and
$20\%$ for NeuTra. For comparison reason, the step size $\gamma$ of the TGI samplers
and the number $L$ of leapfrog steps of NeuTra are tuned in the same way: they are chosen on a grid by a
pilot of chains of cost $10^4$ on the main test function, which is not
charged; the step size of NeuTra is adapted by dual averaging during
warm-up. Each setting is replicated $R=100$ times, which gives a
relative standard error of about $7\%$ for the RMSE and $14\%$ for the
precision per second, and all results use $\delta=0$ unless stated
otherwise. The main test function of each target is shown in Table~\ref{tab:exp-synthetic} and Table~\ref{tab:exp-real}.

Wall-clock time is the time of a batch of $R$ estimates advanced in
lockstep, divided by $R$. It includes burn-in and, for SNIS and
TGI-MALA, the one-time compilation, and excludes the training of the
flow, the fitting of the control-variate coefficients and the pilot
runs. NUTS, which runs on the CPU, is compared at equal numbers of
target evaluations only.

\paragraph{Other experiments.}
Two groups of experiments depart from this protocol. Figure~\ref{fig:banana-family}
and Table~\ref{tab:indep-limit} measure the efficiency per step rather
than per second, with $500$ independent chains started from
$z\sim\gauss(0,I)$, $500$ burn-in and $5000$ recorded steps, which gives
a relative precision of about $\pm6\%$ for ESS$/n$ and VRF; the
step-size curves use a grid of $25$ values of $\gamma$ and $5$ adaptive
runs of $J$ on the same chains, and the pruning curves use
$K=1,\dots,6$ and five pruning rules. The comparison with NUTS
(Table~\ref{tab:nuts} and Figure~\ref{fig:synthetic-bench}) adapts
$\gamma$ by $J$ during burn-in, at one extra backward pass per step
which is counted, and sets the leapfrog steps of NeuTra by
$L=\lceil d^{1/4}/\epsilon\rceil$ at acceptance $0.9$. There the
flow-based methods use chains of cost $5000$, their number set by the
budget, and $100$ estimates per setting, and NUTS uses chains of cost
$2\times10^4$ and $50$ estimates per setting. Table~\ref{tab:nuts}
reports the RMSE on the real posteriors at $B=10^6$ without the training
of the flow, and Figure~\ref{fig:synthetic-bench} on the banana with
the full training charged, as the lower envelope over training
checkpoints and sampling budgets, with bootstrap $95\%$ intervals from
$2000$ resamples.

\paragraph{Transport.}
\label{app:training}
The transport is a RealNVP flow \citep{dinh2017density} with affine
coupling layers and alternating masks. Each conditioner is a multilayer
perceptron with two hidden layers, $\tanh$ activations and a
zero-initialized output layer, and the log-scale is parametrized as
$s_{\max}\tanh(\cdot)$. A final coordinatewise affine layer $x=m+s\odot v$
is initialized at a Gaussian approximation of the target, the Laplace
approximation or a moment-matched one, so training starts from that
approximation. We use two sizes, SMALL with $8$ layers, width $64$ and $s_{\max}=1.5$,
and LARGE with $12$ layers, width $128$ and $s_{\max}=3$.
Constrained parameters are first mapped to $\reals^d$ by a log or logit
transform. The flows are trained with Adam \citep{kingma2015adam} for
$10^4$ steps with batch size $256$, a learning rate decayed from $10^{-3}$
to $10^{-5}$ by a cosine schedule, and gradient norms clipped at $100$,
with two exceptions: eight schools is trained for $3\times10^4$ steps,
and the sparse logistic regression for $3\times10^5$ steps with batch size
$1024$, an initial learning rate of $3\times10^{-4}$ and a network of $12$
layers of width $256$. The map is frozen before the evaluation runs. Two
training schemes are used, and a third extends the first to multimodal
targets.
\begin{itemize}
\item \emph{Scheme $\schA$} combines the reverse and an adaptive forward
Kullback--Leibler divergence. The reverse term is estimated at each step
from $256$ draws $\zeta\sim\refm$ as
$256^{-1}\sum\{-\log\target(T(\zeta))-\log|\det\nabla T(\zeta)|\}$. For the
forward term, $32$ TGI-RWM chains advance $10$ steps every $10$ training
steps during training, with $\gamma$ adapted to an acceptance rate of
$0.5$. The chain states are stored in $x$-space and pulled back through the
current flow after each update, so the chains are never restarted. Visited
states enter a sliding buffer of $2\times10^4$ samples, and from $10\%$ of
the training steps onwards the forward Kullback--Leibler divergence on the
buffer is added to the loss with weight one; earlier states are discarded.
\item \emph{Scheme $\schB$} minimizes the reverse Kullback--Leibler
divergence alone.
\item \emph{Scheme $\schAa$}, used only for multimodal targets, extends $\schA$ as follows. A pilot of
$16$ NUTS chains ($64$ on the $\phi^4$ field) is started from $\mathrm{Uniform}(-2,2)^d$; coordinates
are whitened with the average within-chain covariance, and the flow is
trained in whitened coordinates. The chains are grouped into modes by
the Mahalanobis distance between their means, with threshold $5$, and
half of each forward Kullback--Leibler minibatch is drawn from the
pilot, with equal weight per group; we call this the \emph{anchor}, since
it keeps every mode represented in the flow whatever mass the current
flow assigns to it. The weights of the modes are left to the reweighting of SNIS or to the chains, and the cost of the pilot is included in
the total cost.
\end{itemize}

\paragraph{Tail penalty.}
 Unless it is switched off (Tables~\ref{tab:exp-synthetic} and~\ref{tab:exp-real}), the loss includes
\[
\frac1m\sum_{i=1}^m\Big[\max\big(0,\ \log w(\zeta_i)-\overline{\log w}-c\big)\Big]^2,
\qquad \zeta_i=r_iu_i,\quad u_i\sim\mathrm{Unif}(\mathbb S^{d-1}),\quad
r_i\sim\mathrm{Unif}\big[\sqrt d+1,\ \sqrt d+5\big],
\]
with $c=1$ and $m=128$ points on a radial band beyond the Gaussian shell, and
$\overline{\log w}$ the mean of $\log w$ over the reverse-KL batch of the
same step, held fixed in the gradient. The penalty is one-sided: only
weights more than $e$ times the central level are penalized, since heavier
flow tails are harmless for the MH step. It is switched on after $10\%$ of
the training steps, its weight rises linearly to one over the next $10\%$,
and its $2m=256$ target evaluations per step are included in the training
cost.

No symmetry of the target, such as $x\mapsto-x$, is used in training,
since the same symmetry would make a symmetrized NUTS exact and the
comparison unfair.

\paragraph{Control variates and step size.}
The dictionary is $\mathcal K_{K,p}$ with $K=4$ and $p=2$
(Appendix~\ref{app:alg-dictionary}); the selection pass uses
$N_0=10^5$ reference draws and the coefficient pass $N=10^5$ fresh
ones, and $\theta=(1,-1)$ is fixed. In the main comparison $\gamma$ is
chosen on the grid of $25$ values between $0.005$ and $1$, spaced by $0.05$
above $0.1$ and denser below and $L$ on $\{1,2,4,8\}$ by the pilot of the
Protocol. Where $\gamma$ is adapted by $J$, in the chain experiments and
the benchmark, $\log\gamma$ follows stochastic gradient ascent with
learning rate $0.05$ on $[10^{-3},1]$, and the per-chain contributions to
the gradient are winsorized at their $10\%$ and $90\%$ quantiles.

\paragraph{Methods.} Chain-based methods follow the Protocol; methods in $x$-space start from the Laplace approximation.
\begin{itemize}
\item \emph{TGI-RWM, TGI-MALA (+CV)} are the samplers of
Table~\ref{tab:variants}. Since a trained flow cannot satisfy the bound on $w$
of Assumption (A2), each step may be replaced, independently with
probability $\delta$, by an independence MH step with the fixed latent
proposal $\nu=t_{10}(0,I_d)$, accepted with probability
$\min\{1,\pz(z')\nu(z)/(\pz(z)\nu(z'))\}$. Such a step costs one target evaluation
(plus the gradient at the new state after an accepted step for
TGI-MALA), records no control-variate summand and does not enter the
step-size adaptation, so the estimator averages over the TGI steps only
(Section~\ref{sec:estimator}). Columns with $\delta$ use $\delta=0.01$,
all other results use $\delta=0$ (Table~\ref{tab:wallclock-full}).
\item \emph{TESS (+CV)} \citep{cabezas2023transport} is elliptical slice
sampling \citep{murray2010elliptical} on the same fixed flow, with prior
$\gauss(0,I)$ and likelihood $w$; its control variates use the initial
point on the ellipse (Appendix~\ref{app:ref-ess}).
\item \emph{SNIS (+CV)} draws $N$ points from $\refm$, weights them by
$w$ and uses \eqref{eq:snis-cv}, with the coefficients of the numerator
and the denominator obtained by least squares on the weighted draws
\citep{owen2000safe}.
\item \emph{NeuTra (+ZVCV)} \citep{hoffman2019neutra} runs HMC on $\pz$
with a fixed number of leapfrog steps $L$, see the settings in the Protocol. NeuTra+ZV applies the zero-variance control variates
of NUTS+ZV, with the score $\nabla\log\pz$, to the latent draws.
\item \emph{NUTS (+ZVCV)} \citep{hoffman2014no} is the NumPyro
implementation \citep{phan2019composable} with Stan defaults: target
acceptance $0.8$, diagonal mass matrix and maximum tree depth $10$. Each
replicate is the chains started at the mean of the Gaussian approximation
plus $\gauss(0,0.1^2)$ jitter, with warm-up
$\min(1000,\,20\%\text{ of the budget})$ iterations included in the cost. NUTS+ZV adds
second-order zero-variance control variates \citep{mira2013zero} on the
same draws: coordinates are standardized with the Gaussian
approximation, $\phi$ ranges over $y_j$ and $y_jy_k$,
$h=\Delta\phi+\nabla\phi\cdot\nabla\log\target$, and the coefficients are
fitted by least squares with intercept, taking the minimum-norm
solution; the gradients come from the NUTS draws at no extra cost.
\end{itemize}

\paragraph{Reference values.} For the synthetic targets the reference
values are exact, from closed forms or exact sampling. For the real
posteriors they come from long NUTS chains with second-order
zero-variance control variates, and their Monte Carlo standard errors
from batch means; these errors enter the $\dagger$ marks of
Table~\ref{tab:wallclock}. Two posteriors need special treatment. On
eight schools, NUTS runs on the non-centred parametrization, which it
explores far better than the centred one targeted by the flow-based
methods; the two describe the same posterior. On the $\phi^4$ field,
NUTS stays in one of the two symmetric modes, so we run it in one mode
and complete its draws by the symmetry $x\mapsto-x$. The test functions
that are odd under this symmetry then have their exact values: $1/2$ for
$\mathbf 1\{M>0\}$ and $0$ for $M$.

\paragraph{Hardware and software.}
All experiments ran on a single workstation with an Intel Core
i7-13700K CPU (24 threads), 64\,GB of memory and one NVIDIA GeForce RTX
4090 GPU (24\,GB). The flows, all latent estimators, including NeuTra, and the control variates are implemented in PyTorch~2.9 in double precision
and run on the GPU, with all chains of a configuration advanced in one
batch. NUTS runs on the CPU with NumPyro~0.22 on JAX~0.10, one chain per
CPU device. The precision per second depends on this hardware, whereas costs in target evaluations do not. posteriordb and NumPyro are released under the Apache 2.0 licence, PyTorch and JAX under BSD and Apache 2.0, and the German credit data under CC BY 4.0.

\subsection{Training the Map}
\label{app:training-results}

We compare the training choices of Appendix~\ref{app:training} on the
banana, where the objective and the tail penalty matter, and on the
Gaussian mixture, where the handling of several modes does. Every variant
is trained once with the default settings ($10^4$ steps, same seed), except
the 1/10-cost variant, a complete training of $10^3$ steps with every
schedule scaled to its length; the variant equal to the default reuses the
default map. For the last checkpoint we report $\mathrm{ESS}_{\mathrm{IS}}/N$,
the Fisher divergence $\mathcal F$, the $99.9\%$ quantile of $\log w$ on the
radial band of the tail penalty relative to its median under $\refm$ and,
for the mixture, the mass $q(x_1>0)$ that $T_\#\refm$ assigns to one mode
(exact value $1/2$). On that map we run TGI-RWM at the step selected by
$J$, two seeds of $2000$ chains with $500$ burn-in and $5000$ recorded
steps, and report the VRF of the main test function, the bias of the pooled
estimate, and the largest robust z-score of a chain's control-variate
estimate, which reveals the rare chains that dominate a heavy-tailed
variance.

\paragraph{Objective and tail penalty.}
On the banana the tail penalty is decisive (Table~\ref{tab:train-banana}).
Without it, under either scheme, a few of $2000$ chains deviate by about
$10^2$ robust standard deviations and the VRF of the main test function is
$3$--$15$ and differs between seeds; with it, the largest deviation is
$4$--$9$, the VRF is $29$--$52$, and the two seeds agree. The tail quantile
of $\log w$ separates the variants ($5.5$--$7.6$ without the penalty,
$0.6$--$1.0$ with it), whereas $\mathrm{ESS}_{\mathrm{IS}}/N$ is
$0.99$--$1.00$ for all of them. With the penalty, scheme $\schA$ improves
on $\schB$ (VRF $50$ against $31$, $\mathcal F=0.015$ against $0.036$), and
the LARGE network on the SMALL one at the same training cost (VRF $50$
against $40$). The penalty is not universal, however. On the hierarchical posteriors it
degrades the LARGE flows, whose tails are not too light to begin with: on
eight schools the effective sample size per step of $\log\tau$ rises from
$0.02$ to $0.12$ without it, on the $\phi^4$ field $\mathrm{ESS}_{\mathrm{IS}}/N$ from $0.31$ to
$0.76$. We therefore switch it off on these posteriors
(Table~\ref{tab:exp-real}). An alternative to this per-target choice is to let the penalty decide for itself. At the checkpoint where it would start, several samples are drawn on the radial band and the penalty is enabled only if the $99.9\%$ quantile of $\log w$ there exceeds the median of $\log w$ under $\rho$ by more than the margin c; below the margin the penalty is inactive on the band and its gradient only constrains the bulk of the map. The rule costs nothing beyond the several evaluations already made at that step and introduces no new parameter.

\paragraph{Several modes.}
Started from the Laplace approximation of one mode, schemes $\schB$ and
$\schA$ never find the other one: $q(x_1>0)$ stays at $0.96$, the TGI
chains stay in that mode, and the estimate of $x_1$ is off by $0.87$
standard deviations (Table~\ref{tab:train-gmix_d10_h6}). Their
$\mathrm{ESS}_{\mathrm{IS}}/N$ ($0.985$) is higher than that of the correct
map ($0.959$) and their VRF is larger, but both describe one mode only.
$\mathrm{ESS}_{\mathrm{IS}}$ cannot see a mode that the flow does not
cover, so for multimodal targets we check the mass the flow assigns to
each mode instead. The multi-start pilot with whitening of scheme $\schAa$ 
recovers both modes. The anchor keeps $q(x_1>0)$ between $0.44$ and $0.52$
during training, against $0.39$ to $0.69$ without it, because otherwise a
mode with slightly less flow mass is visited less, contributes fewer
states to the buffer and loses more mass; in both cases the final
estimates are unbiased, since the MH step corrects the weights of the
modes. The tail penalty matters here too: without it the tail quantile is
$8.4$ instead of $1.6$, and one chain in $2000$ deviates by $97$ robust
standard deviations, which lowers the VRF of one seed from $3.8$ to $1.4$. More starting points help as well: on the $\phi^4$ field, $64$ starts
instead of $16$ give a pilot whose within-mode covariance is better
estimated, and the effective sample size per step of TGI-RWM roughly
doubles, whereas a wider spread of starting points does not help.

\paragraph{Training budget.}
A step of the default training evaluates $\log\target$ with its gradient
on the $256$ reverse-KL samples (cost $512$) and on the $128$ points of
the tail penalty (cost $256$, from $10\%$ of the steps on), and runs the
$32$ buffer chains of scheme $\schA$ (cost $35.2$ per step on average);
over $10^4$ steps this gives $7.8\times10^6$ target evaluations
($8.2\times10^6$ with the pilot of the mixture), and the real posteriors
use $5.5\times10^6$ to $1.6\times10^7$ and $6.1\times10^8$ for the sparse logistic regression (Table~\ref{tab:exp-real}). With a
tenth of the budget, trained to completion, the banana map remains
usable ($\mathrm{ESS}_{IS}/N=0.95$, no stuck chains, no detectable
bias), but $\mathcal F$ grows twentyfold and the VRF falls from about
$50$ to $6$; the mixture map is poor ($\mathrm{ESS}_{IS}/N=0.49$,
$q(x_1>0)=0.64$, VRF $1.3$) and its estimate shows a small but
significant bias. The training evaluates the target on $256$ points at
once, so on one GPU it takes about three minutes on both targets and
$24$ minutes on the Lotka--Volterra posterior, whose every evaluation
solves an ODE.

\begin{table}
\centering
\caption{Training ablation, banana: every variant trained once for $10^4$ steps, except the 1/10-cost one, a complete training of $10^3$ steps with every schedule scaled to its length; flow diagnostics at the last checkpoint (tail $\log w$: 99.9\% quantile over the radial band relative to the median over $\mathcal N(0,I)$) and TGI-RWM chains at $J$'s $\gamma$ (two seeds of 2000 chains, 500 + 5000 steps; main test function). max $|z|$: the largest $|\hat\mu_r-\mathrm{median}\,\hat\mu|/(1.4826\,\mathrm{MAD})$ over the chains' control-variate estimates $\hat\mu_r$ (about 4 for Gaussian spread; larger values mean that a few chains stuck). bias: $(\bar\mu-\mu)/\sigma_\pi$ of the pooled estimate; $^*$: more than four standard errors $\mathrm{sd}(\hat\mu_r)/\sqrt R$.}
\label{tab:train-banana}
\begin{tabular*}{\textwidth}{@{\extracolsep{\fill}}lccccccc@{}}
\toprule
Variant & train cost & ESS$_{\rm IS}/N$ & $\mathcal F$ & tail $\log w$ & VRF & max $|z|$ & bias \\
\midrule
B & $5.1\times10^{6}$ & 0.997 & 0.0287 & 7.63 & 4.19 / 2.92 & 208 & $-1.0\times10^{-4}$ \\
B + tail & $7.4\times10^{6}$ & 0.993 & 0.0361 & 1 & 33.2 / 29.4 & 4 & $+5.0\times10^{-5}$ \\
A & $5.5\times10^{6}$ & 0.997 & 0.0171 & 5.46 & 8.95 / 14.5 & 96 & $-9.9\times10^{-5}$ \\
A + tail (default) & $7.8\times10^{6}$ & 0.998 & 0.0147 & 0.699 & 51.5 / 48.2 & 5 & $-4.6\times10^{-6}$ \\
A + tail, SMALL & $7.8\times10^{6}$ & 0.996 & 0.0245 & 0.648 & 40.5 / 39.8 & 9 & $-6.5\times10^{-5}$ \\
A + tail, 1/10 cost & $7.8\times10^{5}$ & 0.950 & 0.329 & 1.88 & 5.66 / 5.76 & 8 & $+3.4\times10^{-5}$ \\
\bottomrule
\end{tabular*}
\end{table}

\begin{table}
\centering
\caption{Training ablation, Gaussian mixture: every variant trained once for $10^4$ steps, except the 1/10-cost one, a complete training of $10^3$ steps with every schedule scaled to its length (pilot $8\times10^4$); flow diagnostics at the last checkpoint (tail $\log w$: 99.9\% quantile over the radial band relative to the median over $\mathcal N(0,I)$) and TGI-RWM chains at $J$'s $\gamma$ (two seeds of 2000 chains, 500 + 5000 steps; main test function). max $|z|$: the largest $|\hat\mu_r-\mathrm{median}\,\hat\mu|/(1.4826\,\mathrm{MAD})$ over the chains' control-variate estimates $\hat\mu_r$ (about 4 for Gaussian spread; larger values mean that a few chains stuck). bias: $(\bar\mu-\mu)/\sigma_\pi$ of the pooled estimate; $^*$: more than four standard errors $\mathrm{sd}(\hat\mu_r)/\sqrt R$.}
\label{tab:train-gmix_d10_h6}
\setlength{\tabcolsep}{3pt}
\begin{tabular*}{\textwidth}{@{\extracolsep{\fill}}lcccccccc@{}}
\toprule
Variant & train cost & ESS$_{\rm IS}/N$ & $\mathcal F$ & tail $\log w$ & $q(x_1>0)$ & VRF & max $|z|$ & bias \\
\midrule
A & $7.8\times10^{6}$ & 0.986 & 101 & 1.07 & 0.96 & 23.5 / 23.6 & 7 & +0.871$^*$ \\
A$^a$ without anchor & $8.2\times10^{6}$ & 0.920 & 388 & 1.7 & 0.56 & 3.59 / 3.1 & 18 & $+6.5\times10^{-5}$ \\
A$^a$ (default) & $8.2\times10^{6}$ & 0.959 & $6.3\times10^{3}$ & 1.63 & 0.52 & 3.59 / 3.82 & 10 & $-5.1\times10^{-4}$ \\
A$^a$ without tail & $5.9\times10^{6}$ & 0.932 & $5.6\times10^{3}$ & 8.37 & 0.51 & 3.2 / 1.42 & 97 & $+1.6\times10^{-4}$ \\
A$^a$, 1/10 cost & $8.6\times10^{5}$ & 0.489 & 196 & 4.62 & 0.64 & 1.37 / 1.26 & 38 & $+3.8\times10^{-3}$$^*$ \\
B & $7.4\times10^{6}$ & 0.985 & 85.3 & 1.12 & 0.96 & 9.81 / 8.52 & 16 & +0.871$^*$ \\
\bottomrule
\end{tabular*}
\end{table}

\newpage
\subsection{Synthetic Targets}
\label{app:exp-synthetic}

\subsubsection{Targets}
The banana, the funnel and the Gaussian mixture have, besides the trained
flow, an analytic family of transports $T_\vartheta$ whose accuracy is set
by one parameter, with the exact transport as a member; we measure the
accuracy by the Fisher divergence $\Fisher=\E_{\pz}\|\nabla\log w\|^2$.
\begin{itemize}
\item \emph{Banana} \citep{haario1999adaptive}, $d=8$, $p=10$, $b=0.1$:
$\log\target(x)=-x_1^2/(2p)-(x_2+bx_1^2-bp)^2/2-\sum_{k\ge3}x_k^2/2$; test
functions $x_2$ (main), $x_2^2$, $x_1^2$, the mean of the even coordinates
and $\bar x$. Shear family $T_c(z)_1=\sqrt p\,z_1$,
$T_c(z)_2=z_2+bp-(bp-c)z_1^2$, $T_c(z)_k=z_k$: a banana of curvature $c$
in the latent space, $\Fisher=7c^2$, exact at $c=0$, the Laplace affine
map at $c=bp$.
\item \emph{Funnel} \citep{neal2003slice}, $d=10$, $\sigma_v=2$:
$v\sim\gauss(0,\sigma_v^2)$, $x_i\mid v\sim\gauss(0,e^v)$; test functions
$v$ (main), $\tfrac19\sum_i\log(1+x_i^2)$, $v^2$ and $x_1$. Scale family
$v=\sigma_vz_1$, $x_i=e^{(1-\varrho/\sigma_v)v/2}z_i$: a funnel of scale
$\varrho$ in the latent space, exact at $\varrho=0$.
\item \emph{Gaussian mixture} \citep{alexopoulos2023variance}, $d=10$:
$\tfrac12\gauss(m,\Sigma)+\tfrac12\gauss(-m,\Sigma)$, $m=(3,0,\dots,0)$,
$\Sigma$ inverse-Wishart rescaled to largest eigenvalue $25$; test
functions $x_1$ (main), $x_1^2$, $\mathbf 1\{x_1>0\}$ and $\bar x$. Scale
family $T_s(z)=T(sz)$ with $T$ the exact transport: $\pz=\gauss(0,s^{-2}I)$,
$\Fisher=d(1-s^2)^2/s^2$, exact at $s=1$.
\item \emph{$t$ mixture}, $d=10$: the Gaussian mixture with multivariate
$t_5$ components,
$\tfrac12 t_5(m,S)+\tfrac12 t_5(-m,S)$ with $S=\tfrac35\Sigma$, so that
the means and covariances are those of the Gaussian mixture and only the
tails are heavier (polynomial); the same test functions, $x_1$ (main).
It has no analytic family; its flow, whose tails are Gaussian, has
unbounded importance weights.
\end{itemize}

\begin{table}[ht]
\centering
\caption{Settings for the synthetic targets. All flows use the LARGE
network, $10^4$ training steps, batch size $256$ and the tail penalty;
the multi-start pilot of the mixtures costs $4\times10^5$ and is included
in the training cost. $\max w\cdot N$ is the largest normalized
importance weight of the flow times $N=10^6$, the largest over $100$
replicates. $L$ is the number of leapfrog steps of NeuTra and $\gamma$
the step size of TGI-RWM / TGI-MALA, both chosen by pilot runs.}
\label{tab:exp-synthetic}
\small
\begin{tabular}{@{}lclcclcc@{}}
\toprule
Target & $d$ & Scheme & Train.\ cost & $\max w{\cdot}N$ & Main $F$ & $L$ & $\gamma$ \\
\midrule
Banana           & 8  & $\schA$  & $7.8\times10^6$ & $65$            & $x_2$ & 1 & 0.90 / 0.20 \\
Funnel           & 10 & $\schA$  & $7.8\times10^6$ & $1.6\times10^3$ & $v$   & 2 & 1.00 / 0.10 \\
Gaussian mixture & 10 & $\schAa$ & $8.2\times10^6$ & $1.2\times10^3$ & $x_1$ & 2 & 0.95 / 0.20 \\
$t$ mixture      & 10 & $\schAa$ & $8.2\times10^6$ & $1.3\times10^5$ & $x_1$ & 2 & 1.00 / 0.10 \\
\bottomrule
\end{tabular}
\end{table}

\subsubsection{Results}
On the analytic family of the banana (Figure~\ref{fig:banana-family}),
TGI-RWM is the more efficient sampler per unit cost across the whole
family, and both lose efficiency once the Fisher divergence exceeds
about $10^{-1}$, together with $\mathrm{ESS}_{IS}/N$, which therefore
tells at run time which regime applies. On the accurate member the ESS
grows with $\gamma$ and the VRF falls, so their product has an interior
maximum. For TGI-RWM it is flat near $\gamma=1$, and the step chosen by
$J$ loses little; for TGI-MALA, whose drift misdirects large steps, it
peaks near $\gamma=0.35$, and $J$ recovers most of what the speed
measure, which would choose $\gamma=1$, loses. Magnitude pruning with
the threshold $2\log|\mathcal K|$ keeps a VRF of about $2000$ with a
handful of terms at every maximal degree, whereas the full dictionary
loses most of it beyond $K=2$ as its estimated coefficients become
noisy.

When the training of the flow is charged to the flow-based methods
(Figure~\ref{fig:synthetic-bench}), they pass NUTS on the banana at a
total cost of a few times $10^5$, and at $10^7$ SNIS and TGI-RWM with the
control variates are about fifty times more accurate, while NeuTra with
zero-variance control variates levels off an order of magnitude above
them. The right panels show the flow straightening the banana within the
first thousand training steps, by which point $\mathrm{ESS}_{IS}/N$ has
risen from $0.06$ to $0.90$.

\begin{figure}
\centering
\includegraphics[width=\textwidth]{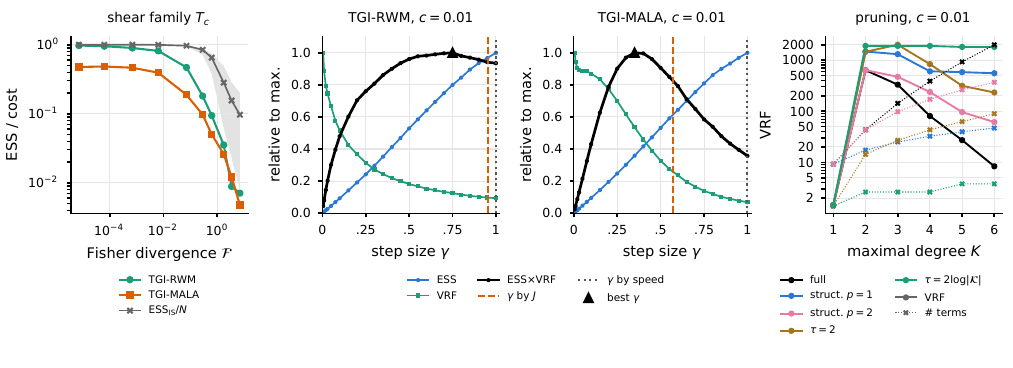}
\caption{Analytic transports of the banana, main test function $x_2$,
from the chain experiments of Appendix~\ref{app:experiments}. From left
to right: the best ESS per unit cost over the step size of TGI-RWM and
TGI-MALA, and $\mathrm{ESS}_{IS}/N$, against the Fisher divergence of the
shear map $T_c$; the ESS, the VRF and their product against the step
size $\gamma$, relative to their maxima, for TGI-RWM and TGI-MALA on the
accurate member $c=0.01$ (dashed: the step chosen by $J$; dotted: by the
speed measure); and the VRF (solid) and the number of kept Hermite terms
(dotted) against the maximal degree $K$ for each pruning rule, with
TGI-MALA on the same member.}
\label{fig:banana-family}
\end{figure}

\begin{figure}
\centering
\includegraphics[width=\textwidth]{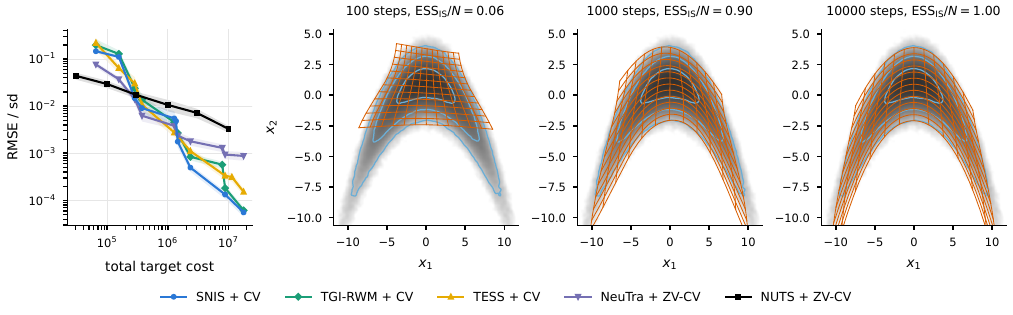}
\caption{The banana with the training of the flow charged. Left: RMSE of
the main test function in posterior standard deviations against the
total number of target evaluations, with bootstrap $95\%$ bands; each
flow-based method shows the lower envelope over training checkpoints and
sampling budgets. Right: a grid of the latent space pushed forward by
the flow after $100$, $1000$ and $10^4$ training steps, over the target
density (grey; contours hold $50$, $90$ and $99\%$ of the mass).}
\label{fig:synthetic-bench}
\end{figure}

\newpage
\subsection{Real Posteriors}
\label{app:exp-real}

Table~\ref{tab:exp-real} gives the scheme, network, tail penalty,
training cost, and selected step sizes of every
posterior. The tail penalty helps the SMALL networks on the near-Gaussian
posteriors and degrades the LARGE flows of the hierarchical and
multimodal ones, where it is switched off. We keep
the centered parametrizations as the models are stated: non-centering
removes the funnel for NUTS when the hierarchy is known, but no such
change exists for the ridge of the one-compartment model, and the
Gaussian-process regression is already non-centered.

\begin{table}[ht]
\centering
\caption{Settings for the real posteriors: dimension, training scheme,
network, tail penalty, training cost in target evaluations, importance
sampling effective sample size of the flow, main test function, leapfrog
steps $L$ selected for NeuTra and step size $\gamma$ of TGI-RWM, both
chosen by pilot runs. All flows are trained for $10^4$ steps with batch
size $256$, except eight schools ($3\times10^4$ steps) and the sparse
logistic regression ($3\times10^5$ steps, batch size $1024$, a LARGE
network of width $256$, marked $^*$); for the $\phi^4$ field the
multi-start pilot uses $64$ NUTS chains and $4\times10^6$ evaluations,
included in the training cost.}
\label{tab:exp-real}
\small
\begin{tabular}{@{}lclllcclcc@{}}
\toprule
Posterior & $d$ & Scheme & Network & Tail & Train.\ cost & $\max w{\cdot}N$ & Main $F$ & $L$ & $\gamma$ \\
\midrule
GARCH(1,1)         & 4  & $\schA$  & SMALL   & yes & $7.8\times10^6$ & $3$ & $\beta_1$          & 1 & 0.90 \\
One-compartment PK & 4  & $\schA$  & SMALL   & yes & $7.8\times10^6$ & $9$ & $\log K_m$         & 1 & 0.99 \\
Dugongs            & 4  & $\schB$  & SMALL   & yes & $7.4\times10^6$ & $3.5\times10^2$ & $\beta$            & 1 & 0.80 \\
Lotka--Volterra    & 8  & $\schA$  & SMALL   & yes & $7.8\times10^6$ & $42$ & $\alpha$           & 1 & 1.00 \\
GP Poisson         & 13 & $\schA$  & LARGE   & no  & $5.5\times10^6$ & $7.7\times10^3$ & $\alpha$           & 2 & 0.99 \\
Eight schools      & 10 & $\schA$  & LARGE   & no  & $1.6\times10^7$ & $6.8\times10^3$ & $\theta_1$         & 1 & 0.90 \\
Sparse logistic    & 51 & $\schB$  & LARGE$^*$ & no & $6.1\times10^8$ & $7.8\times10^4$ & $w_{24}$          & 2 & 0.80 \\
$\phi^4$ field     & 32 & $\schAa$ & LARGE   & no  & $9.5\times10^6$ & $2.9\times10^3$ & $\mathbf 1\{M>0\}$ & 1 & 1.00 \\
\bottomrule
\end{tabular}
\end{table}

\paragraph{Targets.}
\begin{itemize}
\item \emph{GARCH(1,1)}: the posteriordb model $y_t\sim\gauss(\mu,\sigma_t^2)$,
$\sigma_t^2=\alpha_0+\alpha_1(y_{t-1}-\mu)^2+\beta_1\sigma_{t-1}^2$, with
$\alpha_0>0$, $\alpha_1,\beta_1\ge0$, $\alpha_1+\beta_1<1$; test functions
$\mu,\alpha_0,\alpha_1,\beta_1$. Nearly Gaussian with a right skew in
$\alpha_0$.
\item \emph{Dugongs}: the nonlinear growth curve
$Y_i\sim\gauss(\alpha-\beta\lambda^{x_i},\tau^{-1})$ for $27$ dugongs,
$\lambda\in(0.5,1)$, priors as in posteriordb, sampled as
$\alpha,\beta,\mathrm{logit}(2\lambda-1),\log\tau$; test functions
$\alpha,\beta,\lambda,\sigma=\tau^{-1/2}$. Skewed, nearly Gaussian.
\item \emph{One-compartment PK}: the posteriordb posterior
\texttt{one\_comp\_mm\_elim\_abs}, first-order absorption with
Michaelis--Menten elimination, parameters $k_a,K_m,V_m,\sigma$ on the log
scale, ODE solved by RK4 with step $0.02$. The posterior of
$(\log K_m,\log V_m)$ is L-shaped with a ridge along which $K_m$ has no
finite variance, so the test functions are $k_a,\log K_m,\log V_m,\sigma$.
\item \emph{Lotka--Volterra}: hare and lynx pelt counts, lognormal around
the solution of $\dot u=(\alpha-\beta v)u$, $\dot v=(-\gamma+\delta u)v$,
priors as in \citet{carpenter2018predator}; four rates, two initial
populations and two noise scales; test functions are $\alpha$, $\beta$, $\gamma$, $\delta$. Every evaluation solves the ODE.
\item \emph{GP Poisson}: counts $k_i\sim\mathrm{Poisson}(e^{f(t_i)})$ at
$11$ points with a squared-exponential Gaussian-process prior on $f$,
non-centered as in posteriordb; test functions $\ell,\alpha,f_1,f_6$.
\item \emph{Eight schools}: the centered parametrization of
\citet{rubin1981estimation}, coordinates $(\mu,\log\tau,\theta_{1:8})$,
with a funnel between $\tau$ and the $\theta_j$; test functions
$\mu,\log\tau,\theta_1$.
\item \emph{Sparse logistic regression}: the German credit model of
\citet{hoffman2019neutra}, coordinates $(\log\tau,\log\lambda,\beta)$,
$25$ small funnels between the $\lambda_j$ and $\beta_j$; test functions
$\log\tau$ and the effective coefficients $w_j=\tau\lambda_j\beta_j$.
\item \emph{$\phi^4$ field}: the one-dimensional lattice field of
\citet{gabrie2022adaptive}, $N=32$ sites, $a=0.1$, $\beta=20$, bimodal
with equal weights by the symmetry $\phi\mapsto-\phi$, which no method
is allowed to exploit; test functions $\mathbf 1\{M>0\},M,M^2,U/N$.
\end{itemize}

\paragraph{Results.}

\begin{figure}[t]
\centering
\includegraphics[width=\textwidth]{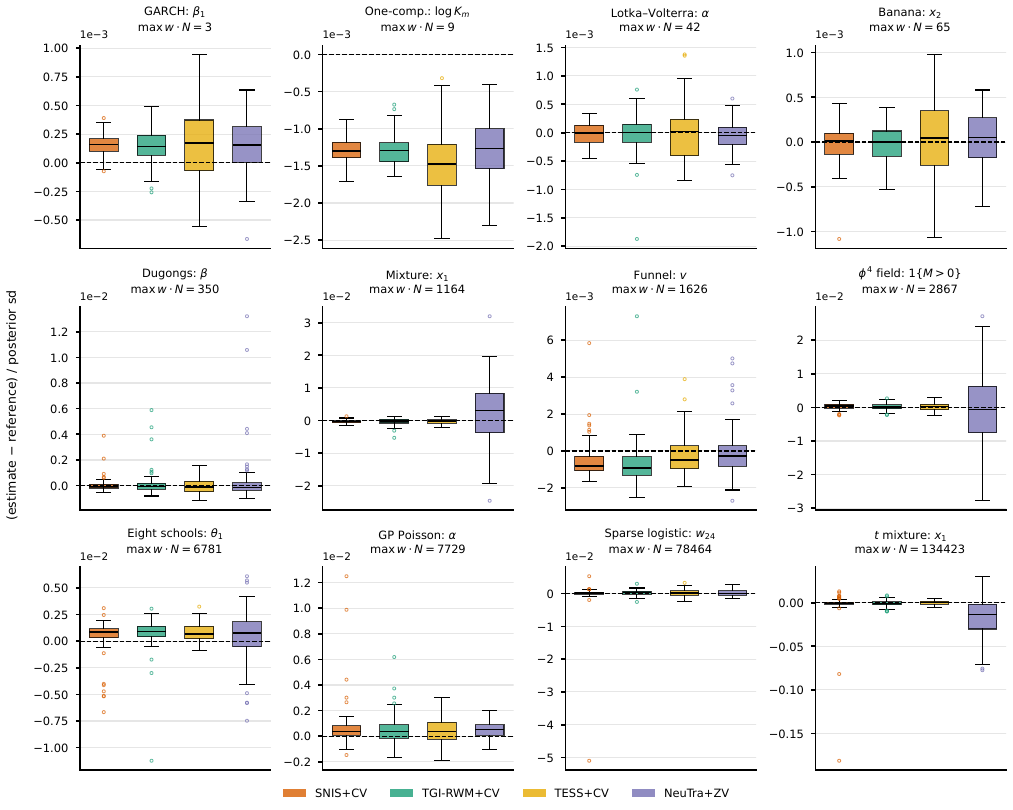}
\caption{The $100$ estimates of the main test function of every target,
in posterior standard deviations from the reference value (dashed line).
Panels are ordered by $\max w\cdot N$; circles are estimates beyond
$1.5$ interquartile ranges from the box.}
\label{fig:boxplots}
\end{figure}

Table~\ref{tab:wallclock-full} extends Table~\ref{tab:wallclock} to every
test function, and the picture of Section~\ref{sec:experiments} holds
function by function. On the targets with light weights SNIS+CV is the
most efficient on nearly every function, and the control variates gain a
factor of up to about $160$; on the targets with heavy weights the gain
falls to a few, the gradient-based chains lead on most functions of the
Gaussian-process and sparse regressions, and TESS+CV leads on the $t$
mixture. With the fixed coefficients $\theta=(1,-1)$ the control
variates can also hurt on a coarse flow, as for TGI-MALA on dugongs.
At equal numbers of target evaluations, NUTS with zero-variance control
variates is less accurate than the best flow-based
estimator on every real posterior and cannot estimate the mode weight
of the $\phi^4$ field (Table~\ref{tab:nuts}).

Figure~\ref{fig:boxplots} shows the $100$ estimates behind the main test
function of every target. On the targets with light weights the four
estimators are unbiased and their spread follows the precision in the
table. As $\max w\cdot N$ grows, SNIS+CV produces isolated estimates far
from the reference, on the sparse regression and the $t$ mixture, while
the chains do not. NeuTra+ZV spreads widely on the three multimodal
targets and is biased on the $t$ mixture, where its chains rarely move
between the modes. On the one-compartment model all four estimators
share an offset of about $10^{-3}$ posterior standard deviations, of the
size of the Monte Carlo error of the reference value, which is therefore
the error of the reference rather than a bias of the estimators.

\begin{table}[t]
\centering
\small
\caption{RMSE in posterior standard deviations of the main test function
at $B=10^6$ target evaluations, without the training of the flow, over
$100$ estimates for the flow-based estimators and $50$ for NUTS. NUTS+ZV
was more accurate than plain NUTS on every posterior where both were
run. $\dagger$: extrapolated from the reference run as its Monte Carlo
standard error times $\sqrt{C_{\rm ref}/B}$; $\ddagger$: NUTS stays in
one mode. Entries within $10\%$ of the best in a row are in bold.}
\label{tab:nuts}
\begin{tabular}{@{}ll ccccc@{}}
\toprule
Posterior & $F$ & NUTS+ZV & SNIS+CV & TGI-RWM+CV & TESS+CV & NeuTra+ZV \\
\midrule
GARCH & $\beta_1$ & 2.1e-3 & \textbf{1.9e-4} & \textbf{2.0e-4} & 3.5e-4 & 2.9e-4 \\
One-compartment & $\log K_m$ & 1.3e-2 & \textbf{1.3e-3} & \textbf{1.3e-3} & 1.5e-3 & \textbf{1.3e-3} \\
Lotka--Volterra & $\alpha$ & 1.6e-3 & \textbf{1.9e-4} & 3.1e-4 & 4.7e-4 & 2.7e-4 \\
Dugongs & $\beta$ & 5.0e-3 & \textbf{5.0e-4} & 8.5e-4 & 5.8e-4 & 2.0e-3 \\
GP Poisson & $\alpha$ & 1.5e-2 & 1.8e-3 & 1.1e-3 & 1.0e-3 & \textbf{7.5e-4} \\
Eight schools & $\theta_1$ & 1.4e-2 & 1.6e-3 & 1.4e-3 & \textbf{1.2e-3} & 2.1e-3 \\
Sparse logistic & $w_{24}$ & 1.6e-2$^\dagger$ & 5.2e-3 & \textbf{8.4e-4} & 1.1e-3 & \textbf{8.2e-4} \\
$\phi^4$ field & $\mathbf 1\{M>0\}$ & --$^\ddagger$ & \textbf{8.8e-4} & 1.1e-3 & 1.1e-3 & 9.2e-3 \\
\bottomrule
\end{tabular}
\end{table}

\begin{table}[p]
\centering
\footnotesize
\setlength{\tabcolsep}{3pt}
\renewcommand{\arraystretch}{0.93}
\caption{Precision per second on every test function, extending
Table~\ref{tab:wallclock}; $\star$ marks the main test function. For the
sparse logistic regression we give $\log\tau$, $w_1$ and the median,
maximum and minimum over the $25$ coefficients $w_j$. Entries within
$10\%$ of the best in a row are in bold. $\dagger$: the Monte Carlo
error of the reference value accounts for at least half of the mean
squared error; $\ddagger$: the errors of this target are limited by a
bias of about $0.01$ posterior standard deviations in $\log\tau$ shared
by the flow-based estimators; --: RMSE above one posterior standard
deviation.}
\label{tab:wallclock-full}
\begin{tabular}{@{}llrrrrrrrrrrrr@{}}
\toprule
 & & \multicolumn{2}{c}{SNIS} & \multicolumn{2}{c}{TGI-RWM} & \multicolumn{2}{c}{TGI-RWM$^\delta$} & \multicolumn{2}{c}{TGI-MALA} & \multicolumn{2}{c}{TESS} & \multicolumn{2}{c}{NeuTra} \\
\cmidrule(lr){3-4}\cmidrule(lr){5-6}\cmidrule(lr){7-8}\cmidrule(lr){9-10}\cmidrule(lr){11-12}\cmidrule(lr){13-14}
Target & $F$ & plain & +CV & plain & +CV & plain & +CV & plain & +CV & plain & +CV & plain & +ZV \\
\midrule
GARCH & $\beta_1$\,$\star$ & 3.3e6 & \textbf{9.7e7$^\dagger$} & 2.3e6 & 8.4e7$^\dagger$ & 1.6e6 & 6.3e7$^\dagger$ & 3.4e5 & 2.2e7 & 2.3e6 & 2.4e7 & 4.6e5 & 5.0e7 \\
 & $\mu$ & 3.3e6 & \textbf{5.3e8} & 2.3e6 & 2.6e8 & 1.6e6 & 1.8e8 & 3.4e5 & 3.6e7 & 2.5e6 & 3.6e7 & 4.4e5 & \textbf{5.7e8} \\
 & $\alpha_0$ & 3.1e6 & \textbf{3.2e7$^\dagger$} & 2.2e6 & \textbf{3.2e7$^\dagger$} & 1.5e6 & 2.6e7$^\dagger$ & 3.5e5 & 1.4e7 & 2.1e6 & 1.7e7 & 4.7e5 & \textbf{3.3e7} \\
 & $\alpha_1$ & 3.4e6 & \textbf{4.8e8$^\dagger$} & 2.3e6 & 2.4e8$^\dagger$ & 1.6e6 & 1.5e8$^\dagger$ & 3.4e5 & 2.8e7 & 2.3e6 & 3.2e7 & 4.6e5 & 8.1e7 \\
\addlinespace[1.5pt]
One-comp. & $\log K_m$\,$\star$ & 5.7e5 & \textbf{8.9e5} & 4.0e5 & 6.7e5 & 3.9e5 & 6.4e5 & 4.0e4 & 4.6e5 & 2.9e5 & 4.7e5 & 9.9e4 & 5.8e5 \\
 & $k_a$ & 1.4e6 & \textbf{2.5e7} & 8.8e5 & 1.8e7 & 8.9e5 & 1.5e7 & 4.0e4 & 5.0e6 & 8.0e5 & 6.3e6 & 1.1e5 & 1.3e7 \\
 & $\log V_m$ & \textbf{1.1e5} & \textbf{1.2e5} & 8.5e4 & 9.1e4 & 8.5e4 & 9.1e4 & 3.0e4 & 6.9e4 & 7.0e4 & 8.1e4 & 5.7e4 & 8.1e4 \\
 & $\sigma$ & 1.4e6 & \textbf{9.6e6} & 8.9e5 & 7.4e6 & 7.9e5 & 7.3e6 & 4.0e4 & 3.5e6 & 7.7e5 & 4.5e6 & 1.1e5 & 6.5e6 \\
\addlinespace[1.5pt]
Lotka--Volterra & $\alpha$\,$\star$ & 3.7e6 & \textbf{1.0e8} & 1.6e6 & 2.3e7 & 1.5e6 & 3.3e7 & 9.8e4 & 5.6e6 & 1.4e6 & 8.9e6 & 1.2e5 & 2.1e7 \\
 & $\beta$ & 3.7e6 & \textbf{7.7e7} & 1.7e6 & 2.2e7 & 1.5e6 & 2.8e7 & 1.0e5 & 5.4e6 & 1.3e6 & 7.9e6 & 1.2e5 & 1.6e7 \\
 & $\gamma$ & 3.6e6 & \textbf{9.0e7} & 1.5e6 & 9.8e6 & 1.6e6 & 3.0e7 & 9.9e4 & 4.9e6 & 1.4e6 & 8.7e6 & 1.2e5 & 1.9e7 \\
 & $\delta$ & 3.7e6 & \textbf{8.7e7} & 1.5e6 & 1.0e7 & 1.5e6 & 2.8e7 & 9.9e4 & 5.2e6 & 1.4e6 & 8.6e6 & 1.2e5 & 2.0e7 \\
\addlinespace[1.5pt]
Banana & $x_2$\,$\star$ & 1.6e6 & \textbf{4.1e7} & 9.8e5 & 3.4e7 & 9.8e5 & 3.4e7 & 9.4e4 & 1.1e7 & 4.7e5 & 5.9e6 & 1.7e5 & 1.7e7 \\
 & $x_2^2$ & 8.7e5 & 1.8e6 & 9.0e5 & \textbf{4.7e6} & 8.8e5 & \textbf{4.5e6} & 1.6e5 & 1.9e6 & 4.1e5 & 1.7e6 & 2.8e5 & 6.1e5 \\
 & $x_1^2$ & 1.6e6 & \textbf{3.0e7} & 1.0e6 & 2.6e7 & 1.0e6 & 2.6e7 & 1.3e5 & 7.7e6 & 3.4e5 & 4.3e6 & 2.3e5 & 1.1e7 \\
 & $\tfrac14\sum_k x_{2k}$ & 1.6e6 & \textbf{7.5e7} & 9.6e5 & 5.9e7 & 9.5e5 & 5.8e7 & 7.9e4 & 1.9e7 & 6.3e5 & 9.1e6 & 1.4e5 & 3.4e7 \\
 & $\bar x$ & 1.6e6 & \textbf{1.1e8} & 9.2e5 & 7.6e7 & 9.2e5 & 7.3e7 & 7.0e4 & 1.6e7 & 8.2e5 & 1.2e7 & 1.3e5 & 4.9e7 \\
\addlinespace[1.5pt]
Dugongs & $\beta$\,$\star$ & 3.9e6 & \textbf{2.0e7} & 1.6e6 & 5.4e6 & 1.9e6 & 1.4e7 & 7.8e4 & 5.1e4 & 1.9e6 & 9.2e6 & 4.1e5 & 1.3e6 \\
 & $\alpha$ & 3.5e6 & \textbf{9.5e6} & 1.3e6 & 3.1e6 & 1.7e6 & 7.6e6 & 1.1e5 & 1.8e5 & 1.7e6 & 6.6e6 & 2.9e5 & 9.7e5 \\
 & $\lambda$ & 4.4e6 & \textbf{2.8e7} & 2.0e6 & 1.5e7 & 2.0e6 & 2.1e7 & 5.1e5 & 2.7e4 & 1.8e6 & 8.5e6 & 5.5e5 & 4.5e6 \\
 & $\sigma$ & 4.7e6 & \textbf{9.3e7} & 2.1e6 & 5.0e7 & 2.1e6 & 7.1e7 & 2.1e5 & 8.7e5 & 2.2e6 & 1.9e7 & 5.5e5 & 5.2e6 \\
\addlinespace[1.5pt]
Mixture & $x_1$\,$\star$ & 1.2e6 & \textbf{3.9e6} & 6.3e5 & 1.5e6 & 7.3e5 & 2.4e6 & 1.3e4 & -- & 6.6e5 & 1.6e6 & 8.5e3 & 1.6e4 \\
 & $x_1^2$ & 1.3e6 & \textbf{4.5e6} & 7.1e5 & 1.8e6 & 8.0e5 & 2.4e6 & 1.3e4 & -- & 2.8e5 & 1.0e6 & 1.4e4 & 2.2e4 \\
 & $\mathbf 1\{x_1>0\}$ & 1.2e6 & \textbf{3.3e6} & 7.0e5 & 1.6e6 & 7.3e5 & 1.9e6 & 1.5e4 & -- & 6.7e5 & 1.4e6 & 1.0e4 & 1.3e4 \\
 & $\bar x$ & 1.3e6 & \textbf{7.4e6} & 6.6e5 & 2.3e6 & 7.1e5 & 3.8e6 & 1.2e4 & -- & 6.7e5 & 2.1e6 & 8.0e3 & 4.7e4 \\
\addlinespace[1.5pt]
Funnel & $v$\,$\star$ & 7.2e5 & \textbf{1.4e6} & 3.9e5 & 6.6e5 & 2.8e5 & 4.1e5 & 3.0e4 & \textbf{1.4e6} & 5.4e5 & 1.1e6 & 1.4e5 & 8.5e5 \\
 & $\overline{\log(1+x_i^2)}$ & 3.5e5 & 4.5e5 & 1.7e5 & 2.1e5 & 1.1e5 & 1.3e5 & 3.2e4 & \textbf{5.8e5} & 2.7e5 & 4.1e5 & 1.0e5 & 2.8e5 \\
 & $v^2$ & 2.0e5 & 2.3e5 & 9.9e4 & 1.1e5 & 5.9e4 & 6.4e4 & 4.7e4 & \textbf{3.4e5} & 1.3e5 & 2.1e5 & 7.0e4 & 1.5e5 \\
 & $x_1$ & 3.1e5 & \textbf{3.8e5} & 3.8e4 & 4.0e4 & 2.1e4 & 2.2e4 & 4.2e4 & 2.9e5 & 1.9e5 & 2.9e5 & 2.2e4 & 8.1e4 \\
\addlinespace[1.5pt]
$\phi^4$ field & $\mathbf 1\{M>0\}$\,$\star$ & 7.1e5 & \textbf{1.4e6} & 4.5e5 & 8.4e5 & 4.5e5 & 8.7e5 & 9.5e3 & -- & 4.4e5 & 8.0e5 & 4.1e3 & 1.0e4 \\
 & $M$ & 7.5e5 & \textbf{1.6e6} & 4.8e5 & 9.4e5 & 4.7e5 & 9.4e5 & 1.0e4 & -- & 4.5e5 & 8.2e5 & 4.2e3 & 1.0e4 \\
 & $M^2$ & 4.1e5 & \textbf{5.3e5} & 2.0e5 & 2.6e5 & 2.8e5 & 3.8e5 & 7.4e3 & -- & 1.7e5 & 3.1e5 & 3.8e3 & 1.0e4 \\
 & $U/N$ & 3.4e5 & \textbf{4.3e5} & 1.7e5 & 2.1e5 & 2.0e5 & 3.1e5 & 6.4e3 & -- & 1.4e5 & 3.2e5 & 1.1e3 & 1.5e4 \\
\addlinespace[1.5pt]
Eight schools & $\theta_1$\,$\star$ & 4.4e5 & 6.2e5 & 3.8e5 & 6.4e5 & 3.9e5 & 7.3e5 & 6.7e4 & \textbf{1.1e6} & 4.7e5 & 8.7e5 & 7.6e4 & 3.4e5 \\
 & $\mu$ & 2.7e5 & 3.4e5 & 2.9e5 & 4.5e5 & 2.7e5 & 4.0e5 & 5.7e4 & 7.5e5 & 5.0e5 & \textbf{9.8e5} & 5.2e4 & 1.6e5 \\
 & $\log\tau$$^\ddagger$ & 9.8e3 & 9.7e3 & 7.2e3 & 7.3e3 & 7.3e3 & 7.4e3 & 1.1e4 & \textbf{1.2e4} & 1.1e4 & 1.1e4 & 9.1e3 & \textbf{1.3e4} \\
\addlinespace[1.5pt]
GP Poisson & $\alpha$\,$\star$ & 3.2e5 & 4.2e5 & 4.6e5 & 1.0e6 & 4.5e5 & 8.9e5 & 5.2e4 & 1.1e6 & 4.9e5 & 1.1e6 & 1.1e5 & \textbf{2.4e6} \\
 & $\ell$ & 1.1e5 & 1.3e5 & 3.0e5 & 4.2e5 & 2.2e5 & 3.0e5 & 4.8e4 & 4.8e5 & 3.6e5 & \textbf{5.7e5} & 9.9e4 & \textbf{5.9e5} \\
 & $f_1$ & 5.9e5 & 1.1e6 & 6.8e5 & 3.1e6 & 6.9e5 & 3.9e6 & 5.6e4 & 3.7e6 & 6.2e5 & 2.1e6 & 1.3e5 & \textbf{2.2e7} \\
 & $f_6$ & 2.2e5 & 2.6e5 & 6.3e5 & 2.4e6 & 5.6e5 & 1.6e6 & 5.6e4 & 1.9e6 & 5.8e5 & 1.6e6 & 1.3e5 & \textbf{6.2e6} \\
\addlinespace[1.5pt]
Sparse logistic & $w_{24}$\,$\star$ & 1.1e4 & 1.1e4 & 1.2e5 & \textbf{4.5e5} & 1.1e5 & 3.0e5 & 1.6e4 & 2.1e5 & 7.6e4 & 2.2e5 & 1.4e4 & \textbf{4.2e5} \\
 & $\log\tau$ & 5.4e4 & 6.4e4 & 1.1e5 & 4.1e5 & 9.8e4 & 3.5e5 & 1.3e4 & 2.0e5 & 1.2e5 & 2.6e5 & 1.1e4 & \textbf{7.9e5} \\
 & $w_{1}$ & 2.1e5 & 6.6e5 & 1.3e5 & 6.5e5 & 1.2e5 & 4.5e5 & 1.3e4 & 2.4e5 & 1.3e5 & 3.3e5 & 1.2e4 & \textbf{5.2e6} \\
 & median over $w_j$ & 1.4e4 & 1.4e4 & 1.2e5 & \textbf{4.6e5} & 1.0e5 & 3.0e5 & 1.3e4 & 2.1e5 & 1.1e5 & 2.7e5 & 1.2e4 & \textbf{4.2e5} \\
 & max over $w_j$ & 2.3e5 & 9.6e5 & 1.4e5 & 7.3e5 & 1.3e5 & 5.1e5 & 1.6e4 & 2.8e5 & 1.3e5 & 3.3e5 & 1.4e4 & \textbf{5.2e6} \\
 & min over $w_j$ & 4.9e2 & 4.7e2 & 5.7e4 & 9.5e4 & 5.2e4 & 7.5e4 & 1.1e4 & 5.8e4 & 6.7e4 & 1.0e5 & 1.1e4 & \textbf{1.8e5} \\
\addlinespace[1.5pt]
$t$ mixture & $x_1$\,$\star$ & 4.1e3 & 4.1e3 & 9.0e4 & 9.8e4 & 1.0e5 & 1.2e5 & 7.5e3 & -- & 1.8e5 & \textbf{2.2e5} & 6.0e2 & 2.2e3 \\
 & $x_1^2$ & 6.8e3 & 6.7e3 & 1.2e4 & 1.2e4 & 1.3e4 & 1.3e4 & 3.0e3 & -- & \textbf{2.7e4} & \textbf{2.7e4} & 3.9e2 & 1.3e3 \\
 & $\mathbf 1\{x_1>0\}$ & 5.8e3 & 5.7e3 & 1.8e5 & 2.2e5 & 2.0e5 & 2.5e5 & 1.1e4 & -- & 2.8e5 & \textbf{4.0e5} & 1.2e3 & 3.3e3 \\
 & $\bar x$ & 3.3e3 & 3.2e3 & 4.1e4 & 4.3e4 & 4.6e4 & 4.9e4 & 4.7e3 & -- & 1.1e5 & \textbf{1.2e5} & 4.1e2 & 1.7e3 \\
\bottomrule
\end{tabular}
\end{table}

\paragraph{Stability.}

\begin{table*}[t]
\centering
\small
\caption{Stability of the estimators: the gap $\log_{10}(\text{precision}/\text{best of the row})$ of each method with its control variates to the most precise one, over all test functions of Table~\ref{tab:wallclock-full} ($0$: the best; $-1$: ten times less precise). Mean gap, the corresponding factor $10^{-\text{mean}}$, standard deviation and worst row; number of rows within a factor $2$ of the best and within $10\%$ of it (the bold rule); failures (RMSE above one posterior standard deviation) are counted at RMSE $=1$. (A) The main test function of each target. (B) All 70 test functions; the mean and standard deviation are over the $12$ per-target means, so that every target has the same weight, the worst row and the counts over all rows.}
\label{tab:stability}
\begin{tabular}{@{}l rrr l rrr@{}}
\toprule
Method & mean gap & factor & sd & worst row & within $2\times$ & best & failed \\
\midrule
\multicolumn{8}{@{}l}{\emph{(A) main test function, $12$ functions}} \\
SNIS & $-0.36$ & 2.3 & 0.65 & $-1.74$ ($t$ mixture, $x_1$) & \textbf{9/12} & \textbf{8/12} & 0 \\
TGI-RWM & {\boldmath$-0.28$} & \textbf{1.9} & \textbf{0.20} & {\boldmath$-0.65$} (Lotka--Volterra, $\alpha$) & 6/12 & 1/12 & 0 \\
TGI-MALA & $-1.98$ & 96 & 2.49 & $-6.46$ (Mixture, $x_1$) & 3/12 & 2/12 & 3 \\
TESS & $-0.39$ & 2.4 & 0.31 & $-1.06$ (Lotka--Volterra, $\alpha$) & 5/12 & 1/12 & 0 \\
NeuTra & $-0.83$ & 6.8 & 0.87 & $-2.40$ (Mixture, $x_1$) & 5/12 & 2/12 & 0 \\
\midrule
\multicolumn{8}{@{}l}{\emph{(B) all test functions, $70$ functions, targets weighted equally}} \\
SNIS & $-0.36$ & 2.3 & 0.58 & $-2.63$ (Sparse logistic, $w_{6}$) & 37/70 & \textbf{31/70} & 0 \\
TGI-RWM & {\boldmath$-0.33$} & \textbf{2.2} & \textbf{0.20} & {\boldmath$-0.98$} (Funnel, $x_1$) & \textbf{41/70} & 16/70 & 0 \\
TGI-MALA & $-1.99$ & 97 & 2.40 & $-6.73$ (Mixture, $\bar x$) & 12/70 & 4/70 & 12 \\
TESS & $-0.43$ & 2.7 & 0.32 & $-1.20$ (GARCH, $\mu$) & 30/70 & 6/70 & 0 \\
NeuTra & $-0.80$ & 6.3 & 0.79 & $-2.40$ (Mixture, $\mathbb 1\{x_1>0\}$) & 37/70 & 21/70 & 0 \\
\bottomrule
\end{tabular}
\end{table*}

To see how far each estimator can fall behind, we measure, for every
row of Table~\ref{tab:wallclock-full}, its gap
$\log_{10}(\text{precision}/\text{best of the row})$ to the most precise
method, and summarize the gaps over the main test functions and over
all test functions with every target weighted equally
(Table~\ref{tab:stability}). SNIS+CV is the best method most often, but
its gaps are the most uneven: it falls behind by a factor of about $50$
on the main function of the $t$ mixture and over $400$ on a coefficient
of the sparse regression, the targets where its weights are heaviest.
TGI-RWM+CV is rarely the best, yet it has the smallest mean gap, about a
factor of $2$, the smallest spread, and the smallest worst case, a factor
of $4.5$ on the main functions and below $10$ on all functions; it is
within a factor of $2$ of the best on more test functions than any other
method. TESS+CV is nearly as stable. The gradient-based chains are the
least predictable, ahead on a few posteriors and two orders of magnitude
behind on the multimodal targets, where TGI-MALA fails outright. When
the diagnostics of Section~\ref{sec:diagnostics} are inconclusive,
TGI-RWM with the control variates is therefore the safe default.

\newpage
\subsection{The independence limit}
\label{app:indep-limit}

At $\gamma=1$, TGI-RWM proposes $z'\sim\gauss(0,I)$ independently of the
current state and is the independence sampler with the transport as
proposal, and the control variates reduce to $\eta_\kappa=\hat c_\kappa$.
Table~\ref{tab:indep-limit} compares the step $\gamma_J$ selected by $J$
with $\gamma=1$ and with the best step $\gamma^\star$ on a grid of $25$
values, on the analytic families, with the same transport and the same
control variates; we write $\mathrm{R}(\gamma,\gamma')$ for the ratio of
ESS$\times$VRF of the main test function at step $\gamma$ to that at step
$\gamma'$. For TGI-RWM on the accurate members, $J$ selects $\gamma$
between $0.95$ and $1$, and $\gamma=1$ is as good as the best step. On
the coarse members the best step is well below $1$, at $0.60$ and $0.45$
on the banana, $0.10$ on the funnel and $0.25$ on the mixture. $J$ moves
part of the way only on the banana, to $0.91$ and $0.48$, where
$\mathrm{R}(1,\gamma_J)$ is $0.88$ and $0.62$; on the funnel and the
mixture it stays at $0.99$, since the speed measure on which $J$ is built
favors the large step at acceptance rates of $0.6$ to $0.7$. Across the
coarse members $\mathrm{R}(\gamma_J,\gamma^\star)$ is $0.44$ to $0.87$,
so $J$ is a proxy that errs towards the independence limit. For
TGI-MALA, $J$ selects $\gamma$ well below $1$ on every member but the
accurate mixture, with ESS$\times$VRF within $10\%$ of that at the best
step except on the Laplace map, and $\mathrm{R}(1,\gamma_J)$ is $0.01$
to $0.6$, as the drift misdirects the large steps and the acceptance rate
drops. On coarser transports the gain lies at $\gamma<1$; $J$ finds it
for TGI-MALA and, on the banana, for TGI-RWM, whereas on the coarse
funnel and mixture a grid search gains a further factor of up to $2.3$.

\begin{table}[ht]
\centering
\small
\caption{The step $\gamma_J$ selected by $J$ against $\gamma=1$ and against
the best step $\gamma^\star$ on a grid of $25$ values, on the analytic
families: acceptance rate at $\gamma_J$ and at $1$, and the ratios
$\mathrm{R}(\gamma,\gamma')$ of ESS$\times$VRF of the main test function at
$\gamma$ to that at $\gamma'$, from $500$ chains of $5000$ steps with common
random numbers.}
\label{tab:indep-limit}
\setlength{\tabcolsep}{3pt}
\begin{tabular}{@{}llr ccccc ccccc@{}}
\toprule
 & & & \multicolumn{5}{c}{TGI-RWM} & \multicolumn{5}{c}{TGI-MALA} \\
\cmidrule(lr){4-8}\cmidrule(lr){9-13}
Target & Member & $\Fisher$ & $\gamma_J$ & $\gamma^\star$ & acc.\ $\gamma_J$/$1$
 & $\mathrm{R}(1,\gamma_J)$ & $\mathrm{R}(\gamma_J,\gamma^\star)$
 & $\gamma_J$ & $\gamma^\star$ & acc.\ $\gamma_J$/$1$
 & $\mathrm{R}(1,\gamma_J)$ & $\mathrm{R}(\gamma_J,\gamma^\star)$ \\
\midrule
Banana  & $c=0.01$ & $7\times10^{-4}$ & 0.95 & 1.00 & 0.99/0.99 & 1.01 & 0.99 & 0.57 & 0.50 & 1.00/0.99 & 0.60 & 0.99 \\
        & $c=0.3$  & $0.63$           & 0.91 & 0.60 & 0.81/0.81 & 0.88 & 0.87 & 0.37 & 0.35 & 0.92/0.69 & 0.01 & 1.00 \\
        & $c=1$    & $7.0$            & 0.48 & 0.45 & 0.60/0.55 & 0.62 & 0.82 & 0.14 & 0.07 & 0.82/0.28 & 0.12 & 0.40 \\
Funnel  & $\varrho=0.01$ & $1.3\times10^{-3}$ & 0.99 & 0.85 & 0.99/0.99 & 1.00 & 1.00 & 0.40 & 0.30 & 0.99/0.98 & 0.22 & 0.90 \\
        & $\varrho=0.3$  & $1.2$            & 0.99 & 0.10 & 0.70/0.70 & 1.00 & 0.44 & 0.22 & 0.30 & 0.94/0.45 & 0.08 & 0.99 \\
Mixture & $s=0.995$ & $10^{-3}$       & 1.00 & 0.75 & 0.99/0.99 & 1.00 & 1.00 & 0.95 & 1.00 & 0.99/0.99 & 1.06 & 0.94 \\
        & $s=0.854$ & $1.0$          & 0.99 & 0.25 & 0.63/0.63 & 1.00 & 0.84 & 0.57 & 0.50 & 0.90/0.72 & 0.42 & 0.99 \\
\bottomrule
\end{tabular}
\end{table}

\end{document}